\documentclass[12pt]{report} 

\usepackage{amssymb}
\usepackage{graphicx}
\usepackage{multicol}
\usepackage{amsmath}
\usepackage{ragged2e}
\usepackage{subfig}
\usepackage{booktabs}
\usepackage[most]{tcolorbox}
\usepackage{mathtools}
\usepackage[sort&compress]{natbib}
\usepackage{hyperref}
\usepackage{amsthm}
\usepackage{multirow}
\usepackage[most]{tcolorbox}
\usepackage[printonlyused]{acronym} 
\usepackage[export]{adjustbox}
\usepackage{caption}
\usepackage{subcaption}
\captionsetup{compatibility=false}
\usepackage{graphbox}   
\usepackage{lmodern}
\usepackage{color,soul}
\usepackage{wrapfig}
\usepackage{epsfig}

\usepackage{tikz}
\usetikzlibrary{shapes.geometric, arrows, positioning, fit, calc}
\newtheorem{theorem}{Theorem}
\newtheorem{lemma}{Lemma}
\newtheorem{definition}{Definition}
\usepackage[ruled,linesnumbered]{algorithm2e}
\setlength{\topmargin}{0.0in}
\setlength{\oddsidemargin}{0.33in}
\setlength{\evensidemargin}{-0.08in}
\setlength{\textheight}{8.5in}
\setlength{\textwidth}{6.0in}

\def\mytitle{Fast \& Efficient Normalizing Flows and Applications of Image Generative Models}
\def\myname{Sandeep Nagar}
\def\mydegree{Doctor of Philosophy}
\def\mysupervisor{Dr. Girish Varma}
\def\myrollno{2018701015}
\def\myemailid{sandeep.nagar@research.iiit.ac.in}

\begin{document}
 \baselineskip=18pt plus1pt
 \setcounter{secnumdepth}{3}
 \setcounter{tocdepth}{3}
 \pagenumbering{roman}
 
 \thispagestyle{empty}
\begin{center}
    { \huge {\bfseries {\mytitle}} \par}
\vspace{1.5\baselineskip}
    {\textit{Thesis to be submitted in partial fulfillment of the}\\
    \textit{requirements for the degree}}\par
\vspace{\baselineskip}
    {\textit{of} \par}
\vspace{\baselineskip}
    {\large \bf \mydegree \par} 
    {\textit{in} \par}
    {\textit{Computer Science and Engineering} \par}
\vspace{\baselineskip}
    {\textit{by} \par}
\vspace{\baselineskip}
    {{\large {\bf \myname \\ \myrollno \\ \myemailid}} \par}
\vspace{1.5\baselineskip}
    {Under the guidance of \par}
\vspace{\baselineskip}
    {{\large \bf \mysupervisor} \par}
\vspace{.5\baselineskip}
    {\begin{figure}[!h] 
	\centering
	\includegraphics[width=25mm]{./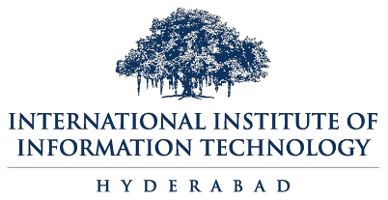} 
     \end{figure}
    }
    {\bf International Institute of Information Technology \par (Deemed to be University)
    \par Hyderabad - 500032, INDIA
    \par November 2025}
 \end{center}
\thispagestyle{empty}

\vspace*{\fill}
\centering {{\large Copyright \copyright~~Sandeep Nagar, 2025\\}{\large All Rights Reserved\\}}
\vspace*{\fill}
 \thispagestyle{plain}

 \includegraphics[width=60mm]{Images/iiit-new.png}
 

\vspace{0.5\baselineskip}
\vspace{1\baselineskip}

\begin{center}
{\Large {\bf \uppercase{Certificate}}}
\end{center}

\vspace{\baselineskip}

\justifying

\noindent It is certified that the work contained in this PhD thesis titled "{\bf \mytitle}", submitted by {\bf \myname} (Roll Number: {\textit{\myrollno}}) has been carried out under our supervision and is not submitted elsewhere for a degree.

\vspace*{6cm}
\begin{tabular}{cc}
\underline{\makebox[1in]{}} & \hspace*{5cm} \underline{\makebox[2.5in]{ \includegraphics[height=3cm]{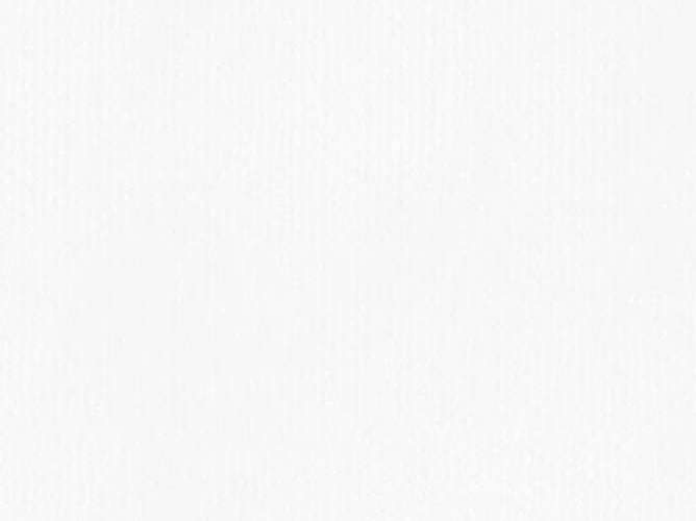}}} \\
Date & \hspace*{5cm} Adviser: Dr. Girish Varma \\
& \hspace*{5cm} Assistant Prof. IIIT Hyderabad
\end{tabular}

 
\thispagestyle{empty}

\vspace*{\fill}
\centering {\large To my parents and family.}
\vspace*{\fill}
 \thispagestyle{plain}

\begin{center}
 \Large {\bf \uppercase{Acknowledgment}}
\end{center}

\vspace{\baselineskip}
\justifying
\noindent I am deeply grateful to my supervisor, \textbf{Dr. Girish Varma}, for his guidance, invaluable insights, and unwavering support throughout this doctoral journey. His mentorship has been instrumental in shaping the direction of this research and fostering my academic growth. His contributions have greatly enriched the quality of this thesis and my overall research experience.

I am grateful to \textbf{ML Lab and C-STAR Lab, IIIT Hyderabad} for providing the essential resources and academic environment crucial for the successful completion of this research. I am deeply thankful to \textbf{Dr. Dendi Sathya Veera Reddy}, Chief Engineer, for his feedback and assistance during my research internship at Samsung Research Bangalore (SRI-B).

I express my sincere gratitude to \textbf{Prof. Narendra Ahuja} (UIUC) and \textbf{Dr. Rohitash Chandra} (UNSW) for their invaluable support and understanding during my research internship. Their guidance and encouragement were instrumental in laying a strong foundation for my work.

Special thanks to my friends \textbf{Mukesh, Rishabh, Habeeba, Vatan}, and colleagues at IIIT Hyderabad for their unwavering support, camaraderie, and encouragement throughout this journey.

I am profoundly thankful to \textbf{my family} for their unwavering love, encouragement, and understanding. Their belief in me has been a constant source of motivation and strength.

\noindent
\vspace{\baselineskip} \\
\textbf{\myname} \\
 \thispagestyle{plain}
\begin{center}
    \Large \textbf{\uppercase{Abstract}}
\end{center}

\justifying
\noindent

This thesis presents novel contributions in two primary areas: advancing the efficiency of generative models, particularly normalizing flows, and applying generative models to solve real-world computer vision challenges. In the first part, we introduce significant improvements to normalizing flow architectures through six key innovations: (1) Development of invertible $3\times3$ Convolution layers with mathematically proven necessary and sufficient conditions for invertibility, (2) introduction of a more efficient Quad-coupling layer, (3) Design of a fast and efficient parallel inversion algorithm for k×k convolutional layers, (4) Fast \& efficient backpropagation algorithm for inverse of convolution, (5) Using inverse of convolution, in Inverse-Flow, for the forward pass and training it using proposed backpropagation algorithm, and (6) Affine-StableSR, a compact and efficient super-resolution model that leverages pre-trained weights and Normalizing Flow layers to reduce parameter count while maintaining performance. These advances maintain model expressiveness while substantially improving computational efficiency compared to existing approaches.

The second part demonstrates the practical applications of generative modeling advances across diverse computer vision tasks. This thesis develops: (1) An automated quality assessment system for agricultural produce using Conditional GANs to address class imbalance, data scarcity and annotation challenges, achieving good accuracy in seed purity testing; (2) An unsupervised geological mapping framework utilizing stacked autoencoders for dimensionality reduction, showing improved feature extraction compared to conventional methods; (3) We proposed a privacy preserving method for autonomous driving datasets using on face detection and image inpainting; (4) Utilizing Stable Diffusion based image inpainting for replacing the detected face and license plate to advancing privacy-preserving techniques and ethical considerations in the field.; and (5) An adapted diffusion model for art restoration that effectively handles multiple types of degradation through unified fine-tuning.

This thesis advances the theoretical understanding of generative models and their practical applications, demonstrating significant improvements in efficiency, scalability, and real-world utility across multiple domains.  

\vspace{\baselineskip}

\noindent
\textbf{Keywords}: Normalizing Flow, Invertible Convolution, Generative Models, Applications of Image Generative Models

 \tableofcontents
 \listoffigures
 \listoftables

 \clearpage
 \pagenumbering{arabic}
 
 \justifying
\part{Introduction}
 \chapter{Introduction}
Recent advances in generative modeling have led to various powerful models, each offering distinct advantages for synthesizing high-quality data. Normalizing Flow (NF) models, in particular, have gained attention for their ability to provide exact likelihood estimation and invertible transformations, making them highly interpretable and capable of learning complex data distributions. Unlike Generative Adversarial Networks (GANs) \cite{gan_high, gans}, which rely on adversarial training and are often difficult to stabilize, NF models use a series of invertible transformations to map data to a latent space, enabling direct maximum likelihood training. Similarly, Variational Autoencoders (VAEs) \cite{yu2021autoencoder, protopapadakis2021stacked} offer efficient latent variable models but suffer from limitations in expressiveness due to the need for approximating the actual posterior distribution. Diffusion models \cite{saharia2022photorealistic, fei2023generative,  zhu2023denoising}, on the other hand, generate data by simulating a gradual transformation of noise into structured data, showing significant promise in image synthesis tasks, though at the cost of computational inefficiency. This thesis focuses on addressing the challenges of computational efficiency and expressivity within NF models through novel architectural advancements and the application of Image Generative Models in real-world domains, alongside exploring the broader landscape of generative modeling techniques for practical tasks such as image generation, restoration, and enhancement.

\begin{figure*}[!ht]
\centering
	\includegraphics[width=0.999\textwidth]{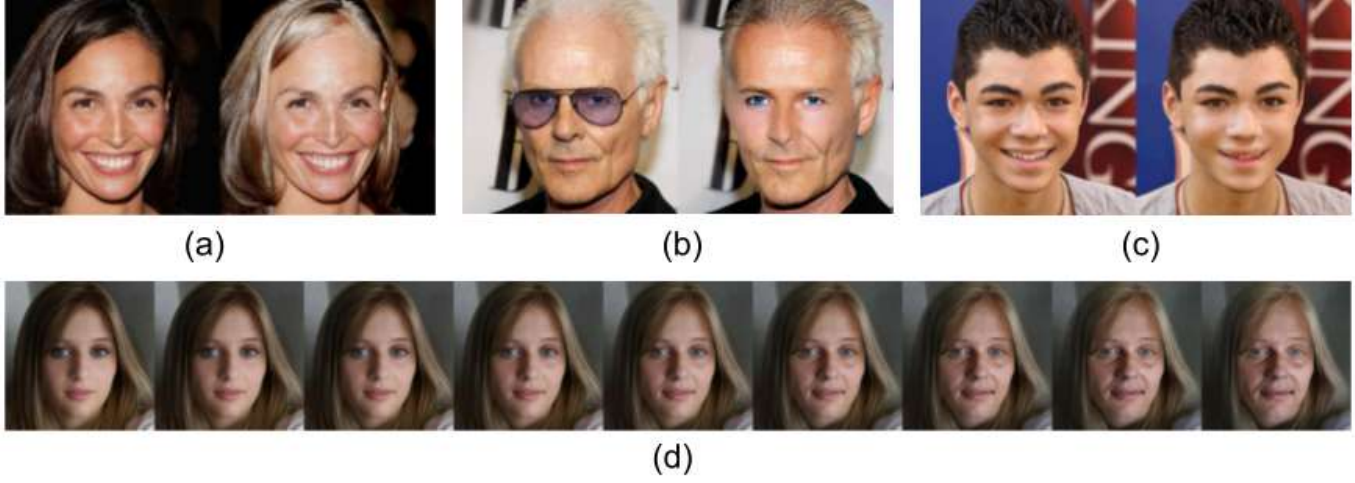}
	\caption{Images interpolation inspired by work of this thesis:  (a) regenerated image when using \emph{CInC Flow} to change hair color, (b) remove glasses, (c) change visage shape. (d) Result of gradually modifying the age parameter, original image: middle.}
\end{figure*}

\section{Normalizing Flow Models}
Advances in Flow models \cite{glow} have revolutionized generative modeling, with various architectures emerging to tackle the challenge of synthesizing realistic data. Flow models \cite{emerging,keller2021self} represent a distinctive approach that offers unique advantages through invertible transformations and direct maximum likelihood training. Unlike GANs and VAEs, which rely on adversarial or variational training techniques, NFs establish a one-to-one mapping between images and their latent representations, enhancing interpretability and control in generation tasks.

The evolution of flow-based models has been marked by significant milestones, particularly the introduction of Glow \cite{glow} with its innovative 1×1 invertible convolutions. While this advancement demonstrated the potential for efficient parallel computations invariant to spatial dimensions, extending these capabilities to more expressive k×k convolutions remained a crucial challenge. This thesis addresses this limitation through multiple contributions: CInC Flow \cite{nagar2021cinc}, which optimizes parameter utilization in padded 3×3 convolutions, and Inverse Flow, which leverages channel independence within convolution matrices for accelerated sampling without sacrificing model complexity. These innovations represent substantial progress toward achieving computational efficiency and high expressivity in flow-based generative modeling.

\section{Challenges in Normalizing Flow Models}

While powerful and flexible for learning complex distributions, NF models present several challenges that can hinder their practical implementation and application. These challenges can be categorized into the following key areas:

\paragraph{Training Instability:} Training NFs can be unstable due to poor initialization, inappropriate learning rates, or numerical instabilities during optimization. Since NFs rely on the invertibility and differentiability of the transformation between the base and target distributions, minor errors in the Jacobian determinant calculation or gradient propagation can significantly affect the model's convergence.

\paragraph{Computational Complexity:} NFs require the Jacobian determinant's computation for each flow transformation, which can be computationally expensive, especially for high-dimensional data. In some cases, these calculations grow quadratically or cubically with the input size, making it difficult to scale to large datasets or complex models. Designing flows with tractable Jacobian determinants remains an active area of research, with methods such as \textit{masking} or \textit{autoregressive flows} attempting to alleviate the burden.

\paragraph{Interpretability and Understanding:} NFs, especially intense ones with complex architectures, can be challenging to interpret. Understanding the learned transformation from the base distribution to the target distribution requires understanding the underlying mapping, which is not always straightforward. This lack of interpretability can be a barrier when these models are applied in domains where understanding the learned transformations is crucial, such as healthcare or finance.

\paragraph{Flow Design and Flexibility:} While NFs are highly flexible, designing the appropriate flow structure for a given task can be challenging. The choice of transformation functions limits the expressiveness of the flow, and designing flows that can capture highly complex data distributions without over-fitting or becoming intractable is still an open problem. Research into more flexible architectures, such as \textit{invariant} or \textit{parameterized} flows, is ongoing to address this limitation.

\paragraph{Scalability to High Dimensions:} NFs often face challenges when scaling to high-dimensional data. As the dimensionality of the input increases, the complexity of learning an invertible transformation also increases. Moreover, ensuring that the model generalizes well to high-dimensional spaces while maintaining tractable computations is a significant challenge. Some recent advancements focus on incorporating structured flows and hierarchical models to address the scalability issue in high-dimensional domains.

\paragraph{Theoretical Guarantees:}
While NFs are empirically adequate in many tasks, theoretical guarantees about their performance are limited. It is still not fully understood under what conditions NFs will optimally approximate a given target distribution or whether they can universally represent all distributions. Further theoretical work is needed to establish stronger foundations for NF models and their limitations.

\paragraph{Addressing Normalizing Flow Challenges in This Thesis:} We address several critical challenges in NF models, which, while powerful for exact likelihood estimation and invertibility, encounter practical obstacles in implementation and application. To counteract training instability, often caused by poor initialization and gradient propagation, the work introduces innovative architectural improvements like the CInC Flow, which enhances parameter efficiency in convolutional transformations. Computational complexity, mainly due to the demanding Jacobian determinant calculations, is addressed by methods that lower dimensional requirements, including multi-scale architectures and strategies that leverage channel independence. To improve interpretability and flexibility, this thesis advances NFs architectures that optimize the balance between model expressivity and ease of understanding. Additionally, structured flows enhance scalability in high-dimensional spaces, enabling the model to handle larger inputs effectively. These contributions collectively advance the theoretical foundation and practical utility of NFs, with demonstrated applications in image interpolation, faster sampling, and complex real-world modeling tasks.

\section{Applications of Image Generative Models}
Beyond theoretical advancements, our work demonstrates the practical utility of generative models across diverse applications. In agricultural quality assessment \cite{nagar2021automated}, we developed a computer vision-based system that combines Conditional Generative Adversarial Networks (CGANs) to address data scarcity challenges in seed quality evaluation. For autonomous driving applications, we introduced \emph{Pvt-IDD}, the first publicly available Indian dataset annotated for faces and license plates, addressing critical privacy preservation needs in real-world scenarios.

We pioneered an unsupervised machine learning framework for geological mapping that integrates stacked autoencoders with k-means clustering \cite{nagar2024rs_geo}. This approach, evaluated across multiple multispectral remote sensing datasets, demonstrates superior performance in identifying potentially mineralized areas through efficient dimensionality reduction and feature learning. 

Furthermore, Much of our research focuses on advancing image restoration techniques through the use of diffusion models. We developed a comprehensive approach that begins with leveraging pre-trained weights and systematically evolves through fine-tuning processes, particularly in adapting StableSR for various restoration tasks \cite{nagar2023adaptation}. Our work categorizes ten distinct distortion types and their relationships with training strategies and degradation scenarios. Through extensive comparative analysis of existing super-resolution and restoration methods, we established a precise taxonomy of their capabilities and limitations. The evaluation framework we developed encompasses multiple metrics across three essential tasks: super-resolution, deblurring, and inpainting, demonstrating powerful results when training on specialized datasets tailored to specific degradation types.

\section{Contributions}
This thesis proposal explores two interconnected themes: advancing the theoretical foundations of generative models through more efficient and expressive architectures and demonstrating their practical impact across multiple domains. In Part II (Chapter \ref{chap:cinc},  \ref{chap:inv_flow}), the proposed work contributes to the fundamental understanding of NFs, and Chapter \ref{chap:affine_sr} demonstrates the adaptation of NF layers to make the Diffusion model faster and efficient. Part IV (Chapter \ref{chap:seeds}, \ref{chap:pvt_idd}, \ref{chap:geo_vae}, \ref{chap:art_restore}, and \ref{chap:missing_sign}) showcases the potential of Generative Models and Computer Vision to solve real-world problems in agriculture, privacy-preserving in autonomous driving, missing traffic-sign detection, geological mapping, cultural heritage preservation, and predict spin-state energetics of transition metal complexes using machine learning models. The frameworks and methodologies developed here establish new benchmarks for computational efficiency and practical utility in generative modeling applications.

\paragraph{The rest of this thesis proposal is organized as follows:}

\begin{itemize}
    \item \textcolor{blue}{Chapter \ref{chap:cinc}:} Propose invertible n×n convolution and a new coupling layer, Quad-coupling layer, for NF models.
    
    \item \textcolor{blue}{Chapter \ref{chap:inv_flow}:} Fast backpropagation algorithm for inverse of convolution with GPU implementation and a multi-scale architecture, Inverse-Flow, accelerating sampling in NF models.

    \item \textcolor{blue}{Chapter \ref{chap:affine_sr}:} Present Affine-StableSR framework, integrating encoder and decoder structures with affine-coupling layers and pre-trained weights of the Stable Diffusion model.
    
    \item \textcolor{blue}{Chapter \ref{chap:seeds}:} A novel computer vision-based automated system for corn quality testing and reducing agricultural produce.
    
    \item \textcolor{blue}{Chapter \ref{chap:pvt_idd}:}  Image Anonymization for Street Scenes Datasets using Image Inpainting.
    
    \item \textcolor{blue}{Chapter \ref{chap:geo_vae}:} A framework that combined stacked autoencoders with k-means clustering to generate geological maps and a novel dimensionality reduction and clustering method.
        
    \item \textcolor{blue}{Chapter \ref{chap:art_restore}:} Adaptation of image super-resolution method encompassing both image super-resolution and restoration domains.

    
    \item \textcolor{blue}{Chapter \ref{chap:missing_sign}:} Missing traffic sign detection using computer vision.

    
    \item \textcolor{blue}{Chapter \ref{chap:summary}}: Summary and future directions for future advancements in efficient generative modeling and its applications, contributing significantly to fields of computer vision, image enhancement, and image generation/sampling.
\end{itemize}


\part{Fast \& Efficient Normalizing Flow Models}
 \chapter{Invertible  3 x 3 Convolution layer and Normalizing Flow}\label{chap:cinc}

\section{Introduction}\label{sec:intro}
The availability of large datasets has resulted in improved machine-learning solutions for more complex problems using Convolution Neural Networks (CNNs) model. However, supervised datasets are expensive to create. Hence, unsupervised methods like generative models are increasingly being worked on. The proposed generative models can be broadly categorized under Likelihood-based methods and GANs. For example, an optimization algorithm could directly minimize the former's negative log-likelihood of the unsupervised examples. At the same time, in the latter, the loss function is modeled as a discriminator network that is trained alternately. Hence, likelihood-based methods directly optimize the probability of examples. In contrast, in GANs, the optimized function is implicit and complex to reason about.


An essential type of Likelihood-based Generative model is normalizing flow-based models. Normalizing flow-based models transform a latent vector, usually sampled from a continuous distribution like the Gaussian, by a sequence of invertible functions to produce the sample. Hence, even though the latent vector distribution is simple, the sample distribution could be highly complex, provided we use an expressive set of invertible transformations. Also, the invertibility of the model implies that one can find the exact latent vector corresponding to an example from a dataset by means of a one-to-one mapping. All other approaches to generative modeling can compute the latent vector, for example, only approximately.

The ability of a normalizing flow-based model to express complex real-world data distributions depends on the expressive power of the invertible transformations used. In supervised models in vision tasks, complex, multilayered CNNs with different window sizes are used. CNNs with larger window sizes help in the spatial mixing of information about the images, resulting in expressive features. Glow used invertible 1$\times$1 convolutions to build normalizing flow models \cite{kingma2018glow}. For a 1$\times$1 convolution (if it is invertible), the inverse is also a 1$\times$1 convolution. We show that this approach does not generalize to larger window sizes. In particular, the inverse of an invertible 3$\times$3 convolution necessarily depends on all the feature vector dimensions, unlike CNNs, which only require local features.

Emerging convolutions proposed a way of inverting convolutions with large window sizes \cite{hoogeboom2019emerging}. The inverse is not a convolution and is computed by a linear equation system that can efficiently be solved using back substitution. However, they required 2 CNN filters to obtain an invertible convolution. Hence, every effective invertible convolution is required to do two convolutions back to back. 

We propose a simple padding approach to obtain invertible convolutions, which only uses a single convolutional filter. Furthermore, we can give a characterization (necessary and sufficient conditions) for the convolutions to be invertible. This allows us to optimize the space of invertible convolutions directly during training. We have compared our method with Emerging convolution (\cite{hoogeboom2019emerging}) and Autoregressive convolution (\cite{germain2015made}). Code available on GitHub\footnote{\label{footnote}\url{https://github.com/Naagar/Normalizing_Flow_3x3_inv}}.

\paragraph{Main Contributions.}
\begin{itemize}
    \item We give necessary and sufficient conditions for a 3$\times$3 convolution to be invertible by making some modifications to the padding (see Section \ref{sec:inv-conv}).
    \item We also propose a more expressive coupling mechanism called Quad-coupling (see Section \ref{sec:quad-coupling}).
    \item We use our characterization and Quad-coupling to train flow-based models that give samples of similar quality as previous works while improving upon the run-time compared to the other invertible 3$\times$3 convolutions proposed (see Section \ref{sec:results_2}).
\end{itemize}

\section{Related works}\label{sec:related_work}
\paragraph{Normalizing flows.}
A normalizing flow aims to model an unknown data distribution (\cite{kingma2018glow,cubic_spline_flow,neural_spline_flow}), that is, to be able to sample from this distribution and estimate the likelihood of an element for this distribution.

To model the probability density $p$ of a random variable $x$, a normalizing flow applies an invertible change of variable $x=g_\theta(z)$ where $z$ is a random variable following a known distribution, for instance $z\sim \mathcal{N}(0, I_d)$ where $d$ is the dimension. Then, we get the probability of $x$ by applying the change of variable formula
\[  p_\theta\left(x\right)= p\left(f_\theta(x)\right)\left(\left|\frac{\partial f_\theta\left(x\right)}{\partial x^T}\right| \right) \]
where $f_\theta$ denotes the inverse of $g_\theta$ and $\left|\frac{\partial f_\theta\left(x\right)}{\partial x^T}\right|$ its Jacobian.

The parameters $\theta$ are learned by maximizing the actual likelihood of the dataset. At the same time, the model is designed so that the function $g_\theta$ can be inverted and have its Jacobian computed in a reasonable amount of time.

\paragraph{Glow.}
RealNVP \cite{rezende2015variational} defines a normalizing flow composed of a succession of invertible steps. Each of these steps can be decomposed into layers of steps. Improvements for some of these layers were proposed in later articles \cite{kingma2018glow}, \cite{hoogeboom2019emerging}.

\emph{Actnorm}: The actnorm layer performs an affine transformation similar to batch normalization. First, its weights are initialized so that the output of the actnorm layer has zero mean and unit variance. Then, its parameters are learned without any constraint.

\emph{Permutation}: RealNVP proposed to use a fixed permutation to shuffle the channels, as the coupling layer only acts on half of the channels. Later, \cite{kingma2018glow} replaced this permutation with a 1$\times$1 convolution in Glow. These can easily be inverted by inverting the kernel. Finally, \cite{hoogeboom2019emerging} replaced this 1 x 1 convolution with the so-called emerging convolution. These have the same receptive convolution with a kernel of arbitrary size. However, they are computed by convolving with two successive convolutions whose kernel is masked to help the inversion operation.

\emph{Coupling layer}: The coupling layer provides flexibility to the architecture. The Feistel scheme (\cite{feistelGeneralized}) inspires its design. They are used to build an invertible layer from any given function $f$. Here, $f$ is learned as a convolutional neural network. 
\[ y = [y_1, y_2], \quad y_1=x1, \quad y_2=(x_2 + f(x_1))*\exp(g(x_1)) \]
We get $x_1$ and $x_2$ by splitting the input $x$ along the channel axis.

\paragraph{Invertible Convolutional Networks.}
Complementary to normalizing flows, some work has been done designing more flexible invertible networks. For example, \cite{revnet} proposed reversible residual networks (RevNet) to limit the memory overhead of backpropagation, while (\cite{JacobsenBZB19}) built modifications to allow an explicit form of the inverse, also studying the ill-conditioning of the local inverse. \cite{ho2019flow++} proposed a flow-based model that is a non-autoregressive model for unconditional density estimation on standard image benchmarks

\emph{Deconvolution vs Inverse of Convolution:} There is a fundamental difference between a \textbf{deconvolution} (also known as a transposed convolution) and a true \textbf{inverse of a convolution} as used in our work. A deconvolution is a learned, approximate inverse typically used in the decoder part of an autoencoder or a generator in a GAN. It learns a kernel via backpropagation to up-sample a feature map, but it does not guarantee a perfect reconstruction of the original input. In contrast, an "inverse of a convolution" in our context refers to a mathematically exact inverse operation. This is only possible when the forward convolution is explicitly designed to be invertible, allowing the original input to be perfectly recovered from the output, as illustrated in Figure~\ref{fig:deconv_vs_invconv}.

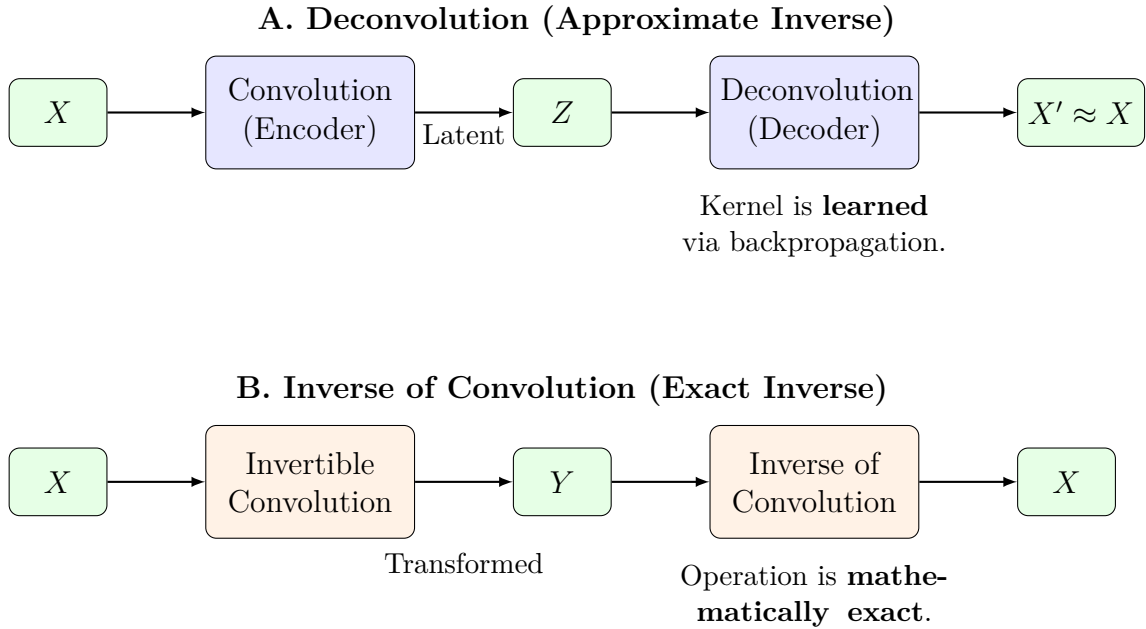
\begin{figure}[ht!]
        \centering
        \begin{tikzpicture}[
            node distance=1cm and 1.3cm,
            block/.style={rectangle, draw, fill=blue!10, text width=2.5cm, text centered, rounded corners, minimum height=1.5cm},
            data/.style={rectangle, draw, fill=green!10, text centered, rounded corners, minimum height=0.9cm, minimum width=1.3cm},
            arrow/.style={-latex, thick},
            title/.style={font=\bfseries}
        ]
            \node[data] (x1) {$X$};
            \node[block, right=of x1] (conv) {Convolution (Encoder)};
            \node[data, right=of conv] (z) {$Z$};
            \node[block, right=of z] (deconv) {Deconvolution (Decoder)};
            \node[data, right=of deconv] (x_prime) {$X' \approx X$};
            
            \draw[arrow] (x1) -- (conv);
            \draw[arrow] (conv) -- node[below, font=\small] {Latent} (z);
            \draw[arrow] (z) -- (deconv);
            \draw[arrow] (deconv) -- (x_prime);
            
            \node[below=0.2cm of deconv, text width=5cm, text centered, font=\small] {Kernel is \textbf{learned} via backpropagation.};
            
            \node[fit=(x1)(x_prime), inner sep=0.1cm, label={[yshift=0.3cm]above:\textbf{A. Deconvolution (Approximate Inverse)}}] (frame1) {};

            \node[data, below=4cm of x1] (x2) {$X$};
            \node[block, fill=orange!10, right=of x2] (inv_conv) {Invertible Convolution};
            \node[data, right=of inv_conv] (y) {$Y$};
            \node[block, fill=orange!10, right=of y] (inv_conv_inv) {Inverse of Convolution};
            \node[data, right=of inv_conv_inv] (x_recon) {$X$};

            \draw[arrow] (x2) -- (inv_conv);
            \draw[arrow] (inv_conv) -- node[below=0.8cm, font=\small] {Transformed} (y);
            \draw[arrow] (y) -- (inv_conv_inv);
            \draw[arrow] (inv_conv_inv) -- (x_recon);
            
            \node[below=0.2cm of inv_conv_inv, text width=5cm, text centered, font=\small] {Operation is \textbf{mathematically exact}.};

             \node[fit=(x2)(x_recon), inner sep=0.1cm, label={[yshift=0.3cm]above:\textbf{B. Inverse of Convolution (Exact Inverse)}}] (frame2) {};

        \end{tikzpicture}
        \caption{(A) a learned Deconvolution, which provides an approximate reconstruction, and (B) a mathematical Inverse of a Convolution, which provides an exact reconstruction.}
        \label{fig:deconv_vs_invconv}
\end{figure}

\emph{Invertible 1$\times$1 Convolution:}  
\cite{kingma2018glow} proposed the invertible 1$\times$1 convolution replacing fixed permutation (\cite{realNVP}) that reverses the ordering of the channels. \cite{hoogeboom2019emerging} proposed a normalizing flow method to do the inversion of 1$\times$1 convolution with some padding on the kernel and two distinct auto-regressive convolutions, which also provide a stable and flexible parameterization for invertible 1$\times$1 convolutions. The weights of a 1x1 convolution with c input channels and c output channels, is given by a cxc matrix W. The c dimensional vector at every pixel of the input is multiplied by this matrix to get the corresponding c dimensional output vector at every pixel. What we mean by invertible here, is that W is invertible as a matrix (as in linear algebra).

\emph{Invertible n$\times$n Convolution:} Reformulating n$\times$n convolution using the invertible shift function proposed by \cite{glow_nxn} to decrease the number of parameters and remove the additional computational cost while keeping the range of the receptive fields. In our proposed method, there is no need for the reformulation of standard convolutions. \cite{hoogeboom2019emerging} proposed two different methods to produce the invertible convolutions : (1) Emerging Convolution and (2) Invertible Periodic Convolutions. Emerging requires two autoregressive convolutions to do a standard convolution. Still, our method requires only one convolution compared to the method proposed by \cite{hoogeboom2019emerging} and increases the flexibility of the invertible n$\times$n convolution. 
\section{Our approach}\label{sec:our_approach}
We propose a novel approach for constructing invertible 3$\times$3 convolutions and coupling layers for normalizing flows. We propose two modifications to the existing layers used in previous normalizing flow models:
\begin{itemize}
	\item Convolution layer: instead of using 1$\times$1 convolutions or emerging convolutions, we propose to use standard convolutions with a kernel of any size with a specific padding. 
	\item Coupling layer: We propose to use a modified version of the coupling layer designed to have a bigger receptive field.
\end{itemize}
We also show how invertibility can be used to manipulate images semantically.

\paragraph{Invertible 3x3 Convolution} \label{sec:inv-conv}
We give necessary and sufficient conditions for an arbitrary convolution with some simple modifications on the padding to be invertible. Moreover, an efficient back substitution algorithm can also compute the inverse.

\begin{definition}[Convolution]
The convolution of an input $X$ with shape $H\times W \times C$ (H: height, W: width, C: channels, X: input image) with a kernel $K$  with shape $k\times k\times C\times C$ is $Y=X*K$ of shape $(H-k+1)\times (W-k+1)\times C$ then first channel of which is equal to
\begin{equation}
   Y_{i,j,c_o} = \sum_{l,h < k}\sum_{c_i=1}^{C}I_{i+l,j+k,c_i}K_{l,k,c_i,c_o}
\end{equation}.
\end{definition}

In this setting, the output $Y$ is smaller than the input to prevent the input from being padded before applying the convolution.
\begin{definition}[Padding]
Given an image $I$ with shape $H\times W \times C$, the $(t-top, b-bottom, l-left, r-right)$ padding of $I$ is the image $\hat{I}$ of shape $(H + t + b)\times (W+l+r) \times C$ defined as
    \begin{equation}
        \hat{I}_{i,j,c} = \begin{cases}
        I_{i-t,j-l,c}\ &\text{if } i-t < H\text{ and } j-l < W\\
        0&\text{otherwise}
        \end{cases}
    \end{equation}
\end{definition}

\begin{figure*}[ht]
	\centering
		\includegraphics[width=0.99\linewidth]{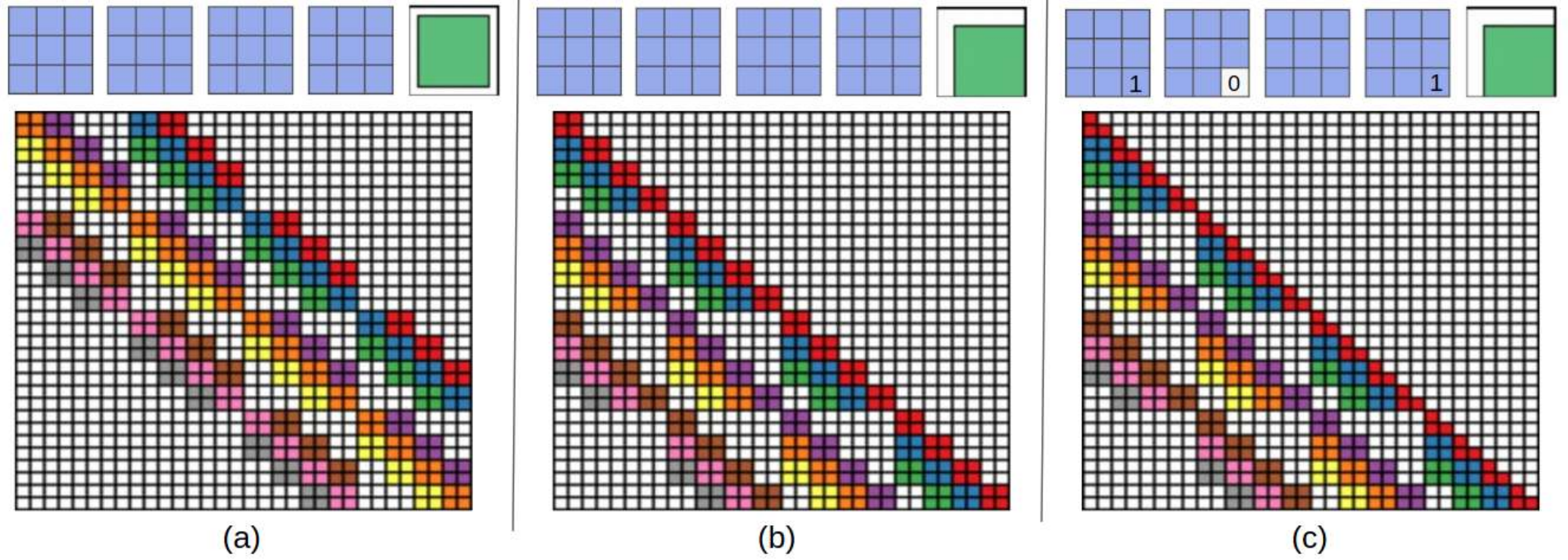}	
	\caption{(a).Top: The first four are the kernel matrix, and the fifth is the input matrix with the \emph{standard} padding that gives the bottom convolution matrix. Bottom: the convolution matrix corresponding to a convolution with a kernel of size three applied to an input of size $4\times4$, padded on both sides, and with two channels. Zero coefficients are drawn in white; other coefficients are drawn using the same color if applied to the same spatial location, albeit on different channels. (b) Top: an \emph{alternative} padding scheme that results in a block triangular matrix $M$, Bottom: The matrix corresponding to a convolution with a kernel of size three applied to an input of size $4\times4$ padded only on one side and with two channels. (c) Top: an \emph{masked alternative} padding scheme that results in a triangular matrix $M$, Bottom: the matrix corresponding to a convolution with a kernel of size three applied to an input of size $4\times4$ padded only on one side and with two channels. One of the weights of the kernel is masked. Note that the equivalent matrix $M$ is \emph{triangular}.}
	\label{fig:matrix}
\end{figure*}

As zero padding does not add any bias to the input, the convolution between a padded input $\hat{I}$ and a kernel $K$ is still a linear map between the input and the output. As such, it can be described as matrix multiplication.

An image $I$ of shape $(H,W,C)$ can be seen as an vector $\vec{I}$ of $\mathbb{R}^{H\times W\times C}$. In the rest of this chapter, we will always use the following basis 
$I_{i,j,c} = \vec{I}_{c + Cj + CHi}$.  For any index $i\leq HWC$, let $(i_y,i_x,i_c)$ denote the indexes that satisfy $\vec{I}_i=I_{i_y,i_x,i_c}$. Note that $i < j$ iff $(i_y,i_x,i_c) \prec (j_y,j_x,j_c)$ where $\prec$ denotes the lexicographical order over $\mathbb{R}^3$. If $C=1$ this means that the pixel $(j_y,j_x)$ is on the right or below the pixel  $(i_y,i_x)$.

\begin{definition}[Matrix of a convolution.]
Let $K$ be a kernel of shape $k\times k \times C \times C$. The matrix of a convolution of kernel $K$ with input $X$ of size $H\times W\times C$ with padding $(t,b,l,r)$ is a matrix describing the linear map $X\mapsto \hat{X}*K$.
\end{definition}

\paragraph{Characterization of invertible convolutions:}
We consider convolution with top and left padding only. We give necessary and sufficient conditions for such convolutions to be invertible. 
Let $K$ be the kernel of the convolution with shape $3\times 3 \times N \times N$ where $3\times3$ is the window size, and $N$ is the number of channels. Note that the number of input channels should be equal to the number of output channels to be invertible.
\begin{lemma}\label{lem:lower-triang} Let $y$ = $M\hat{x}$, 

\textbf{$M$} is a lower triangular matrix with all diagonal entries $=K_{3,3}$
\end{lemma}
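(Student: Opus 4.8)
The plan is to show that with top-and-left padding only, the convolution matrix $M$ is lower triangular and its diagonal entries are all equal to the single kernel weight $K_{3,3}$ (the bottom-right corner of the $3\times 3$ window). I would work entirely in the chosen flattening $I_{i,j,c} = \vec I_{c + Cj + CHi}$, which linearly orders the entries first by row $i$, then by column $j$, then by channel $c$. The key observation is that this ordering is compatible with the lexicographic order $\prec$ on the index triples, so ``lower triangular'' means precisely: output entry at position $(i,j,c_o)$ can only depend on input entries at positions $(i',j',c_i)$ with $(i',j',c_i) \preceq (i,j,c_o)$.

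First I would write out the padded convolution explicitly. With only top padding $t$ and left padding $l$ (and $b=r=0$), the output has the same spatial size as the input, and each output pixel $(i,j)$ is a weighted sum over the $3\times 3$ window of padded-input pixels $(i+l-t,\, j+k-l)$ for window offsets $l,k \in \{0,1,2\}$. The crucial point is to choose the padding amounts so that the window ``looks backward'': with top-left padding the window anchored at output $(i,j)$ covers input pixels at rows $i-2,i-1,i$ and columns $j-2,j-1,j$, so every contributing input pixel $(i',j')$ satisfies $i'\le i$ and (when $i'=i$) $j'\le j$. This is exactly the statement that in the spatial ordering no output depends on an input strictly later in the order, which gives lower-triangularity at the block level. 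I would make the row/column index bookkeeping precise here, verifying that the padded-input index formula from the Padding definition forces $(i',j') \preceq (i,j)$ whenever the kernel weight is nonzero, and that the diagonal block (input pixel $=$ output pixel) is hit only by the single kernel tap $K_{2,2}$ in zero-based indexing, i.e. $K_{3,3}$ in the paper's one-based convention.

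Next I would handle the channel dimension to upgrade block-lower-triangularity to genuine lower-triangularity and to pin down the diagonal. Within the diagonal spatial block, the contribution is the $N\times N$ matrix $K_{3,3}$ acting on the channel vector; for the overall matrix $M$ to be lower triangular in the full $(i,j,c)$ ordering, this corner matrix $K_{3,3}$ itself must be lower triangular, and its diagonal entries are then exactly the diagonal entries recorded along $M$. So I would state that the diagonal of $M$ consists of the diagonal entries of the corner kernel slice $K_{3,3}$ repeated over all $HW$ spatial positions; the phrasing ``all diagonal entries $= K_{3,3}$'' should be read as: each diagonal block equals $K_{3,3}$, and this is where the masking from Figure~\ref{fig:matrix}(c) enters, forcing $K_{3,3}$ to be (lower) triangular so that $M$ is genuinely triangular rather than merely block-triangular.

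The main obstacle I anticipate is purely notational rather than conceptual: keeping the one-based kernel indexing, the padding offsets, and the lexicographic flattening mutually consistent, since an off-by-one error in which window tap lands on the diagonal would change the claimed diagonal value. To guard against this I would first verify the claim on the smallest nontrivial case --- the $4\times 4$ input with two channels depicted in Figure~\ref{fig:matrix} --- reading off directly that the only tap reaching the diagonal block is the corner weight, and then argue that the general case follows by the same index computation since the padding choice makes the window strictly backward-looking independent of $H,W$. Once the backward-looking property and the corner-tap-on-diagonal facts are established, lower-triangularity and the identification of the diagonal entries follow immediately from the definition of the ordering.
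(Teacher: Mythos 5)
Your proposal matches the paper's own proof essentially step for step: both work in the same lexicographic flattening, observe that top-left padding makes the $3\times 3$ window strictly backward-looking so that every entry of $M$ above the diagonal vanishes, and read off that the only kernel tap landing on the diagonal is the corner weight $K_{3,3}$. Your extra care with the channel dimension --- distinguishing block-lower-triangularity from genuine triangularity and invoking the masking of the corner slice $K_{3,3}$ --- is correct and consistent with the paper, which simply defers that point to its separate masking paragraph rather than treating it inside the lemma's proof.
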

Where the matrix $M$ is that produces the equivalent result when multiplied with a vectorized input ($\hat{x}$).


\begin{proof}
Consider any entry in the upper right half of $M$. That is $(i,j)$ such that $i < j$ according to the ordering given in the definition of $M$. $M_{i,j}$ is nothing but the scalar weight that needs to be multiplied by the $j$th pixel of input when computing the $i$th pixel of the output. The linear equation relating these two variables is as follows:
$$ y_i = \sum_{l=0}^3\sum_{k=0}^3K_{3-l,3-k}x_{i_x-l,i_y-k} $$
From this equation follows that if $j_x>i_x$ or $j_y>i_y$, then the $i$th pixel of the output does not depend on the $j$th pixel of the input, and thus $M_{i,j}=0$. This also justifies that all diagonal coefficients of $M$ are equal to $K_{3,3}$
\end{proof}

We first describe our conditions for the case when $N=1$. We prove the following theorem.
\begin{theorem}[Characterization for $N=1$] \label{thm:inv-conv-1}
$$M \text{ is invertible iff } K_{3,3} \neq 0.$$
\end{theorem}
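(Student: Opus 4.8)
The plan is to treat this statement as an essentially immediate corollary of Lemma~\ref{lem:lower-triang}. That lemma already establishes the structural facts that do all the work: with top-and-left padding only, the linear map $X \mapsto \hat{X} * K$ is represented by a matrix $M$ that is lower triangular, and every diagonal entry of $M$ equals $K_{3,3}$. The remaining task is purely linear-algebraic, so I would frame the proof as a short deduction rather than a fresh argument.

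First I would pin down the dimensions to make ``invertible'' meaningful. For $N = 1$ the input $X$ has shape $H \times W \times 1$, so its vectorization $\hat{x}$ lives in $\mathbb{R}^{HW}$; because we pad only on the top and left, the output keeps the same spatial size $H \times W$, and hence $M$ is a square $HW \times HW$ matrix. This squareness is exactly what the chosen padding convention buys us, and it is a prerequisite for the statement to make sense.

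Next I would invoke the standard fact that the determinant of a triangular matrix is the product of its diagonal entries. Combining this with Lemma~\ref{lem:lower-triang}, where all $HW$ diagonal entries equal $K_{3,3}$, gives
\[
\det(M) = \prod_{i=1}^{HW} M_{i,i} = (K_{3,3})^{HW}.
\]
Since $M$ is invertible if and only if $\det(M) \neq 0$, and since $HW \geq 1$, we obtain $\det(M) \neq 0 \iff K_{3,3} \neq 0$. This settles both directions of the equivalence at once: if $K_{3,3} = 0$ the determinant vanishes and $M$ is singular, while if $K_{3,3} \neq 0$ the determinant is a nonzero power and $M$ is invertible.

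There is no genuine obstacle at this stage: the entire difficulty has already been absorbed into Lemma~\ref{lem:lower-triang}, which is where the interplay between the padding scheme, the lexicographic pixel ordering, and the convolution weights is resolved. The only points I would verify with care are that $M$ is square (guaranteed by top-left padding) and that the diagonal is genuinely \emph{constant} rather than merely entrywise nonzero---both supplied by the lemma. I would therefore present the theorem as a one-line corollary, and reserve any additional remarks for noting that when $K_{3,3} \neq 0$ the inverse can be applied by the efficient back-substitution alluded to earlier, which is precisely the triangular solve associated with this $M$.
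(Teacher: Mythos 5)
Your proposal is correct and follows essentially the same route as the paper's proof: invoke Lemma~\ref{lem:lower-triang}, use that the determinant of the lower-triangular $M$ is the product of its diagonal entries, obtain $\det(M) = K_{3,3}^{HW}$, and conclude invertibility holds iff $K_{3,3} \neq 0$. Your version is simply a more carefully spelled-out rendering of the paper's one-line argument, with the squareness of $M$ and the constancy of the diagonal made explicit.
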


\begin{proof}
The proof of the theorem uses Lemma \ref{lem:lower-triang}.  Since $M$ is lower triangular, the determinant is nothing but the product of diagonal entries, which is $=K_{3,3}^{h*w}$ where $h,w$ are the dimensions of the input/output image. 
\end{proof}
At its core, the convolution layer is a linear operation. However, we have no guarantees regarding its invertibility. The result $z$ of the convolution of input $x$ with kernel $k$ can be expressed as the product of $x$ with a matrix $M$. When zero-padding is used around the input so that $x$ and $z$ have the same shape, the matrix $M$ is not easily invertible because the determinant of $M$ can be zero (see matrix $M$ in Figure \ref{fig:matrix}(a) caption for it and Lemma 1).

However, when padding only on two sides (left and top), the corresponding $M$ is blocked triangular (see Figure \ref{fig:matrix}(b)). To further ensure invertibility and speed up the inversion process, we also mask part of the kernel so that the matrix corresponding to the convolution is triangular; see Figure \ref{fig:matrix}(c) and for more details, which $K_{(n,n)}$ we need to mask, see paragraph \ref{sec:masking}. In this configuration, the Jacobian of the convolution can also be easily computed. For further details of the padding of the input, see Figure-\ref{sec:padding}.

\paragraph{Input padding}\label{sec:padding}
To make sure the matrix ($M$) is block triangular, padding of $(k+1)/2$ on top and left of the input ($x$) is applied, where $k$ is the size of the kernels. 
\paragraph{Masking of kernels}\label{sec:masking}
Considering the assumption that the number of input channels equals the number of output channels ($N$), masking the kernels depends only on the number of channels ($N$); let the kernel size be $k\times k$. The $K^{a,b}_{k,k}$ are the entries corresponding to block ($N\times N$) diagonals entire in the convolution matrix ($M$), where $a$ and $b$ are  $a, b \in {1, 2, ..., N}$.   $K^{a,b}_{k,k}$ is a square matrix ($C$) of size $N\times N$) and the diagonal entries of the block of matrix $M$ are the diagonal of $C$ and to ensure the invariability of the $M$, we have to set all the entries $C_{a,b}$ to zero when $a > b$ and one when $a = b$.

Comparison of our method with the existing invertible Normalizing flow methods \label{sec:comparison} in Figure \ref{fig:cincflow_compare}.

\begin{figure}[!ht]
    \centering
    \includegraphics[width=0.99\textwidth]{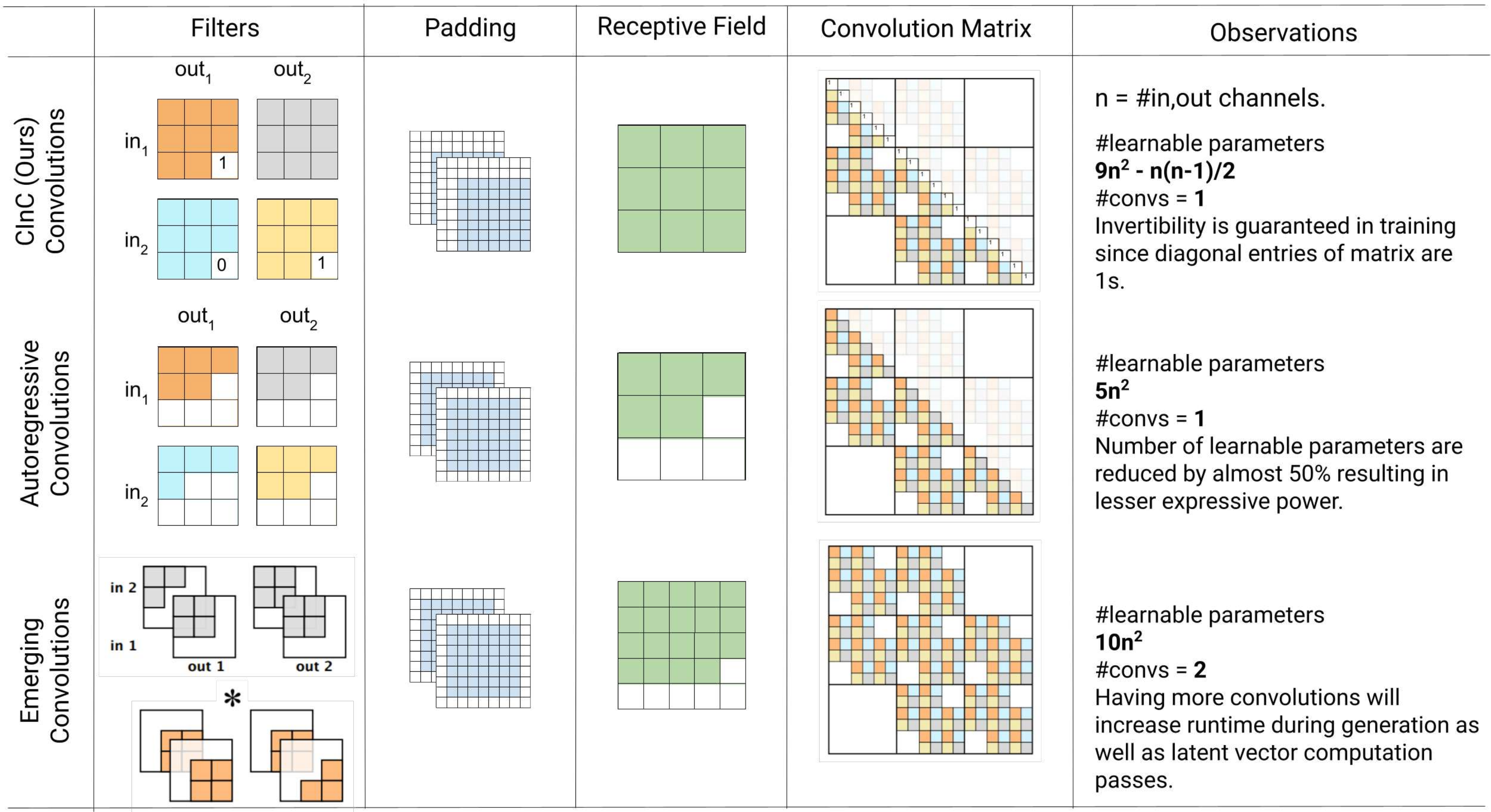}
    \caption{CInC Flow \cite{nagar2021cinc} comparison of the speed and utilization of parameters with Autoregressive convolutions and Emerging convolutions.}
    \label{fig:cincflow_compare}
\end{figure}

\paragraph{Quad-coupling}
\label{sec:quad-coupling}
The coupling layer is used to have some flexibility, as its functions can be of any form. However, it only combines the effects of half channels. 
To overcome this issue, we designed a new coupling layer inspired by generalized Feistel (\cite{feistelGeneralized}) schemes. Instead of dividing the input $x$ into two blocks, we divide it into four $x = \left[x_1,x_2,x_3,x_4\right]$ along the feature axis. Then we keep $x_1$ unchanged  and use it to modify the other blocks in an autoregressive  manner (see Figure \ref{fig:coupling}):

\begin{align}
    y_1 =& x_1\\
    y_2 =& (x_2 + f_1(x_1)) * \exp(g_1(x_1))\\
    y_3 =& (x_3 + f_2(x_1,x_2)) * \exp(g_2(x_1,x_2))\\
    y_4 =& (x_4 + f_3(x_1,x_2,x_3)) * \exp(g_3(x_1,x_2,x_3))
\end{align}

where $(f_i)_{i\leq 3}$ and $(g_i)_{i\leq 3}$ are learned. The layer output is obtained by concatenating the $(y_i)_{i\leq 4}$.

\begin{figure}[ht!]
    \centering
    \includegraphics[width=0.35\linewidth]{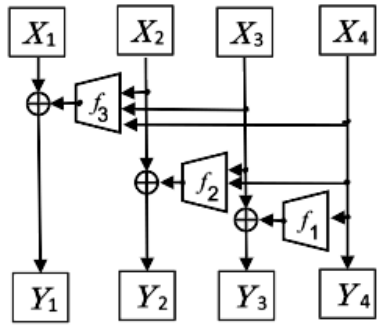}
       \caption{The Quad-coupling layer, each input block $X_i$ has the same spatial dimension as the input $X$ but only one-quarter of the channels. Each function $f_1$, $f_2$, and $f_3$ is a 3-layer convolutional network. $\bigoplus$ symbolizes a component-wise addition. The multiplicative actions are not represented here.}
    	\label{fig:coupling}
\end{figure}

\begin{figure}[ht!]
    \centering
    \includegraphics[width=0.75\linewidth]{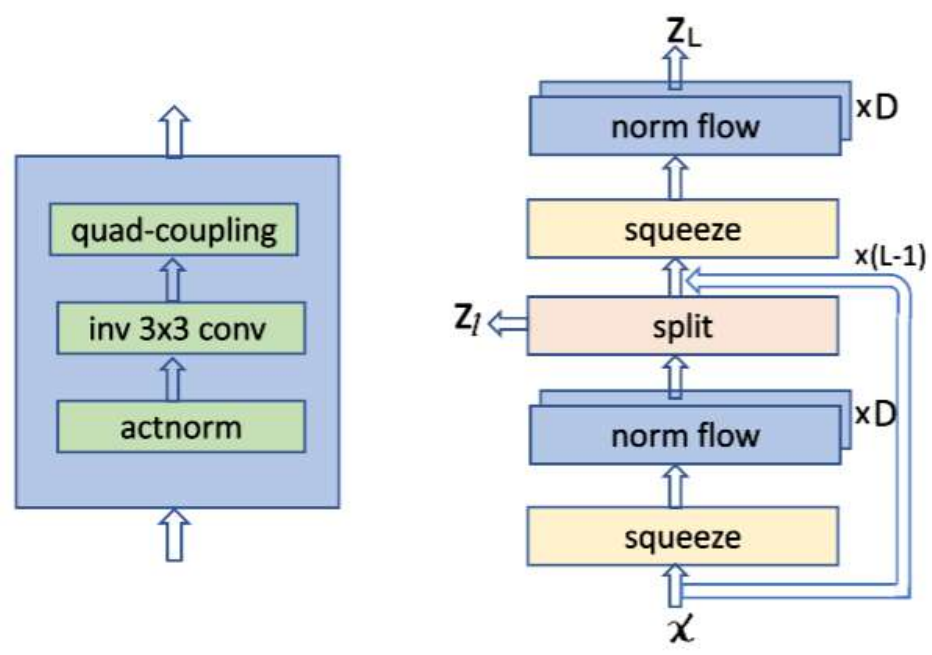}
       \caption{Overview of the model architecture. Left: proposed flow module: containing inv $3\times3$ convolution. Right: complete model architecture, where the flow module is now grouped. The squeeze module reorders pixels by reducing the spatial dimensions by half and increasing the channel depth by four. A hierarchical prior is placed on part of the intermediate representation using the split module as in (\cite{kingma2018glow}). $x$ and $z$ denote input and output. The model has L levels and D flow modules per level.}
    	\label{fig:norm_flow}
\end{figure}

\section{Experimental results}\label{sec:results_2}
The architecture is based on \cite{hoogeboom2019emerging}. We modified the emerging convolution layer to use our standard convolution. We also introduced the Quad-coupling layer in place of the affine coupling layer. Finally, we evaluate the model using various methods and provide images sampled from the model. See Figure \ref{fig:norm_flow} for a detailed architecture overview.

\paragraph{Training setting:} To train the model on CIFAR10, we used the three levels (L) and depth (D) of 32 and lr $0.001$ for the 500 epochs. To train on ImageNet32, $L = 3$, $D = 48$, lr $0.001$ for the 600 epochs and for ImageNet64, $L = 4$, $D = 48$, lr $0.001$ for the 1000 epochs. See Figure \ref{fig:norm_flow} for the model architecture.
\paragraph{Quantitative results:}
The comparison of the performance of  $3\times3$ invertible convolution with the emerging convolution \cite{hoogeboom2019emerging} for the CIFAR10 dataset in Table \ref{tab:cifar}. The performance of our layers was tested on CIFAR10 \cite{cifar10}, ImageNet \cite{imgnet_data}, as well as on the galaxy dataset \cite{galaxy}, see Table \ref{tab:performance}. We also tested our architecture on networks with a smaller depth ($D=4$ or $D=8$), see Table \ref{tab:smallnet}, which could be used when computational resources are limited, as their sampling time is much lower. In this case, using standard convolution and Quad-coupling offers a more considerable performance improvement than bigger models (see Table \ref{tab:smallnet}). 

\begin{table}[ht!]
    \centering
    \begin{tabular}{ccc}
    		\toprule
    		 Layer &  Emerging 3$\times$3 Inv. conv & Our 3$\times$3 Inv. conv \\
    		\hline
    		Affine & 3.3851 & 3.4209 \\
    		Quad & \textcolor{blue}{3.3612} & \textcolor{blue}{3.3879} \\
                \bottomrule
    	\end{tabular} 
    	\caption{Comparison of the performance in bits per dimension (BPD) achieved on the CIFAR10 dataset with different coupling architectures.}
    	\label{tab:cifar}
\end{table}    

\begin{table}[ht!]
    \centering
     \begin{tabular}{lcccc}
            \toprule
        Dataset & Glow & Emerging & $3\times 3$ & Quad\\
        \hline 
        CIFAR10 & 3.35 & 3.34 & \textcolor{blue}{3.3498} & \textcolor{blue}{3.3471} \\ 
        ImageNet32 & 4.09 & 4.09 & \textcolor{blue}{4.0140} & 4.0377 \\ 
        ImageNet64 & 3.81 & 3.81 & 3.8946 & 3.8514 \\ 
        Galaxy & --- & 2.2722 & 2.2739 & \textcolor{blue}{2.2591} \\ 
        \bottomrule
    \end{tabular} 
    \caption{Performance achieved on the CIFAR10 and ImageNet datasets after a limited number of epochs (500 for CIFAR10, 600 for ImageNet32, ImageNet64, and 1000 for Galaxy). Emerging results were obtained by using the code provided in \cite{hoogeboom2019emerging}, $3\times 3$ is replacing the emerging convolutions by our $3\times 3$ invertible convolutions, and Quad uses Quad-coupling on top of this.}
    \label{tab:performance}
\end{table}

\begin{figure}[ht!]
    \centering
    \includegraphics[width=0.8\linewidth]{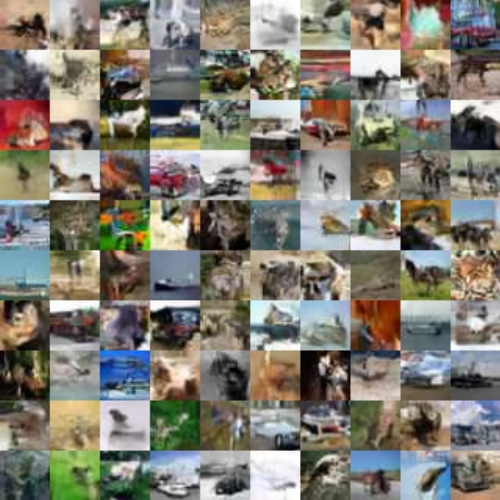}
    \caption{Sample images generated after training on the CIFAR10 dataset.}
    \label{fig:sample_cifar}
\end{figure}

\begin{table*}
\centering
    \begin{tabular}{lccccc}
        \toprule
        \multirow{2}{*}{Dataset} & \multicolumn{2}{c}{Emerging} & \multicolumn{2}{c}{Ours} & \multirow{2}{*}{Depth}  \\
               & BPD & ST & BPD & ST &   \\ 
        \hline
        CIFAR10 &  3.52 & \multirow{2}{*}{2.45} & \textcolor{blue}{3.49} & \multirow{2}{*}{\textcolor{blue}{1.31}} & \multirow{2}{*}{4} \\ 
        {Imagenet32} & 4.30 &  & \textcolor{blue}{4.25} &  & \\ 
        CIFAR10 &  3.47 & \multirow{2}{*}{4.94} & \textcolor{blue}{3.46} & \multirow{2}{*}{\textcolor{blue}{2.76}} & \multirow{2}{*}{8} \\ 
        Imagenet32 & 4.20 &   & \textcolor{blue}{4.18} &  & \\ 
        \bottomrule
    \end{tabular}
    \caption{BPD with smaller networks, when computational resources are limited. The performance is expressed in bits per dimension, and the sampling time (ST) is the time in seconds needed to sample 100 images. All networks were trained for 600 epochs.}
    \label{tab:smallnet}
\end{table*}

\begin{figure*}[ht]
    \centering
	\includegraphics[width=0.89\textwidth]{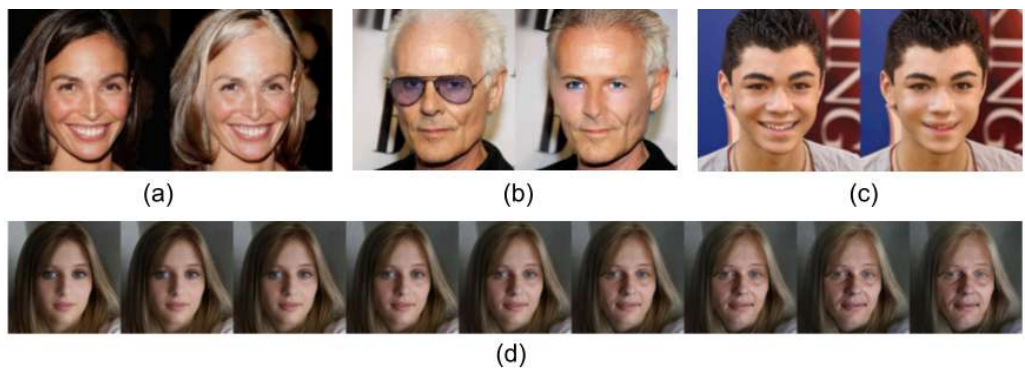}
	\caption{From left to right: the result obtained when using the network to change hair color (a), remove glasses (b), and visage shape (c). For each example, the original image is shown on the left. Fig.(d) Here, we can see the result of gradually modifying the age parameter. The original image is the fourth from the left (middle one).}
	\label{fig:midif}
\end{figure*}

\paragraph{Sampling Times:}
We compared our method's sampling time (Table \ref{tab:samplingtimes}) against Glow (\cite{kingma2018glow}) and Emerging Convolutions (\cite{hoogeboom2019emerging}). Our convolution still requires solving a sequential problem to be inverted and, as such, needs to be inverted on the CPU, unlike Glow, which can be inverted while performing all the computations on the GPU to show the relative computational cost of different architectures. This explains the gap between the sampling time of our model and Glow. However, it is roughly two times faster than emerging convolutions; this comes from the need to solve two inversion problems to invert one emerging convolution layer. The Quad-coupling layer does not affect sampling time too much.

\begin{table}[ht!]
\centering
    \begin{tabular}{lcccc}
        \toprule
        dataset & Glow & Emerging & $3\times 3$  & Quad \\ 
        \hline 
        CIFAR10 & 0.58 & 18.4 & 9.3 & 10.8 \\  
        Imagenet32 & 0.86 & 27.6 & 14.015 & 16.1 \\ 
        Imagenet64 & 0.50 & 160.72 & 82.04 & 84.06 \\ 
        \bottomrule
    \end{tabular}
    \caption{ Time to sample 100 images. Results were obtained with Glow running on a GPU and the other methods running on one CPU core. Seconds.}
    \label{tab:samplingtimes}
\end{table}

\paragraph{Interpretability results:}
To show the interpretability of our invertible network, we used the Celeba dataset (\cite{celeba}), which provides images of faces and attributes corresponding to these faces. Figure \ref{fig:sample_cifar} shows the randomly generated fake sample images for the CIFAR10 dataset. The covariance matrix between the attributes of images in the dataset and their latent representation indicates how to modify the latent representation of an image to add or remove features. Examples of such modifications can be seen in Figures \ref{fig:midif}(a, b, c) and \ref{fig:midif}(d).

\section{Summary}
In this work, we propose a new method for Invertible n$\times$n Convolution. Coupling layers solve two problems for normalizing flows: they have a tractable Jacobian determinant and can be inverted in a single pass. We propose a new type of coupling method, Quad-coupling. Our method shows consistent improvement over the Emerging convolutions method, and we only need a single CNN layer for every effective invertible convolution. This work shows that we can invert a convolution with only one effective convolution, and additionally, the inference time and sampling time improved notably. We show the inversion of 3$\times$3 convolution and the generalization of the inversion for the n$\times$n kernel. Furthermore, we demonstrate enhanced quantitative performance regarding log-likelihood on standard image modeling benchmarks.

 \chapter{Backpropogation Algorithm for Inverse of Convolution and Inverse-Flow Model}\label{chap:inv_flow}

\section{Introduction}
Large-scale neural network optimization using gradient descent is made possible due to efficient and parallel back-propagation algorithms \citep{bottou2010large}. Large models could not be trained on large datasets without such fast back-propagation algorithms. All operations for building practical neural network models need efficient back-propagation algorithms \citep{lecun2002efficient}. This has limited types of operations that can be used to build neural networks. Hence, it is important to design fast parallel backpropagation algorithms for novel operations that could make models more efficient and expressive.

Convolutional layers are very commonly used in Deep Neural Network models as they have fast parallel forward and backward pass algorithms \citep{lecun2002efficient}. The inverse of a convolution is a closely related operation with use cases in Normalizing Flows \citep{karami2019invertible}, Image Deblurring \citep{eboli2020end}, Sparse Blind Deconvolutions \citep{xu2014deep}, Segmentation, etc. However, the Inverse of a Convolution is not used directly as a layer for these problems since straightforward algorithms for the backpropagation of such layers are highly inefficient. Such algorithms involve computing the inverse of a very large-dimensional matrix. 

Fast sampling is crucial for Normalizing flow models in various generative tasks due to its impact on practical applicability and real-time performance \citep{papamakarios2021normalizing}. Rapidly producing high-quality samples is essential for large-scale data generation and efficient model evaluation in fields such as image generation, molecular design \citep{zang2020moflow}, image deblurring, and deconvolution. Normalizing flows have demonstrated their capability in constructing high-quality images \citep{kingma2018glow, meng2022butterflyflow}. However, the training and sampling process is computationally expensive due to the repeated need for inverting functions (e.g., convolutions). Existing approaches rely on highly constrained architectures and often impose limitations like diagonal, triangular, or low-rank Jacobian matrices and approximate inversion methods \citep{hoogeboom2019emerging, keller2021self}. These constraints restrict the expressiveness and efficiency of normalizing flow models. To overcome these limitations, fast, efficient, and parallelizable algorithms are needed to compute the inverse of convolutions and their backpropagation, along with GPU-optimized implementations. Addressing these challenges would significantly enhance the performance and scalability of Normalizing flow models. 

In this work, we propose a fast, efficient, and parallelized backpropagation algorithm for the inverse of convolution with running time $O(mk^2)$ on an $m\times m$ input image. We provide a parallel GPU implementation of the proposed algorithm  (together with baselines and experiments) in CUDA. Furthermore, we design \emph{Inverse-Flow}, using an inverse of convolution ($f^{-1}$) in forward pass and convolution ($f$) for sampling. Inverse-flow models generate faster samples than standard Normalizing flow models.  

In summary, our contribution includes:
\begin{enumerate}
    \item We designed a fast and parallelized backpropagation algorithm for the inverse of the convolution operation. (see section \ref{sec:bp_inv_conv})
    \item \sloppy Implementation of the proposed backpropagation algorithm for the inverse of convolution on the GPU CUDA. (see section \ref{sec:inv_flow})
    \item We propose a multi-scale flow architecture, \emph{Inverse-Flow}, for fast training of inverse of convolution using our efficient backpropagation algorithm and faster sampling with $k \times k$ convolution. (see section \ref{sec:inv_flow})
    
    \item Benchmarking of \emph{Inverse-Flow} and a small linear, 9-layer flow model on image datasets (MNIST, CIFAR10). (see section \ref{sec:if_results})
    
\end{enumerate}

\begin{figure}
    \centering
    \includegraphics[width=0.79\linewidth]{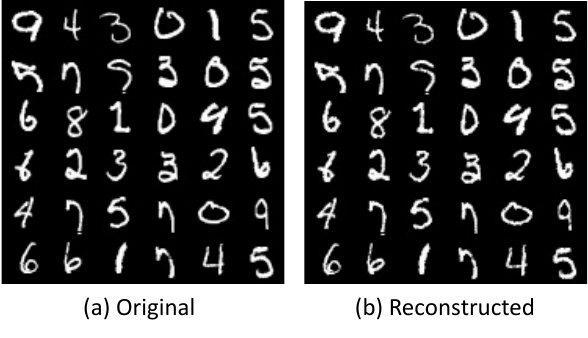}
    \label{fig:recon_img}
    \caption{a). Images from the MNIST dataset. b). Reconstructed images using an Inverse-Flow model based on the inv-conv layer for a forward pass.}
\end{figure}

\section{Related work}

\paragraph{Backpropagation for Inverse of Convolution:}
The backpropagation algorithm performs stochastic gradient descent and effectively trains a feed-forward neural network to approximate a given continuous function over a compact domain. Hoogeboom et al.\citep{hoogeboom2019emerging} proposed invertible convolution, Emerging, generalizing 1x1 convolution from Glow \citep{kingma2018glow}. \citep{finzi2019invertible} proposed periodic convolution with $k \times k$ kernels. Emerging convolution combines two autoregressive convolutions \citep{kingma2016improved}, and parallelization is not possible for its inverse. MaCow \citep{ma2019macow} uses four masked convolutions in an autoregressive fashion to get a receptive field of $3 \times 3$ standard convolution, which leads to slow sampling and training. To the best of our knowledge, this work is the first to propose a backpropagation algorithm for the inverse of convolution. Additionally, it is the first to utilize an inverse Normalizing flow for training and a standard flow for sampling, marking a novel approach in the field.

\paragraph{Normalizing flows:}
NF traditionally relies on invertible specialized architectures with manageable Jacobian determinants \citep{keller2021self}. One body of work builds invertible architectures by concatenating simple layers (coupling blocks), which are easy to invert and have a triangular Jacobian \cite{nagar2021cinc}. Many choices for coupling blocks have been proposed, such as MAF \citep{papamakarios2017masked}, RealNVP \citep{dinh2016density}, Glow \citep{kingma2018glow}, Neural Spline Flows \citep{durkan2019neural}. Self Normalizing Flow (SNF) \citep{keller2021self} is a flexible framework for training NF by replacing expensive terms in the gradient with learning approximate inverses at each layer. Several types of invertible convolution emerged to enhance the expressiveness of NF. Glow has stood out for its simplicity and effectiveness in density estimation and high-fidelity synthesis.

\paragraph{Autoregressive:}
\citep{kingma2016improved} proposes an inverse autoregressive flow and scales well to high-dimensional latent space, which is slow because of its autoregressive nature. \cite{papamakarios2017masked} introduced NF for density estimation with masked autoregressive. Sample generation from autoregressive flows is inefficient since the inverse must be computed by sequentially traversing through autoregressive order \citep{ma2019macow}

\paragraph{Invertible Neural Network:}
\citep{dinh2016density} proposed Real-NVP, which uses a restricted set of non-volume-preserving but invertible transformations. \citep{kingma2018glow} proposed Glow, which generalizes channel permutation in Real-NVP with $1 \times 1$ convolution. 
However, these NF-based generative models resulted in worse sample generation compared to state-of-the-art autoregressive models and are incapable of realistic synthesis of large images compared to GANs \citep{brock2018large} and Diffusion Models. CInC Flow \citep{nagar2021cinc} proposed a fast convolution layer for NF. ButterflyFlow \citep{meng2022butterflyflow} leverages butterfly layers for NF models. FInC Flow \citep{visapp23} leverages the advantage of parallel computation for the inverse of convolution and proposed efficient parallelized operations for finding the inverse of convolution layers and achieving $O(n\times k^2)$ run time complexity. We designed a backpropagation algorithm for the inverse of convolution layers. Then, a multi-scale architecture, Inverse-Flow, is designed using an inverse of convolution for the forward pass and convolution for the sampling pass and backward pass.

\paragraph{Fast Algorithms for Invertible Convolutions}
The complexity of inverting convolutional layers in NF is a key factor in their efficiency. Traditional methods of invertible convolutions rely on sequential operations that can be computationally expensive. We introduce our approach, FInC Flow, which significantly improves the inversion time compared to existing techniques.

CInC Flow \citep{nagar2021cinc} utilized padded CNNs to achieve invertibility, enabling parallel computation for faster Jacobian determinant computation. However, while this method reduces inversion time, it still requires a relatively large number of operations compared to the optimal performance achievable with a more parallelized approach. FInC Flow \cite{visapp23} builds on the padded CNN design of CInC Flow but introduces a parallel inversion algorithm that reduces the number of operations needed for inversion to just $(2n-1)k^2$, where $n$ is the input size and $k$ is the kernel size.

This algorithm is designed to take advantage of the inherent structure in convolutions, allowing for a more efficient inversion process. The key distinction of FInC Flow over existing models like MaCow \citep{ma2019macow}, which also improves inversion times by masking kernels, is the optimization of parallel operations across multiple channels. By dividing the convolution operations channel-wise, we reduce the computational burden and improve both the forward pass and the sampling time.

In contrast to methods like MintNet \citep{song2019mintnet} and SNF \citep{keller2021self}, which approximate the inversion process and thus achieve faster but less precise results, FInC Flow offers exact inversion while maintaining competitive performance. This approach strikes a balance between inversion speed and model expressiveness, achieving results comparable to previous methods on benchmark datasets like CIFAR-10, ImageNet, and CelebA while significantly reducing sampling time. This advancement not only improves the efficiency of NF but also retains its ability to model complex distributions.

\paragraph{Sampling Time:} NF requires large and deep architectures to approximate complex target distributions \citep{cornish2020relaxing} with arbitrary precision. \cite{jung2024normalizing} presents the importance of fast sampling for Normalizing flow models. To model distribution using NF models requires the inverse of a series of functions, the backward pass, which is slow. This creates a limitation of slow sample generation. To address this, we propose Inverse-Flow, which uses convolution (fast parallel operation, $O(k^2)$, $k\times k$ kernel size) for a backward pass and the inverse of convolution for a forward pass.

\section{Fast Parallel Backpropagation for Inverse of a Convolution}
\label{sec:bp_inv_conv}
We assume that the input/output of a convolution is an $m \times m$ image, with the channel dimension assumed to be 1 for simplicity. The algorithm can naturally be extended to any number of channels. We also assume that input to the convolution is padded on the top and left sides with $k-1$ zeros, where $k$ is the kernel size; see Figure \ref{fig:inv_conv}. Furthermore, we assume that the bottom right entry of the convolution kernel is 1, which ensures that it is invertible.

\begin{figure}
    \centering
    \includegraphics[width=0.7\linewidth]{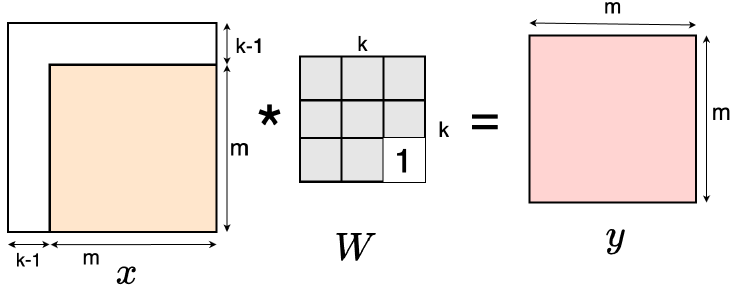}
    \caption{Invertible convolution with zero padding (top, left) on input $x$ and masking of kernel $W_{k,k} = 1$}
    \label{fig:inv_conv}
\end{figure}

The convolution operation is a Linear Operator (in Linear Algebraic terms; see Figure \ref{fig:inv_conv}) on the space of $m \times m$ matrices. Considering this space as column vectors of dimension $m^2$, this operation corresponds to multiplication by an $m^2 \times m^2$ dimensional matrix. Hence, the inverse of convolution is also a linear operator represented by an $m^2 \times m^2$ dimensional matrix. Suppose vectorization of $m \times m$ matrix to $m^2$ is done by row-major ordering; diagonal entries of Linear Operator matrix will be the bottom right entry of kernel, which we have assumed to be 1.

While the convolution operation has fast parallel forward and backpropagation algorithms with running time $O(k^2)$ (assuming there are $O(m^2)$ parallel processors), a naive approach for the inverse of the convolution using Gaussian Elimination requires $O(m^6)$. \citep{visapp23} gave a fast parallel algorithm for the inverse of convolution with running time $O(mk^2)$. In this section, we give a fast parallel algorithm for backpropagation of the inverse of convolution (inv-conv) with running time $O(mk^2)$, see Table \ref{tab:run_time}. Our backpropagation algorithm allows for efficient optimization of the inverse of convolution layers using gradient descent.

\begin{table}[!ht]
    \centering
    \caption{Running times of algorithms for Forward and Backward passes assuming there are enough parallel processors as needed. The forward pass algorithm for the inverse of convolution was improved by \citep{visapp23}. In this work, we give an efficient backward pass algorithm for the inverse of convolution.}
    \begin{tabular}{lrr}
    \toprule
       Layer & Forward & Backpropagation \\ 
    \midrule
    Std. Conv. & $O(k^2)$ & $O(k^2)$ \\
    inv-conv (naive)& $O((m^2)^{3})$ & $O((m^2)^{3})$ \\
    inv-conv. & $ O(mk^2)$  & $\bf O(mk^2)$ \\
    \bottomrule
    \end{tabular}
    \label{tab:run_time}
\end{table}

\paragraph{Notation:} We will denote input to the inverse of convolution (inv-conv) by $y \in \mathbb R^{m^2}$ and output to be $x \in \mathbb R^{m^2}$. We will be indexing $x,y$ using $p = (p_1, p_2) \in \{ 1, \cdots, n\} \times \{ 1, \cdots, n\}$. We define 
$$\Delta(p) = \{ (i,j):  0 \leq p_1 - i, p_2 -j < k  \} \setminus \{ p \}.$$
$\Delta(p)$ informally is  set of all pixels except $p$ which depend on $p$, when convolution is applied with top, left padding. We also define a partial ordering $\leq$ on pixels as follows
$$ p \leq q \quad \Leftrightarrow \quad p_1 \leq q_1 \text{ and } p_2 \leq q_2.$$

The kernel of $k\times k$ convolution is given by matrix $W \in \mathbb R^{k \times k}$. For the backpropagation algorithm for inv-conv, the input is 
$$x \in \mathbb R^{m^2} \text{ and } \frac{\partial L}{\partial x} \in \mathbb R^{m^2},$$ 
where $L$ is the loss function. We can compute $y$ on $O(m^2k^2)$ time using the parallel forward pass algorithm of \citep{visapp23}. The output of backpropagation algorithm is $$\frac{\partial L}{\partial y} \in \mathbb R^{m^2} \text{ and } \frac{\partial L}{\partial W} \in \mathbb R^{k^2}$$
which we call input and weight gradient, respectively. We provide the algorithm for computing these in the next 2 subsections.

\emph{Computing Input Gradients:}
Since $y$ is input to inv-conv and $x$ is output, $y = \text{conv}_W(x)$, and we get the following $m^2$ equations by definition of the convolution operation. 
\begin{equation} \label{eqn:conv}
  y_p = x_p + \sum_{q \in \Delta(p)} W_{(k,k) - p + q} \cdot  x_q 
\end{equation}
Using the chain rule of differentiation, we get that
\begin{equation}
    \frac{\partial L}{ \partial y_p} = \sum_{q} \frac{\partial L}{ \partial x_q} \times \frac{\partial x_q}{ \partial y_p}.
\end{equation}
Hence if we find $\frac{\partial x_q}{ \partial y_p}$ for every pixels $p,q$, we can compute $ \frac{\partial L}{ \partial y_p}$ for every pixel $p$.

\begin{theorem}\label{the:dx_dy}
$$
\frac{\partial x_q}{ \partial y_p} = 
\begin{cases}
\quad    1 - \sum_{q \in \Delta(p)} W_{(k,k) - p + q} \cdot  \frac{\partial x_q}{\partial y_p}  & \text{ if } p = q\\
   \quad 0 & \text{ if } q \not \leq p\\
    - \sum_{r \in \Delta(p)} W_{(k,k) - r} \frac{\partial x_{p-r'}}{\partial y_p} & \text{ otherwise. }
\end{cases}
$$
\end{theorem}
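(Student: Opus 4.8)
The plan is to avoid ever forming or inverting the $m^2 \times m^2$ matrix, and instead to read off every entry $\partial x_q/\partial y_p$ of the inverse Jacobian by differentiating the per-pixel defining relation \eqref{eqn:conv} directly. First I would record the structural facts that make this work. Because the convolution uses only top/left padding, the matrix $M$ of the linear map $y = Mx$ is triangular with respect to the row-major order on pixels (the triangularity established earlier), and the assumption $W_{(k,k)} = 1$ forces every diagonal entry to equal $1$. Hence $M$ is invertible, the inverse map $x = M^{-1}y$ is again linear, and each quantity $\partial x_q/\partial y_p = (M^{-1})_{q,p}$ is well defined; moreover $M^{-1}$ is unit-diagonal and triangular in the same order.

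Next I would derive the recursion. Solving \eqref{eqn:conv} for $x_q$ — legitimate precisely because the coefficient of $x_q$ is $1$ — gives $x_q = y_q - \sum_{q' \in \Delta(q)} W_{(k,k)-q+q'}\, x_{q'}$. Differentiating with respect to $y_p$ and using $\partial y_q/\partial y_p = \delta_{q,p}$ yields
\[ \frac{\partial x_q}{\partial y_p} = \delta_{q,p} - \sum_{q' \in \Delta(q)} W_{(k,k)-q+q'}\, \frac{\partial x_{q'}}{\partial y_p}. \]
Introducing the shift $r = q - q'$, which ranges over the fixed set $\{0,\dots,k-1\}^2 \setminus \{(0,0)\}$ independently of $q$, rewrites the sum as $\sum_r W_{(k,k)-r}\,\partial x_{q-r}/\partial y_p$. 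Taking $q = p$ produces the diagonal branch (value $1$ minus the sum), and $q \neq p$ produces the off-diagonal branch (value minus the sum); these match the first and third cases of the statement. The recursion is well-founded since every $q' \in \Delta(q)$ strictly precedes $q$ in row-major order, so one may compute the entries in increasing order.

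Finally I would establish the vanishing branch, namely $\partial x_q/\partial y_p = 0$ when $p \not\leq q$, by induction on the row-major index of $q$ using the recursion above. In the inductive step $\delta_{q,p} = 0$ because $p \not\leq q$ already forces $p \neq q$; and for each $q' \in \Delta(q)$ we have $q' \leq q$, so $p \leq q'$ would give $p \leq q$ by transitivity, contradicting $p \not\leq q$. Thus $p \not\leq q'$ for every term, the induction hypothesis kills each summand, and the whole expression vanishes. The two nonzero branches are then immediate from the recursion, so I expect this zero-pattern induction to be the main obstacle. The only genuine care it demands is bookkeeping: keeping the partial order $\leq$ distinct from the total row-major order that drives the induction, handling the boundary pixels where $\Delta(q)$ reaches into the padded (zero) region, and reconciling the index conventions of the statement (the internal sums are over $\Delta(q)$, and transitivity of $\leq$ is the load-bearing step).
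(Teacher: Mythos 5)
Your argument is correct and is, at its core, the same argument the paper sketches: implicit differentiation of the defining relation (Equation~\ref{eqn:conv}), yielding the recursion $\frac{\partial x_q}{\partial y_p} = \delta_{q,p} - \sum_{q'\in\Delta(q)} W_{(k,k)-q+q'}\,\frac{\partial x_{q'}}{\partial y_p}$, which is well-founded along the row-major order, plus an ordering argument for the zero pattern. The paper's Case~1 is exactly your recursion specialized to $q=p$, its Case~3 is your reindexed form $r=q-q'$, and its Case~2 is a one-line dependency assertion for which your transitivity induction (if $p\leq q'$ and $q'\leq q$ then $p\leq q$) is precisely the missing rigorous justification; your unit-lower-triangular framing of $M$ likewise supplies the well-definedness of $M^{-1}$ that the paper takes for granted.

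One substantive remark: you prove the vanishing branch under the condition $p \not\leq q$, while the theorem as printed says $q \not\leq p$ --- and your direction is the correct one. Since $x_q = y_q - \sum_{q'\in\Delta(q)}W_{(k,k)-q+q'}\,x_{q'}$, the output $x_q$ can depend only on inputs $y_p$ with $p\leq q$. Concretely, for $k=2$ and kernel $\bigl(\begin{smallmatrix} a & b\\ c & 1\end{smallmatrix}\bigr)$ one computes $x_{(2,2)} = y_{(2,2)} - c\,y_{(2,1)} - b\,y_{(1,2)} + (2bc-a)\,y_{(1,1)}$, so $\partial x_{(2,2)}/\partial y_{(1,1)} = 2bc-a\neq 0$ even though $(2,2)\not\leq(1,1)$, contradicting the printed Case~2, whereas $\partial x_{(1,1)}/\partial y_{(2,2)}=0$ even though the printed Case~3 would apply there. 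In other words, the printed statement contains index slips (the bound variable $q$ shadowing the free $q$ in Case~1, the swapped inequality in Case~2, and the undefined $r'$ together with an offset reading of $\Delta(p)$ in Case~3), and your rewriting --- sums over $q'\in\Delta(q)$, reindexed by $r=q-q'$ --- repairs all three consistently, as you yourself flag when reconciling index conventions. So you have proved the evidently intended, corrected theorem; relative to the paper's own proof your version is strictly more careful, and I see no gaps.
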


\emph{Formal proof:}

The theorem presents computing $\frac{\partial x_q}{\partial y_p}$, which represents how a change in input pixel $y_p$ affects output pixel $x_q$ in the inverse of the convolution operation. Let's break down each case:

\emph{Case 1:} When $p = q$, take partial derivative with respect to $y_p$ on both sides of Equation \ref{eqn:conv} and rearranging. 
$$\frac{\partial x_p}{\partial y_p} = 1 - \sum_{q \in \Delta(p)} W_{(k,k) - p + q} \cdot  \frac{\partial x_q}{\partial y_p}  $$

So if $\frac{\partial x_q}{\partial y_p}$ is known for all $q \leq p$, we can compute $\frac{\partial x_p}{\partial y_p}.$ Since the off-diagonal entries are unrelated in the $\leq$ partial order, we can compute all of them in parallel, provided the previous off-diagonal entries are known.

\emph{Case 2:} From Equation \ref{eqn:conv}, when $q \not\leq p$, we have: $\frac{\partial x_q}{\partial y_p} = 0$. This case uses the partial ordering defined earlier. If $q$ is not less than or equal to $p$ in this ordering, it means that output pixel $x_q$ is not influenced by input pixel $y_p$ in the inverse of the convolution operation \ref{eqn:conv}. Therefore, the derivative is 0.

\emph{Case 3:} For all other cases:

$$\frac{\partial x_q}{\partial y_p} = -\sum_{r \in \Delta(p)} W_{(k,k) - r} \frac{\partial x_{p-r'}}{\partial y_p}$$

\begin{itemize}
    \item $\Delta(p)$ is set of all pixels (except $p$) that depend on $p$ in a regular convolution operation.
    \item $W_{(k,k) - r}$ represents weight in convolution kernel corresponding to relative position of $r$.
    \item $\frac{\partial x_{p-r'}}{\partial y_p}$ is a recursive term, representing how changes in $y_p$ affect $x$ at a different position.
\end{itemize}

The negative sign and summation in this formula account for the operation's inverse nature and the convolution kernel's cumulative effects.

\paragraph{Computing Weight Gradients:}
From Equation \ref{eqn:conv}, we can say that computing the gradient of loss $L$ with respect to weights $W$ involves two key factors. Direct influence: how a specific weight $W_{a}$ in the convolution kernel directly affects output $x$ pixels, and Recursive Influence: how neighboring pixels, weighted by the kernel, indirectly influence output $x$ during the convolution operation. Similarly, to compute  gradient of  loss $L$ w.r.t filter weights \( W \), we apply  chain rule:

\begin{equation} \label{eq:dl_dw}
    \frac{\partial L}{\partial W} = \frac{\partial L}{ \partial x} * \frac{\partial x}{\partial W}
\end{equation}

where \( \frac{\partial L}{\partial x} \) is  gradient of  loss with respect to  output \( x \) and convolution operation is applied between \( \frac{\partial L}{\partial x} \) and  output \( x \). Computing the gradient of loss $L$ with respect to convolution filter weights $W$ is important in backpropagation when updating the convolution kernel during training. Similarly, $\partial L/ \partial W$ can be calculated as \ref{eq:dl_dw} and $\partial x/\partial W$ can be calculated as (\ref{eq:dx_dw}) for each $k_{i, j}$ parameter by differentiating \ref{eqn:conv} w.r.t $W$:

\begin{equation}
    \label{eq-dw}
    \frac{\partial L}{ \partial W_a} =  \sum {\frac{\partial L}{\partial x_q} * \frac{\partial x_q}{\partial W_a}}
\end{equation}

Equation \ref{eq-dw} states that to compute the gradient of the loss with respect to each weight $W_a$, we need to:
\begin{itemize}
    \item Compute how loss $L$ changes with respect to each output pixel $x_q$ (denoted by $\frac{\partial L}{ \partial x_q}$).
    \item Multiply this by gradient of each output pixel $x_q$ with respect to weight $W_a$ (denoted by $\frac{\partial x_q}{ \partial W_a}$)
\end{itemize}

We then sum over all output pixels $x_q$.

\begin{theorem} \label{eq:dx_dw}
$$
\frac{\partial x_q}{\partial W_a} = 
\begin{cases}
    0 & \text{if } a = q \\
    -\sum_{q' \in \Delta_q(a)} W_{q' - a} \cdot \frac{\partial x_{q-q'}}{\partial W_a} - x_{q-a} & \text{if } q > a
\end{cases}
$$
\end{theorem}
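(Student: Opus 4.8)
The plan is to obtain the formula by implicit differentiation of the defining convolution identity (Equation \ref{eqn:conv}), paralleling the derivation of Theorem \ref{the:dx_dy} but differentiating with respect to a kernel weight $W_a$ rather than an input pixel $y_p$. First I would recall that in the inverse-of-convolution layer the output $x$ is determined implicitly from the fixed input $y$ through $y = \text{conv}_W(x)$, so that $x = x(W)$ is a function of the kernel while $y$ is held constant. Writing Equation \ref{eqn:conv} in the relabeled form $y_q = x_q + \sum_{q' \in \Delta(q)} W_{(k,k)-q+q'}\, x_{q'}$ and invoking the masking assumption $W_{(k,k)} = 1$ (so the anchor coefficient is a constant and is never differentiated), I differentiate both sides with respect to a free weight $W_a$. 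Because $y_q$ does not depend on $W$, the left-hand side is zero.

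Next I would apply the product rule to each summand, which splits into two pieces. There is a \emph{direct} piece, arising when the weight being differentiated is the explicit factor $W_{(k,k)-q+q'}$ — this occurs for the unique $q'$ with $(k,k)-q+q' = a$ and contributes the term $-x_{q-a}$ after reindexing — and a \emph{recursive} piece $-W_{q'-a}\,\partial x_{q-q'}/\partial W_a$ coming from the implicit dependence of the neighboring outputs $x_{q'}$ on $W_a$. Collecting these and solving for $\partial x_q/\partial W_a$ yields exactly the Case 2 expression valid for $q > a$. For the base case $a = q$ I would argue that the direct term $x_{q-a} = x_{q-q}$ lands in the zero-padded region (hence vanishes) and that, by the causal lower-triangular structure guaranteed by the top-left padding and masking, no admissible recursive neighbor contributes either, so $\partial x_q/\partial W_a = 0$.

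Finally I would verify well-foundedness of the recurrence: every recursive call $\partial x_{q-q'}/\partial W_a$ refers to a pixel $q - q'$ that is strictly earlier than $q$ in the partial order $\leq$ (since $q' \succ 0$), so the recursion terminates and the entries on a fixed anti-diagonal depend only on strictly earlier ones. This both justifies the recursion and shows it can be evaluated on the same $O(mk^2)$ wavefront-parallel schedule used for the forward pass.

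The main obstacle I anticipate is the index bookkeeping rather than the calculus: correctly matching the kernel-offset index $a$ to pixel positions, pinning down the precise definition of $\Delta_q(a)$ as the restriction of $\Delta$ to offsets that produce valid (nonzero, in-support) recursive neighbors of $q$, and confirming that the direct term emerges as exactly $-x_{q-a}$ with the correct sign after reindexing. Once these conventions are fixed, the differentiation and rearrangement are routine, and the structural (causality and parallelizability) claims follow from the same triangularity argument used in Lemma \ref{lem:lower-triang} and Theorem \ref{the:dx_dy}.
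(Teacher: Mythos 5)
Your proposal is correct and follows essentially the same route as the paper: implicit differentiation of the defining relation $y = \mathrm{conv}_W(x)$ (Equation \ref{eqn:conv}) with respect to $W_a$ holding $y$ fixed, with the product rule splitting the result into the direct term $-x_{q-a}$ and the recursive term $-\sum W_{q'-a}\,\partial x_{q-q'}/\partial W_a$, mirroring the derivation of Theorem \ref{the:dx_dy}. In fact your write-up is more complete than the paper's own two-case sketch, since you make explicit the vanishing of the direct term in the padded region for the base case, the well-foundedness of the recursion under the partial order, and the index conventions behind $\Delta_q(a)$, all of which the paper leaves implicit.
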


\emph{Formal proof:} Computation of $\partial x_q/\partial W_a$ depends on relative positions of pixel $q$ and kernel weight index. 

\emph{Case 1:} When $a = q $, if index of weight matches index of output pixel \ref{eqn:conv}, gradient is 0. This means that weight does not directly influence the corresponding pixel in this case. \\
\emph{Case 2:} When $ q \geq a$,  gradient is computed recursively by summing over neighboring pixel positions $q'$ in convolution window. The convolution kernel weights $W_{q'-a}$ and the shifted pixel value $x_{q-a}$ are used to calculate gradients.

\paragraph{Backpropagation Algorithm for Inverse of Convolution:}

The backpropagation algorithm for the inverse of convolution (inv-conv) computes gradients necessary for training models that use the inv-conv operation for a forward pass. Our proposed Algorithm \ref{algo:bp_cuda} efficiently calculates gradients with respect to both input ( $\frac{\partial L}{\partial Y}$) and convolution kernel ($\frac{\partial L}{\partial K}$) using a parallelized GPU approach.

Given the gradient of loss $L$ with respect to output ($\frac{\partial L}{\partial X}$ ), the algorithm updates input gradient $\frac{\partial L}{\partial Y}$ by accumulating contributions from each pixel in output, weighted by corresponding kernel values. Simultaneously,  kernel gradient $\frac{\partial L}{\partial K}$ is computed by accumulating contributions from spatial interactions between input and output. The process is parallelized across multiple threads, with each thread handling updates for different spatial and channel indices, ensuring efficient execution. This approach ensures that both input and kernel gradients are computed in a time-efficient manner, making it scalable for high-dimensional inputs and large kernels. A fast algorithm is key to enabling gradient-based optimization in models involving the inverse of convolution.

\emph{Complexity of Algorithm \ref{algo:bp_cuda}:} This computes $\frac{\partial L}{ \partial y}$ and $\frac{\partial L}{ \partial y}$ in $O(mk^2)$ utilizing independence of each diagonal of output $x$ and sequencing of $m$ diagonals. Diagonals are processed sequentially, but elements within each diagonal are processed in parallel. Each diagonal computation takes $O(k^2)$ time due to $k \times k$ kernel. This results in a time complexity of total $O(mk^2)$ and represents a substantial improvement over the naive $O(m^6)$ approach. It makes the algorithm highly efficient and practical for deep learning models with inverse convolution layers, even for large input or kernel sizes.

\SetKwComment{Comment}{/* }{ */}

\begin{algorithm}[!t] \caption{Backpropagation Algorithm for Inverse of Convolution (Input and Weight Gradients)} \label{algo:bp_cuda} 
\KwIn{$K$: Kernel of shape $(C, C, k_H, k_W)$\\ $Y$: output of conv of shape $(C, H, W)$\\ 
$\frac{\partial L}{\partial X}$: gradient of shape $(C, H, W)$} 
\KwOut{$\frac{\partial L}{\partial Y}$: gradient of shape $(C, H, W)$\\ $\frac{\partial L}{\partial K}$: gradient of shape $(C, C, k_H, k_W)$}
\textbf{Initialization:} \\
$\frac{\partial L}{\partial Y} \gets 0$ (initialize input gradient to zero)\\
$\frac{\partial L}{\partial K} \gets 0$ (initialize kernel gradient to zero)

\For{$d \gets 0, H + W - 1$}{
    \For{$c \gets 0, C - 1$}{
        \Comment{The below lines of code are executed in parallel on different threads on the GPU for every index $(c, h, w)$ on $d$th diagonal.}
        \For{$k_h \gets 0, k_H - 1$}{
        \For{$k_w \gets 0, k_W - 1$}{
        \For{$k_c \gets 0, C - 1$}{
            \If{pixel $(k_c, h - k_h, w - k_w)$ not out of bounds}{
                \Comment{Compute input gradient for every pixel $(c, h, w)$:}
                $\frac{\partial L}{\partial Y}[c, h, w] \gets \frac{\partial L}{\partial Y}[c, h, w] + \frac{\partial L}{\partial X}[c, h, w] \cdot K[c, k_c, k_H - k_h - 1, k_W - k_w - 1]$
                \Comment{Compute kernel gradient:}
                $\frac{\partial L}{\partial K}[c, k_c, k_h, k_w] \gets \frac{\partial L}{\partial K}[c, k_c, k_h, k_w] + \frac{\partial L}{\partial X}[c, h, w] \cdot X[k_c, h - k_h, w - k_w]$
            }
        }
        }
        }
        \Comment{synchronize all threads}
    }
}

\Return{$\frac{\partial L}{\partial Y}, \frac{\partial L}{\partial K}$}
\end{algorithm}

\section{Normalizing Flows}\label{sec:inv_flow}

Normalizing flows are generative models that enable exact likelihood evaluation. They achieve this by transforming a base distribution into a target distribution using a series of invertible functions.

Let $\mathbf{z} \in \mathcal{Z}$ be a random variable with a simple base distribution $p_Z(\mathbf{z})$ (e.g., a standard Gaussian). A Normalizing flow transforms $\mathbf{z}$ into a random variable $\mathbf{y} \in \mathcal{Y} $ with a more complex distribution $p_Y(\mathbf{y})$ through a series of invertible transformations: $\mathbf{y} = f(\mathbf{z}) = f_1(f_2(\cdots f_K(\mathbf{z})))$. Probability density of transformed variable $\mathbf{y}$ can be computed using change-of-variables formula:
\begin{equation}\label{eq:flow}
p_Y(\mathbf{y}) = p_Z(\mathbf{z}) \left|\det\frac{\partial f^{-1}}{\partial \mathbf{y}}\right| = p_Z(f^{-1}(\mathbf{y})) \left|\det\frac{\partial f}{\partial \mathbf{z}}\right|^{-1},
\end{equation}

where $\left|\det\frac{\partial f}{\partial \mathbf{z}}\right|$ is absolute value of determinant of Jacobian of $f$.

This relationship (\ref{eq:flow}) can be modeled as $y = f_\theta(z)$, called the change of variable formula, where $\theta$ is a set of learnable parameters. This formula enables us to compute the likelihood of $y$ as:
\begin{equation}
    \log p_Y(y) = \log p_Z(f_\theta(y)) + \log \left| \det \left( \frac{\partial f_\theta(y)}{\partial y} \right) \right|,
\end{equation}

where second term, $\log \left| \det \left( \frac{\partial f_\theta(y)}{\partial y} \right) \right|$, is log-determinant of Jacobian matrix of transformation $f_\theta$. This term ensures volume changes induced by transformation are properly accounted for in likelihood. For invertible convolutions, which are a popular choice for constructing flexible Normalizing Flows, the complexity of computing the Jacobian determinant can be addressed by making it a triangular matrix with all diagonal entries as $1$, and the determinant will always be one.

In this work, we leverage the fast inverse of convolutions for a forward pass (inv-conv = $f_{\theta}$) and convolution for a backward pass, and design \emph{Inverse-Flow} model to generate fast samples. To train  Inverse-Flow, we use our proposed fast and efficient backpropagation algorithm for the inverse of convolution.
\begin{figure}[!ht]
    \centering
    \includegraphics[width=0.7\linewidth]{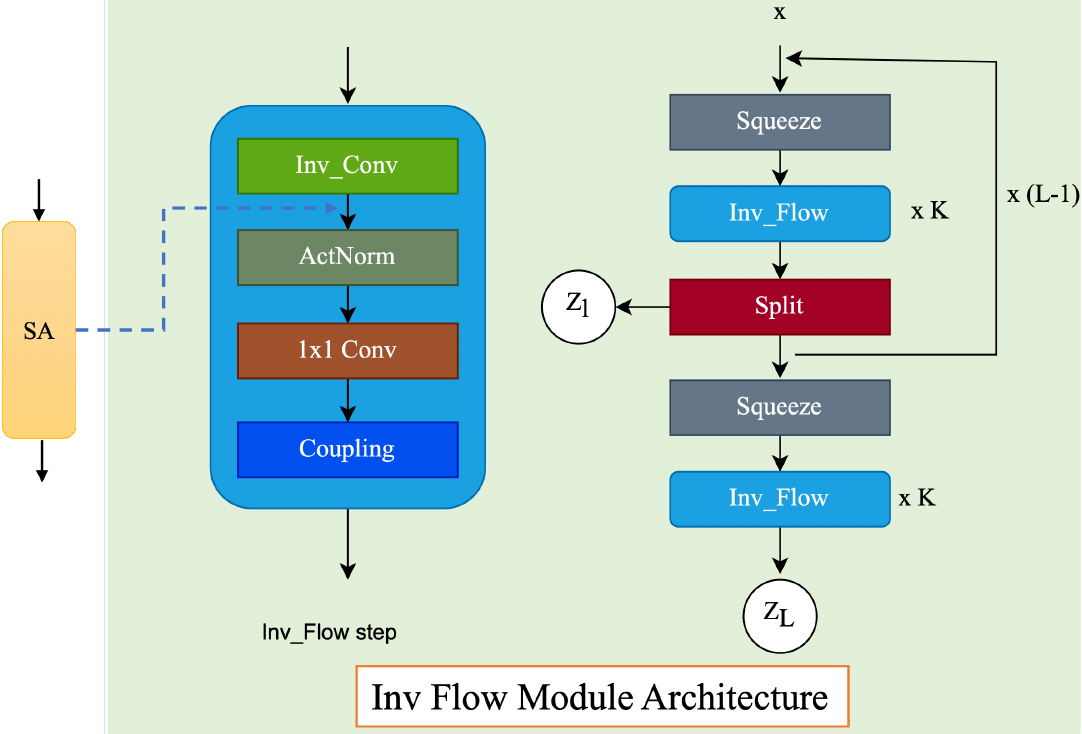}
    \caption{Multi-scale architecture of Inverse-Flow model and Inv Flow step.}
    \label{fig:ms_if_arch}
\end{figure}

\paragraph{Inverse-Flow Architecture:}
Figure \ref{fig:ms_if_arch} shows an architecture of Inverse-Flow. Designing flow architecture is crucial to obtaining a family of bijections whose Jacobian determinant is tractable and whose computation is efficient for forward and backward pass. Our model architecture resembles the architecture of Glow \citep{kingma2018glow}. Multi-scale architecture involves a block of Squeeze, an $Inv\_Flow$ Step repeated $K$ times, and a Split layer. The block is repeated $L - 1$ a number of times. A Squeeze layer follows this, and finally, the $Inv\_Flow$ Step is repeated $K$ times. At the end of each Split layer, half of the channels are ’split’ (taken away) and modeled as Gaussian distribution samples. These split channels are latent vectors. The same is done for output channels. These are denoted as $Z_{L}$ in Fig-\ref{fig:ms_if_arch}. Each $Inv\_Flow$ Step consists of an $Inv-Conv$ layer, an Actnorm Layer, and a $1 \times  1$ Convolutional Layer, followed by a Coupling layer.

\emph{$Inv\_Flow$ Step:} first we consider inverse of convolution and call it Inv\_Conv layer. Figure-\ref{fig:ms_if_arch} left visualizes the inverse of $k \times k$ convolution (Inv-Conv) block followed by Spline Activation layer.

\emph{SplineActivation (SA):} \cite{bohra2020learning} introduced a free-form trainable activation function for deep neural networks. We use this layer to optimize the Inverse-Flow model. Fig-\ref{fig:ms_if_arch}, leftmost: SA layer is added in Inv\_Flow step after Inv\_Conv block.

\emph{Actnorm:} Next, Actnorm, introduced in \citep{kingma2018glow}, acts as an activation normalization layer similar to that of a batch normalization layer. Introduced in Glow, this layer performs an affine transformation of the input using scale and bias parameters per channel.

\emph{ $1  \times 1$ Convolutional:} this layer introduced in Glow does a $1 \times 1$ convolution for a given input. Its log determinant and inverse are very easy to compute. It also improves the effectiveness of coupling layers.

\emph{Coupling Layer:} RealNVP \citep{dinh2016density} introduced a layer in which input is split into two halves. The first half remains unchanged, and the second half is transformed and parameterized by the first half. The output is the concatenation of the first half and the affine transformation by functions parameterized by the first half of the second half. The coupling layer consists of a $3 \times 3 $ convolution followed by a $1 \times 1$ and a modified $3 \times 3$ convolution used in Emerging.

\emph{Squeeze:} this layer takes features from spatial to channel dimension \citep{behrmann2019invertible}, i.e., it reduces feature dimension by a total of four, two across height dimension and two across width dimension, increasing channel dimension by four. As used by RealNVP, we use a squeeze layer to reshape feature maps to have smaller resolutions but more channels.

\emph{Split:} input is split into two halves across channel dimensions. This retains the first half, and a function parameterized by the first half transforms the second half. The transformed second half is modeled as Gaussian samples are latent vectors. We do not use the checkerboard pattern used in RealNVP and many others to keep the architecture simple.

\paragraph{Inverse-Flow Training:}
During training, we aim to learn the parameters of invertible transformations (including invertible convolutions) by maximizing the likelihood of data. Given input data $y$ and a simple base distribution $p_z$ (e.g., a standard Gaussian distribution), the training process aims to find a sequence of invertible transformations such that: $z = \text{inv-conv}(y)$, where $z$ is a latent vector from base distribution and $\theta$ represents the model. The likelihood of data under the model is computed using the change of variables formula:
    $$\log_{p_Y}(y) = \log_{p_z}(\text{inv-conv}(y)) 
    + \log\Big|\det(\frac{\partial \text{inv-conv}(y)}{\partial y})\Big|$$
Here $\det(\frac{\partial \text{inv-conv}(y)}{\partial y})$ represents a Jacobian matrix of transformation, which is easy to compute for \emph{inv-conv}.
\paragraph{Sampling for Inverse-flow}

We use the reverse process to generate samples from the model after training: Sample from the base distribution $z \sim p_{z}(z)$ from a Gaussian distribution. Apply the inverse of the learned transformation to get back data space: 
$y =  \text{conv}_{\theta}(z) $
This process involves performing the inverse of all transformations in the flow, including \emph{inv-conv}. This sampling procedure ensures that generated samples are drawn from the distribution that the model has learned during training, utilizing the invertible nature of convolutional layers.

\section{Results} \label{sec:if_results}
In this section, we compare the performance of Inverse-Flow against other flow architectures. We present Inverse-Flow model results for bit per-dimension ($\log$-likelihood), sampling time (ST), and forward pass time (FT) on two image datasets. To test the modeling of Inverse-Flow, we compare bits-per-dimension (BPD). To compare ST, we generate 100 samples for each flow setting on a single \emph{NVIDIA GeForce RTX 2080 Ti GPU} and take an average of 5 runs after warm-up epochs. For comparing FT, we present forward pass time with a batch size of $100$, averaging over 10 batch runs after warm-up epochs. Due to computation constraints, we train all models for 100 epochs, compare BPD with other state-of-the-art models, and show that Inverse-Flow outperforms based on model size and sampling speed. 

\begin{table}[ht]
    \centering
        \begin{tabular}{lrrrrr}
        \toprule
        \textbf{Method} & \textbf{ST (ms)}  & \textbf{FT (ms)}& \textbf{NLL} & \textbf{BPD} & \textbf{param (M)} \\
        \midrule
        Emerging     & 332.7 $\pm 2.7$  & 121.0 $\pm 1.5$ & 630 & 1.12 & 0.16 \\
        FIncFlow     & 47.3 $\pm 2.3$  & 95.1 $\pm 2.5$ & 411 & 0.73 & 5.16 \\
        SNF          & 33.5 $\pm 2.2$ & 212.5 $\pm 7.3$ & 557   & 1.03 & 1.2\\
        Inverse-Flow & \textcolor{blue}{12.2 $\pm 1.1$} & \textcolor{blue}{77.9 $\pm 1.3$} & \textcolor{blue}{350}  & \textcolor{blue}{0.62} & \textcolor{blue}{0.6}\\
        \bottomrule
        \end{tabular}
    \caption{Performance comparison  for MNIST dataset with $4$ block size and $2$ blocks, small model size. ST = sampling time, FT = Forward pass, NLL is negative-$\log$-likelihood. All times are in milliseconds (ms) and parameters in millions (M).}
        
    \label{tab:M_st_2_4}
\end{table}

\begin{table}[ht]
    \centering
    \begin{tabular}{lrrrrr}
        \toprule
        \textbf{Method} & \textbf{ST} & \textbf{NLL} & \textbf{BPD} & \textbf{Param} & \textbf{Inverse}\\
        \midrule
        SNF          & 99 $\pm 2.1$ & 699 &  1.28  & 10.1  & approx \\
        FIncFlow     & 90 $\pm 2.2$ & 655 & 1.15  &  10.2 & exact \\
        MintNet      & 320$\pm 2.8$ & 630 & 0.98 & 125.9 &  approx \\
        Emerging     & 814$\pm 6.2$ & 640 & 1.09 & 11.4 &  exact \\
        Inverse-Flow & \textcolor{blue}{52 $\pm 1.3$} & 710 & 1.31 & \textcolor{blue}{1.6}  & exact \\
        \bottomrule
        \end{tabular}
    \caption{Performance comparison for MNIST with block size ($K = 16$) and number of blocks ($L = 2$).}
    \label{tab:M_st_2_16}
\end{table}

\paragraph{Modeling and Sample time for MNIST:}
We compare sample time (ST) and number of parameters for small model architecture ($L = 2$, $K = 4$) on small image datasets, MNIST \citep{lecun1998gradient} with image size $1 \times28\times28$ in Table \ref{tab:M_st_2_4}. It may not be feasible to run huge models in production because of the large computations. Therefore, it is interesting to study the behavior of models when they are constrained in size. So, we compare Inverse-Flow with other Normalizing flow models with the same number of flows per level ($K$), for $K = 4, 16$, and $L = 2$. In Table \ref{tab:M_st_2_4}, Inverse-Flow demonstrates the fastest ST of $12.2$, significantly outperforming other methods. This advantage is maintained in Table \ref{tab:M_st_2_16}, where Inverse-Flow achieves the second-best ST of $52\pm 1.3$, only behind SNF but with a much smaller parameter count. Inverse-Flow gives competitive forward time. Table \ref{tab:M_st_2_4} shows that Inverse-Flow has a best forward time of $77.9 $, indicating efficient forward pass computations compared to other methods. 

In Table \ref{tab:M_st_2_4}, Inverse-Flow achieves the lowest NLL ($350$) and BPD ($0.62$), suggesting superior density estimation and data compression capabilities for the MNIST dataset with a small model size. Inverse-Flow consistently maintains a low parameter count for all model sizes. Table \ref{tab:M_st_2_4} uses only $0.6$M parameters, which is significantly less than FInc Flow (5.16M) while achieving better performance. In Table \ref{tab:M_st_2_16}, Inverse-Flow has the smallest model size among all methods, demonstrating its efficiency. Inverse-Flow consistently shows strong performance across multiple metrics (ST, FT, NLL, BPD) while maintaining a compact model size. The following observations highlight Inverse-Flow's efficiency in sampling, density estimation, and parameter usage, making it a competitive method for generative modeling on the MNIST dataset.

For small linear flow architecture, our Inv\_Conv demonstrates the best sampling time of $19.7 \pm 1.2$, which is significantly faster than all other methods presented in Table \ref{tab:if_linear}. This indicates that Inv\_Conv offers superior efficiency in generating samples from the model, which is crucial for many practical applications of generative models. Inv\_Conv achieves the fastest forward time of $100$, outperforming all other methods. It has the smallest parameter count of 0.096 million, making it the most parameter-efficient approach. This combination of speed and compactness suggests that Inv\_Conv offers an excellent balance between computational efficiency and model size, which is valuable for deployment in resource-constrained environments or applications requiring real-time performance.

\begin{table}[ht]
    \centering
        \begin{tabular}{lrrrr}
        \toprule
        \textbf{Method} & NLL & \textbf{ST}  & \textbf{FT} & \textbf{Param} \\
        \midrule
        Exact Conv.       & 637.4 $\pm 0.2$ & 36.5 $\pm 4.1$ & 294 & 0.103 \\
        Exponential Conv. & 638.1 $\pm 1.0$ & 27.5 $\pm 0.4$ & 160 & 0.110 \\
        Emerging Conv.    & 645.7 $\pm 3.6$ & 26.1 $\pm 0.4$ & 143 & 0.103 \\
        SNF Conv.         & 638.6 $\pm 0.9$ & 61.3 $\pm 0.3$ & 255 & 0.364 \\
        Inv\_Conv (our)  & 645.3 $\pm 1.2$ & \textcolor{blue}{19.7 $\pm 1.2$} & \textcolor{blue}{100} & \textcolor{blue}{0.096} \\
        \bottomrule
        \end{tabular}
        \caption{Runtime comparison of small planer models with $9$ layers with different invertible convolutional layers for MNIST.} 
    \label{tab:if_linear}
\end{table}

\begin{table}[ht]
    \centering
    \begin{tabular}{lrrrr}
    \toprule
    \textbf{Method} & \textbf{BPD}  & \textbf{ST} & \textbf{FT} & \textbf{Param} \\
    \midrule
    SNF              & 3.47  & 199.0 $\pm 2.2$ & 81.8 $\pm 3.6$  & 0.446\\
    Woodbury         & 3.55  & 2559.4 $\pm 10.5$ & 31.3 $\pm 1.5$ & 3.125 \\
    FIncFlow         & 3.52  & 47.3 $\pm 2.3$ & 125.5 $\pm 4.2$ & 0.589 \\
    Butterfly Flow   & 3.36  & 155 $\pm 4.6$ & 394.6 $\pm 3.4$ & 3.168 \\
    Inverse-Flow     & 3.56  & \textcolor{blue}{23.2 $\pm 1.3$} & 250.2 $\pm 2.9$ & \textcolor{blue}{0.466} \\
    \bottomrule
    \end{tabular}
    \caption{Performance comparison for CIFAR10 dataset with $L=2$ blocks and block size of $K=4$.}
    
    \label{tab:C_bpd_2_4}
\end{table}

\paragraph{Modeling and Sample time for CIFAR10:} 

In Table \ref{tab:C_bpd_2_4}, Inverse-Flow demonstrates the fastest sampling time of $23.2 \pm 1.3$, significantly outperforming other methods. This advantage is maintained in Table \ref{tab:bpd_2_16_C}, where Inverse-Flow achieves the second-best sampling time of $91.6 \pm 6.5$ among methods with exact inverse computation, only behind SNF, which uses an approximate inverse. While not the fastest in forward time, Inverse-Flow shows balanced performance. In Table \ref{tab:C_bpd_2_4}, its forward time of 250.2 ± 2.9 is in the middle range. In Table \ref{tab:bpd_2_16_C}, its forward time of $722 \pm 7.0$ is competitive with other exact inverse methods.

While not the best, Inverse-Flow maintains competitive BPD scores. In Table \ref{tab:C_bpd_2_4}, it achieves $3.56$ BPD, which is comparable to other methods. In Table \ref{tab:bpd_2_16_C}, its BPD of $3.57$ is close to the performance of other exact inverse methods. Inverse-Flow consistently maintains a low parameter count. In Table \ref{tab:C_bpd_2_4}, it uses only 0.466M parameters, which is among the lowest. In Table \ref{tab:bpd_2_16_C}, Inverse-Flow has the second-smallest model size ($1.76$M param) among methods with exact inverse computation, demonstrating its efficiency. Inverse-Flow demonstrates a good balance between sampling speed and BPD. Comparing Tables \ref{tab:C_bpd_2_4} and \ref{tab:bpd_2_16_C}, we can see that Inverse-Flow scales well when increasing the block size from 4 to 16. It maintains competitive performance across different model sizes and complexities. Table \ref{tab:bpd_2_16_C} highlights that Inverse-Flow provides exact inverse computation, a desirable property shared with several other methods like MaCow, CInC Flow, Butterfly Flow, and FInc Flow.

\begin{table}[!ht]
    \centering
    \begin{tabular}{lrrrr}
    \toprule
    \textbf{Method} & \textbf{BPD} & \textbf{ST} & \textbf{FT} & \textbf{Param} \\
    \midrule
    SNF                 & 3.52 & 16.8 $\pm 2.7$ & 609 $\pm 5.4$ & 1.682  \\
    MintNet             & 3.51 & $25.0^*$ $\pm 1.5$ & 2458 $\pm 6.2$ & 12.466  \\
    \hline
    Woodbury            & 3.48 & 7654.4 $\pm 13.5$ & 119 $\pm 2.5$ & 12.49 \\
    MaCow               & 3.40 & 790.8 $\pm 4.3 $ & 1080 $\pm 6.6 $ & 2.68 \\
    CInC Flow           & 3.46 & 1710.0 $\pm 9.5$ & 615 $\pm 5.0$ & 2.62 \\
    Butterfly Flow      & 3.39 & 311.8 $\pm 4.0 $ & 1325 $\pm 7.5$ & 12.58\\
    FInc Flow           & 3.59 & 194.8 $\pm 2.5$ & 548 $\pm 6.2$ & 2.72 \\
    Inverse-Flow        & 3.57 &  \textcolor{blue}{91.6 $\pm 6.5$} & 722 $\pm 7.0$ & \textcolor{blue}{1.76} \\
    \bottomrule
    \end{tabular}
    \caption{Performance comparison for CIFAR dataset with block size ($K = 16$) and number of blocks ($L = 2$). SNF uses approx for inverse, and MintNet uses autoregressive functions. *time in seconds.}
    
\label{tab:bpd_2_16_C}
\end{table}

\section{Summary}
In this work, we give a fast and efficient backpropagation algorithm for the inverse of convolution. Also, we proposed a flow-based model, Inverse-Flow, that leverages convolutions for efficient sampling and the inverse of convolutions for learning. Our key contributions include a fast backpropagation algorithm for the inverse of convolution, enabling efficient learning and sampling; a multi-scale architecture accelerating sampling in Normalizing flow models; a GPU implementation for high-performance computation; and extensive experiments demonstrating improved training and sampling timing. Inverse-Flow significantly reduces sampling time, making it competitive with other generative approaches. Our efficient inverse convolution backpropagation opens new avenues for training more expressive and faster Normalizing flow models. The publicly available implementation, optimized for GPU performance, facilitates further research and computer vision applications. Inverse-Flow represents a substantial advancement in efficient, expressive generative modeling, addressing key computational challenges and expanding the practical applicability of flow-based models. This work contributes to the ongoing development of generative models and their real-world applications, positioning flow-based approaches as powerful tools in the machine-learning landscape.

 \chapter{Image Super-resolution and Normalizing Flow layers}\label{chap:affine_sr}
\section{Introduction}

\emph{Single image super-resolution (SR)} is an active research area focused on reconstructing high-resolution (HR) images from their low-resolution (LR) counterparts. Since the pioneering work of SRCNN \cite{dong2015image}, deep convolutional neural network (CNN) approaches have significantly advanced the field, leading to numerous breakthroughs and innovations in SR technology. \emph{Deep generative models} have demonstrated remarkable performance in image generation, particularly with autoregressive models \cite{salimans2017pixelcnn++} and Generative Adversarial Networks (GANs) \cite{goodfellow2014generative}. Additionally, Variational Autoencoders (VAEs) and Normalizing Flows (NFs) \cite{kingma2018glow, nagar2021cinc} have been capable of producing high-quality samples. However, a significant drawback of deep models like StableSR \cite{wang2023exploiting} is their substantial computational and training time requirements. To address this, transfer learning can be employed, leveraging pre-trained models to accelerate the process. \emph{Adapting generative priors and Normalizing Flow layers} for real-world image super-resolution presents an intriguing yet challenging problem due to the complexities of model size and image generation time. In this work, we propose a novel approach to address these issues. Our method involves fine-tuning pre-trained diffusion models without making explicit assumptions about degradation. We tackle key challenges such as model size and the number of large trainable parameters needed for fine-tuning Stable Diffusion for super-resolution (SR) by introducing simple yet effective modules. Specifically, we use an encoder and decoder with affine-coupling layers, incorporating more parameters for focused SR training. Our Affine-StableSR model (see Figure \ref{affine-stablesr}) serves as a robust baseline, paving the way for future research in integrating diffusion priors and Normalizing Flow layers for image restoration tasks.

\begin{enumerate}
\item We propose an Affine-StableSR framework that uses the encoder and decoder with affine-coupling layers and pre-trained weights for the stable diffusion model. (see section \ref{para:Affine-SR})
\item We fine-tuned the pre-trained weights of Autoencoder from the StableSR\cite{rombach2022high} for the diffusion model (U-net). (see section \ref{sec:exp_train})
\item We construct new datasets named \emph{'four-in-one}, which contain images from four different sets of images to increase the diversity in the training images, and our experiments show the advantage of a combined dataset for training an SR model. (see section \ref{sec:exp_train})
\end{enumerate}


\begin{figure*}[!ht] 
  \centering
  \includegraphics[width=0.99\linewidth]{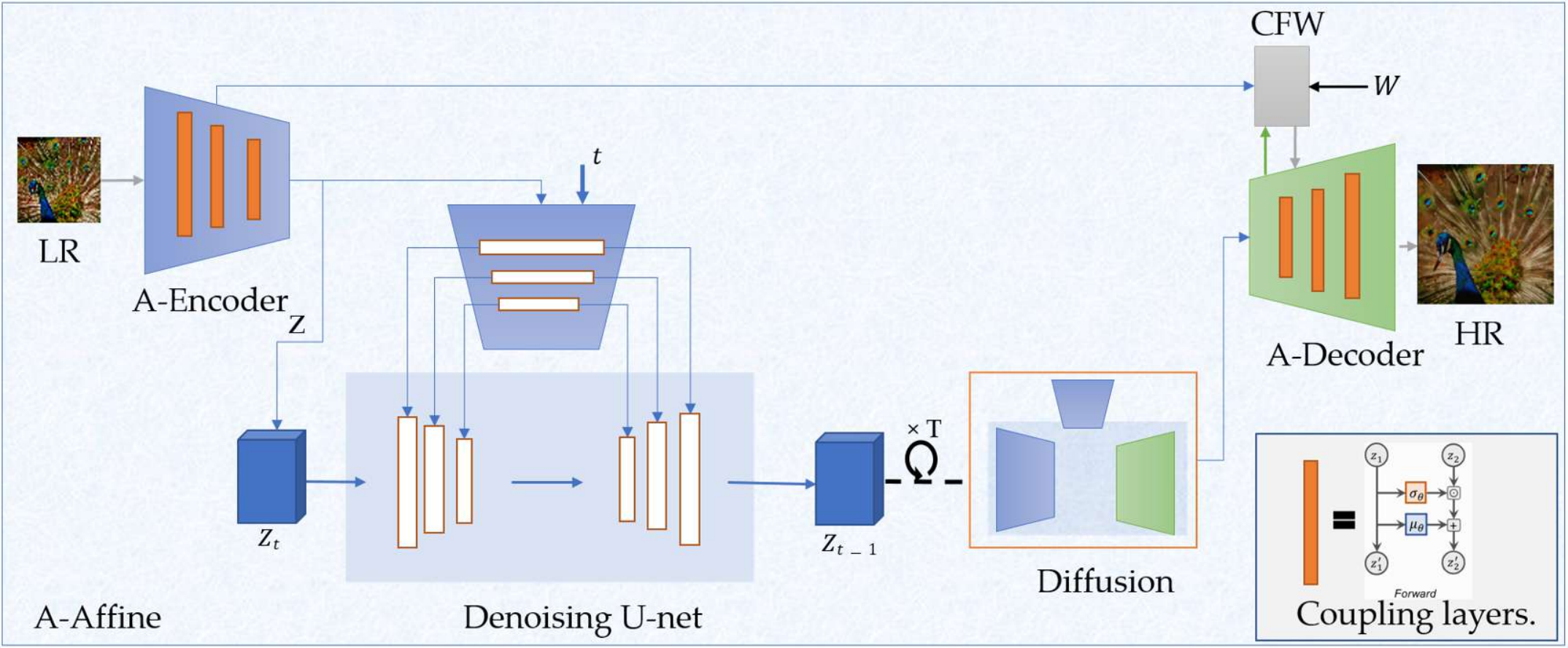}
  \caption{Flow diagram of the proposed SR method, Affine-StableSR. We replaced the ResNet Block with Affine Coupling layers (AffineNet blocks).}
  \label{affine-stablesr}
\end{figure*}



\section{Related work}
\label{sec:related}

\paragraph{Single Image SR:}
One of the first successes in image super-resolution (SR) is SR-CNN \cite{dong2015image}, which utilizes a three-layer CNN structure and employs a bicubic interpolated low-resolution image as input to the network. Building on this, Very Deep SR (VDSR) \cite{kim2016accurate} introduced additional CNN layers, significantly improving SR-CNN. The introduction of Generative Adversarial Networks (GANs) further advanced the field, with SRGAN \cite{ledig2017photo} being a notable example of a GAN applied to image super-resolution. For deep architectures, numerous works have adapted ResNet-based models for SR, such as SR-ResNet \cite{ledig2017photo} and SR-DenseNet \cite{tong2017image}. One frequently used CNN-based super-resolution model in comparative studies is EDSR \cite{lim2017enhanced}, which utilizes ResNets without batch normalization in the residual blocks. EDSR achieved impressive results, securing first place in the NTIRE2017 Super-Resolution Challenge \cite{Timofte_2017_CVPR_Workshops}.

\paragraph{Diffusion Models and SR:}
Denoising Diffusion Probabilistic Models (DDPM) \cite{ho2020denoising} is a new generative framework designed to model complex data distributions and generate high-quality images. Building on DDPM, the Denoising Diffusion Implicit Model (DDIM) \cite{song2020denoising} introduces non-Markovian diffusion processes to accelerate the sampling process. For conditional generation, \cite{dhariwal2021diffusion} conditions the pre-trained diffusion model during sampling with the gradients of a classifier, while an alternative approach explicitly introduces conditional information as input to the diffusion models \cite{saharia2022photorealistic}. Leveraging conditional generation, diffusion models have been successfully applied to various image restoration tasks \cite{fei2023generative}, including denoising, deblurring, inpainting, and reconstruction \cite{zhu2023denoising, lugmayr2022repaint, kawar2023imagic, nagar2023adaptation}. However, to our knowledge, only \cite{li2023ntire} has applied a lightweight diffusion model to image enhancement in a supervised manner. In contrast, our proposed model does not train the diffusion model from scratch but instead utilizes the inherent natural image priors in a pre-trained diffusion model. 

\paragraph{Vision Transformer:}
Recently, the Transformer architecture \cite{vaswani2017attention} has garnered significant attention in the computer vision community due to its success in natural language processing. For instance, ResShift employs the VQ-GAN (Taming Transformer) \cite{esser2021taming} for image super-resolution, while SwinIR \cite{liang2021swinir} utilizes Transformers for image restoration. Although vision Transformers have demonstrated their superiority in modeling long-range dependencies \cite{touvron2021training}, many studies indicate that convolutional layers can enhance Transformers' visual representation capabilities. However, using Transformers remains computationally expensive, and sample generation can be slow.

\paragraph{Normalizing flows:}
Flow-based methods, known as normalizing flows, were first introduced in \cite{dinh2014nice, germain2015made} to learn an exact mapping from data distribution to latent distribution by leveraging the tractability of exact log-likelihood. Later developments like Glow \cite{kingma2018glow} and CInC Flow \cite{nagar2021cinc} demonstrated high-quality and fast image sampling by FInC Flow \cite{visapp23}. Unlike GANs \cite{goodfellow2014generative}, which learn an implicit density, normalizing flows explicitly compute the probability density. In the context of super-resolution (SR), SRFlow \cite{lugmayr2020srflow} employed a conditional normalizing flow method with affine coupling layers, initially limited to $\times2$ SR tasks. SRFlow has since outperformed state-of-the-art GAN-based approaches in both PSNR and perceptual quality metrics, and it has been successfully applied to $\times4$ and $\times8$ SR tasks, generating diverse SR results.

However, flow-based models are constrained by the need to maintain the same input and output dimensions, which complicates changing the resolution of input images and adds computational complexity. Our proposed Affine-StableSR addresses these issues by applying unsupervised super-resolution methods that can accommodate inputs of arbitrary size.

\section{Method} 
\label{sec:method}
This section introduces single-image super-resolution (SR) methods that generate high-quality output images. We use StableSR \cite{wang2023exploiting} as our baseline, as it has demonstrated strong performance and can be applied to input images of any size for a single-image SR. Normalizing Flow models map a simple distribution to a complex distribution using the \emph{Change of Variables Formula} \cite{dinh2014nice}. For Affine-StableSR, we do not need the inverse part ($z = {f}^{-1}(x)$) of the change of variable, and this gives the freedom of using the Affine-coupling layer directly as the standard convolution layers. Affine Coupling \cite{dinh2014nice} (see Figure \ref{fig:affine_vs_resnet}) is a method for implementing a normalizing flow (where we stack a sequence of invertible bijective transformation functions). Affine coupling is one of these bijective transformation functions. Specifically, it is an example of a reversible transformation where the forward function, the reverse function, and the log-determinant are computationally efficient. For the forward function, we split the input dimension into two parts:

\begin{equation}
	x_1, x_2 = split(x, dim=3, chunks=2) 
\end{equation}

The first part (passive, $x_1$) stays the same. In contrast, the second part (active, $x_2$) undergoes an affine transformation, where the parameters for this transformation are learned using the first part ($x_1$) being put through a convolution network. Together, we have:

\begin{equation}
    \begin{aligned}
        s, t = Conv\_2D(x_1) \\
        z_1 = x_1,
        z_2 = (x_2 * s) + t, \\
        z = concat(z_1, z_2)
    \end{aligned}
\end{equation}

We extend the powerful StableSR towards practical super-resolution (SR), which is faster and smaller. StableSR is currently not suitable for real-life applications. Reducing the total number of parameters while increasing trainable parameters helps improve the fine-tuning of Stable Diffusion for super-resolution. Permuting the channels to ensure learning from all channels. Efficient parameter decomposition method. Concept of flow indication embedding. Parameter-efficient solution for scalable NFs with significant sublinear parameter complexity
\paragraph{AffineNet v/s ResNet block:}

We used an Affine coupling layer instead of expensive ResNet blocks to reduce model complexity. Affine net splits its input into passive ($x_b$) and active ($x_a$) dimensions. Improve generalization and robustness to distribution shift. Typically trained using the Maximum Likelihood (ML) loss. Equivalent to the Kullback-Leibler (KL) divergence. Conservation of probability.

\begin{figure}[!pht]
        \centering
	\includegraphics[width=0.89\linewidth ]{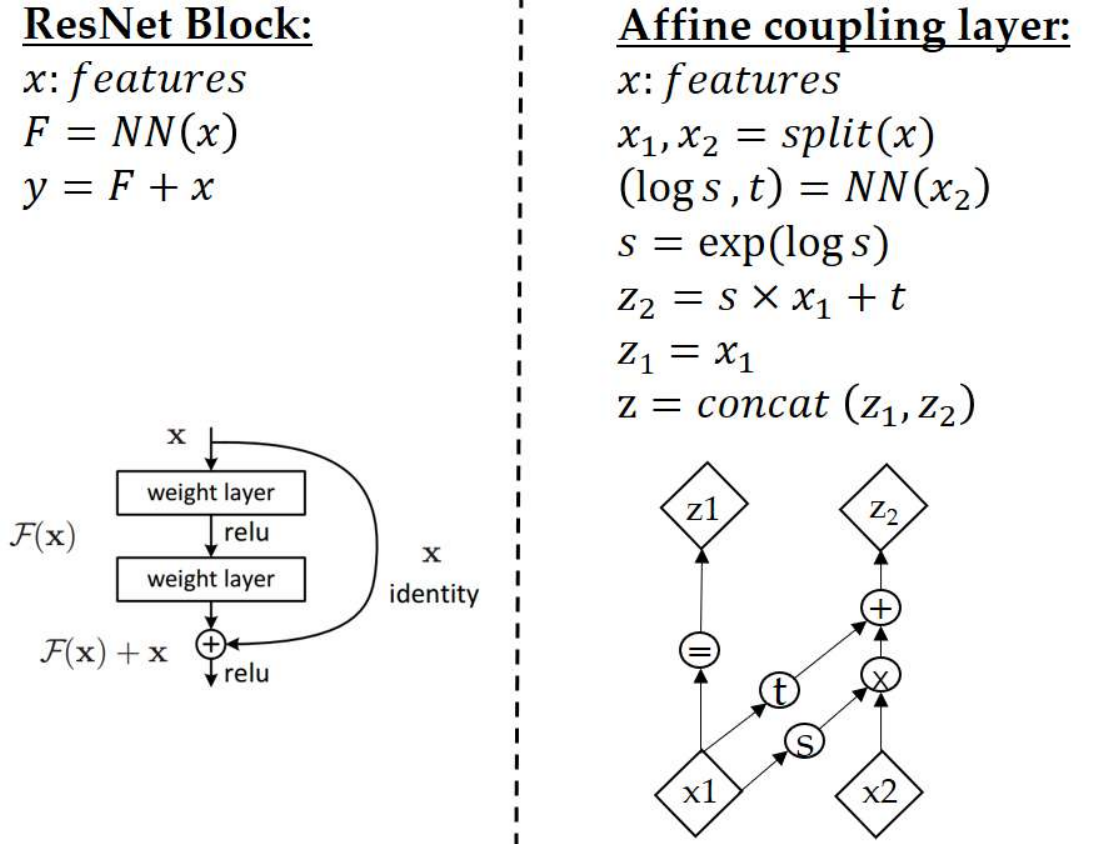}
	\caption{ResNet block vs AffineNet block. Computational graph of an Affine Coupling layer, where $s$ and $t$ are the convolution outputs.}
	\label{fig:affine_vs_resnet}
\end{figure}

\paragraph{Affine-encoder and Affine-decoder:}

In our proposed Affine-Encoder and Affine-Decoder for Latent Diffusion Models (LDM) (see Figure \ref{fig:affine_enc_dec_arch}), we integrate Affine nets (utilizing affine-coupling layers) in place of the standard down-sample and up-sample layers in the encoder and decoder. Normalizing Flow layers typically require the constraint of $input\_size = output\_size$. The StableSR model uses pre-trained weights for the encoder and decoder. However, in our Affine-StableSR, we freeze the encoder and decoder layers except for the Affine net. This strategy focuses on training the Affine net within the encoder and decoder, enhancing their performance for image super-resolution tasks. This focused training helps the latent vector to capture features relevant to super-resolution better and similarly aids the decoder in generating high-quality images. We achieve a reduced model size and fewer trainable parameters by replacing the conventional layers with affine layers. We train the Affine-Encoder and Affine-Decoder for the super-resolution task in a supervised manner, ensuring they are optimized for this specific application.

\begin{figure}[!pht]
    \centering
	\includegraphics[width=0.9751\linewidth ]{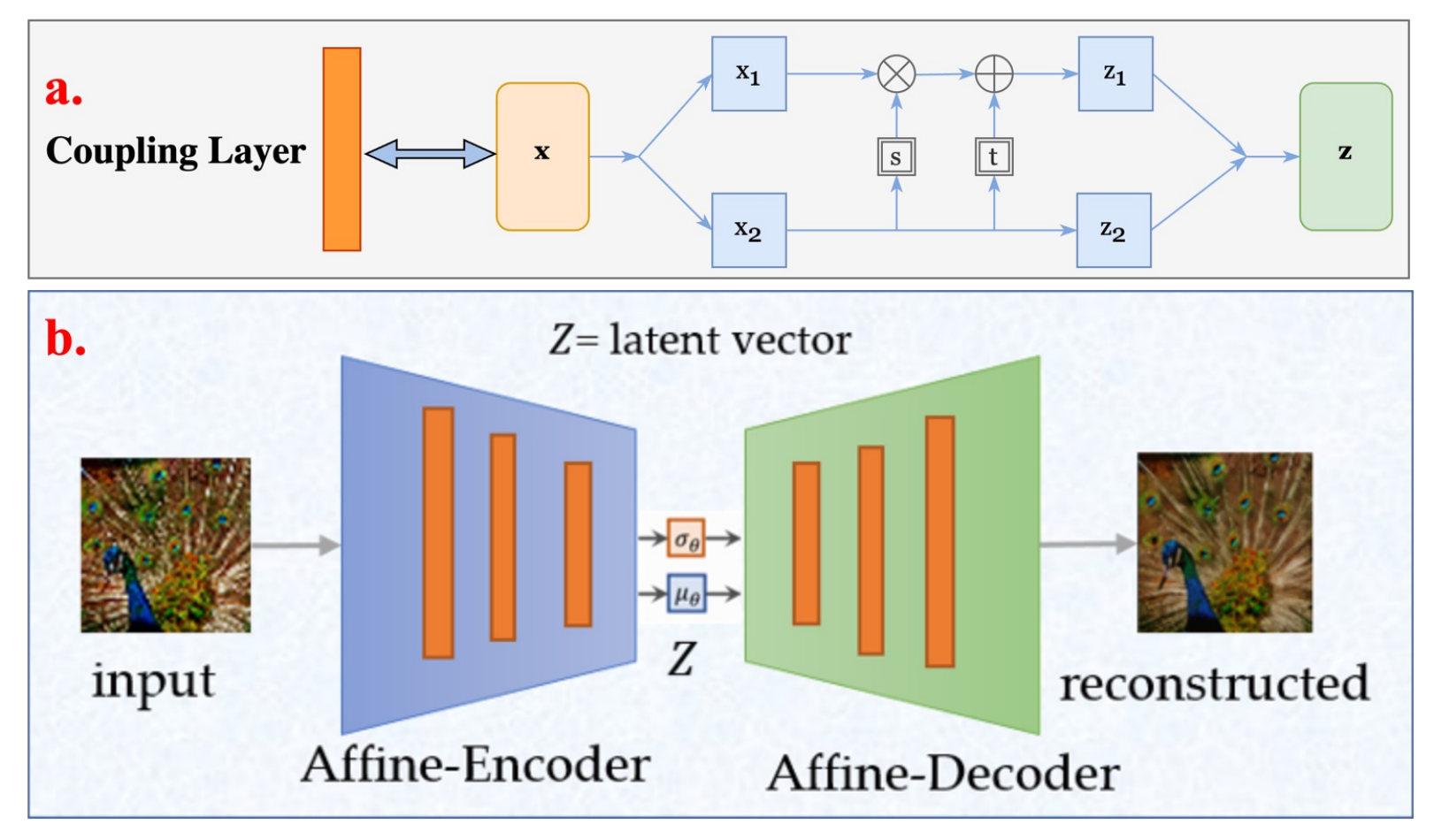}
	\caption{a. Coupling layer flow architecture. b. Affine-Encoder and Affine-Decoder architecture for latent ($4\times64\times64$) with a coupling layer.}
	\label{fig:affine_enc_dec_arch}
\end{figure}

\paragraph{Decoder with coupling layer:}
Replacing convolutional layers with coupling layers in the encoder of a Stable Diffusion model for image super-resolution can enhance the model's ability to capture complex spatial dependencies and maintain high-frequency details. Coupling layers, as used in flow-based models, allow for reversible transformations that preserve information throughout the network. By integrating these layers into the encoder, the model can better learn the intricate mappings between low-resolution and high-resolution images. This approach can lead to improved performance in generating clearer and more detailed super-resolved images, leveraging the ability of coupling layers to manage the flow of information more effectively than traditional convolutional layers.

\paragraph{Affine Coupling layer with permute:}
To make sure all the input channels have a chance to be altered, we use the ordering in each layer so that different components are left unchanged. Following such an alternating pattern, half of the channel that remains identical in one transformation layer is always modified in the next. When hyperparameter \emph{permute is False} in the model arch, then $i\_block\% 2= 0$. This means that the even-numbered layer will use the first half of the channels for learning. If \emph{permute is True}, then $i\_block\%2 = 1$, which means the second half of the channels will be used for learning in the odd-numbered layer.

\begin{table}[!pht]
	\centering
	\begin{tabular}{lccc} \toprule
		Recon. model & \# Total (T) & \# Learnable (L) & \% (L/T)   \\ \midrule
		Affine-AE (our) 		          & \textcolor{blue}{68.3}  & \textcolor{blue}{53.6} & \textcolor{blue}{78.47\%} \\ 
		AE-KL \cite{wang2023exploiting}   & 145   & 47.2 & 32.55\% \\ 
		VQ-GAN \cite{yue2023resshift}     & 101.2 & 101.2   & 0\%    \\ 
		\bottomrule
	\end{tabular}
	\caption{Comparison of the Learnable (fine-tuned for the SR) and total parameters in millions. FLOPs are measured under the setting of upscaling SR images to $512 \times 512$ resolution on the $\times 2$ scale. In the case of AE-KL (encoder and decoder for the StableSR), the decoder includes a Content Feature Wrapping module with learnable layers. In the case of VQ-GAN (Encoder-Decoder used in ResShift), there is no fine-tuning, and the pre-trained weights are used.}
	\label{params}
\end{table}

\begin{table}[!pht]
	\centering
	\begin{tabular}{lccc} \toprule
		reconstruction model & Encoding & Latent & Decoding   \\ \midrule
		Affine-AE (our) & \textcolor{blue}{2.11} & \textcolor{blue}{0.0068} & \textcolor{blue}{0.132 } 
		\\ \midrule
		AE-KL \cite{wang2023exploiting} & 3.32 & 0.0098 & 0.138  \\ 
		VQ-GAN \cite{yue2023resshift}   & 2.81 & 0.0106 & 0.139  \\ 
		VQ-VAE \cite{rombach2022high}   & 4.74 & 0.0143 & 0.148  \\ 
		\bottomrule
	\end{tabular}
	\caption{Computational overhead of different encoders and decoders. Our Affine-Autoencoder (Affine-AE), AE-KL is used in StableSR, VQ-GAN is used in ResShift, and VQ-VAE is used in LDM. Input \& output size is $512\times512$.  Time in seconds on Tesla-P40 24GB GPU}
    
	\label{time_in_sec}
\end{table}

\paragraph{Affine-StableSR:}\label{para:Affine-SR}
For SR image latent vector and reconstruction, we use a StableSR inspired by a latent text-to-image generation, the Latent Diffusion  Model (LDM). The configuration is outlined in Figure 1. \emph{Model Size and Complexity:} is a fundamental problem in diffusion models, and the overhead of the encoder and decoder poses an extra complexity. Our proposed Affine-StableSR targets this problem using fast and efficient layers. This is reflected in the Table \ref{params}. In Table \ref{params}, we present the model size and the number of trainable parameters for the SR. In Table \ref{time_in_sec}, we present the encoding and decoding time of three different AE models for the latent generation used for diffusion models.

\section{Experiments and training:} 
\label{sec:exp_train}
\paragraph{Dataset}  We conducted our experiments using a large dataset comprising images from DIV2K-train, DIV8K, NEOCR, and DTD datasets. This dataset includes 8,599 images at 2K and 8K resolutions and corresponding low-resolution versions generated using bicubic interpolation at scaling factors of ×2, ×3, and ×4. Additionally, we evaluate the performance of our approach on the validation sets of several benchmark datasets: DIV2K \cite{Agustsson_2017_CVPR_Workshops}, Set5 \cite{bevilacqua2012low}, Set14 \cite{zeyde2012single}, B100 \cite{agustsson2017ntire}, and Urban100 \cite{huang2015single}. Evaluation metrics include peak signal-to-noise ratio (PSNR), providing insights into the fidelity of our super-resolution results across different scales and datasets.

\paragraph{Implementation details:}
Affine-StableSR is constructed based on Stable Diffusion 2.1-base2. Our Affine-Encoder is akin to the LDM's encoder U-Net in Stable Diffusion, but is significantly more lightweight (105M parameters, including Spatial Feature Transform). In instances where the input size matches the output size, residual blocks are replaced with Affine-net blocks in both the Encoder and Decoder to facilitate efficient and rapid encoding and decoding processes.

We fine-tuned the diffusion model of Affine-StableSR for 200 epochs using a batch size of 32, with the prompt fixed as null. Following Stable Diffusion practices, we employed the Adam optimizer \cite{kingma2014adam} with a learning rate set to $5 \times 10^{-5}$. The training was conducted at a resolution of $512 \times 512$ across 8 NVIDIA Tesla 24G-P100 GPUs. During inference, we utilized DDPM sampling \cite{ho2020denoising} with 200 time steps. We adopted the aggregation sampling strategy from StableSR for images larger than $512 \times 512$ to handle images of arbitrary sizes. We first enlarged the LR images for images smaller than $512 \times 512$ so that the shorter side matched the network's input resolution.

\emph{Stage-1:}
We focused on training the Affine-Encoder and Affine-Decoder with affine layers during this stage. Results presented in this section were obtained at a resolution of 128 × 128, which matches the training resolution. For images sourced from datasets (\cite{cai2019ntire, wei2020component, ignatov2017dslr}), patches were centered and cropped to achieve 128 × 128 resolution. For other real-world images, we resized them so that their shorter sides were 128 pixels before center cropping. The Affine-Encoder and Affine-Decoder training is supervised for the super-resolution (SR) task.

\emph{Stage-2}
In this stage, Affine-StableSR was trained on a large dataset (8K) and fine-tuned for the SR task using the diffusion model. The results reported in this section were obtained at a resolution of 256 × 256, consistent with the training resolution. Similar to Stage-1, images from specific datasets (\cite{cai2019ntire, wei2020component, ignatov2017dslr}) were cropped to 128 × 128 after centering. Real-world images were resized, so their shorter sides were 128 pixels before center cropping.

\paragraph{Experiment settings:}

\emph{Training Dataset:} Most of the public datasets do not have a diverse set of images. We created a large dataset \emph{SR-Dataset}, a more diverse and big dataset for a single image super-resolution compromise of $8600$ images.  SR dataset includes images from DIV2K \cite{agustsson2017ntire},  DIV8K (Gu et al., 2019), NEOCR, and DTD. SR-dataset comprises 865. We adopt the degradation pipeline of Real-ESRGAN (Wang et al., 2021c) to synthesize LR/HR pairs on the SR-dataset.

\paragraph{training}
With GPU: Tesla P40 24451 MB In Table \ref{tab:aekl}, we present the total validation loss for the training of Affine-AE (proposed encoder-decoder with affine coupling layers) and AE-KL (encoder and decoder used in StableSR). 

\begin{table}[!pht]
	\centering
	\begin{tabular}{lcccc} \toprule
		Model & Rec loss & AE loss & NLL loss &  Total loss \\ \midrule
		Affine-AE (our) & \textcolor{blue}{0.1548} & \textcolor{blue}{0.1941} & \textcolor{blue}{0.1545} & \textcolor{blue}{0.1617} \\ 
		AE-KL \cite{wang2023exploiting} & 0.1617 & 0.2119 & 0.1670 & 0.1685 \\
		\bottomrule
	\end{tabular}
	\caption{Validation loss of AE-KL (encoder and decoder for StableSR) and proposed Affine-AE (Encoder and decoder for Affine-StableSR ) after training for 100 epochs. Where rec-loss is reconstruction loss, ae-loss is autoencoder loss, and nll loss is negative log-likelihood or bits per dimension (BPD).}
	\label{tab:aekl}
\end{table}

In Table \ref{tab:vqgan}, we present the total validation loss for the training of Affine-VQGAN (transformer with affine coupling layers) and VQGAN (encoder and decoder used in ResShift). The results in this table show that using an affine coupling layer helps the transformer learn the encoding and decoding efficiently with half of the model size. As we can see in Figure \ref{fig:affinVQE_vs_VQGAN_loss}, Affine-VQGAN gives almost the same loss value as VQGAN and can be used for the generation and reconstruction of the image.

\begin{table}[!pht]
	\centering
	\begin{tabular}{lcccc} \toprule
		Model & Rec loss & AE loss & NLL loss &  Total loss \\ \midrule
		Affine-VQGAN(our) & 0.4964 & 0.5154 & 0.4840 & 0.5021 \\
		VQ-GAN \cite{yue2023resshift} 	& 0.4827     & 0.4972    & 0.4742     & 0.4896  \\ 
		\bottomrule
	\end{tabular}
	\caption{Validation loss of VQGAN (encoder and decoder for ResShift\cite{yue2023resshift}) and Affine-VQGAN (encoder and decoder for Affine-StableSR) after training for 31 epochs. Where rec-loss is reconstruction loss, ae-loss is autoencoder loss, and NLL loss is negative log-likelihood or bits per dimension (BPD).}
	\label{tab:vqgan}
\end{table}

\begin{figure}[!pht]
    \centering
    \includegraphics[width=0.891\linewidth ]{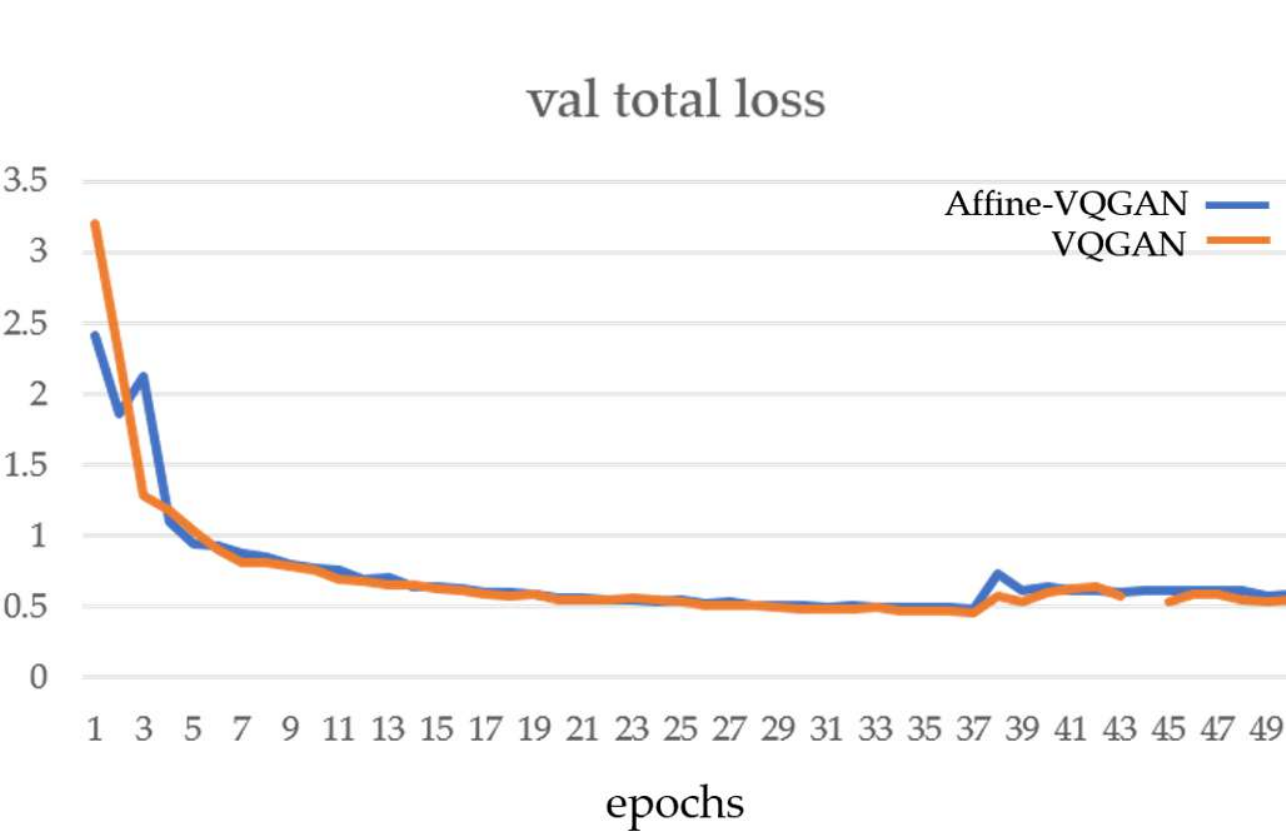}
    \caption{Validation total loss (rec-loss, NLL-loss, and AE-loss) plot affine-VQGAN vs VQGAN (taming transformers, AE used in ResShift) for training 50 epochs.}
    \label{fig:affinVQE_vs_VQGAN_loss}
\end{figure}

\begin{figure*}[!pht]
    \centering
    \includegraphics[width=0.9991\linewidth ]{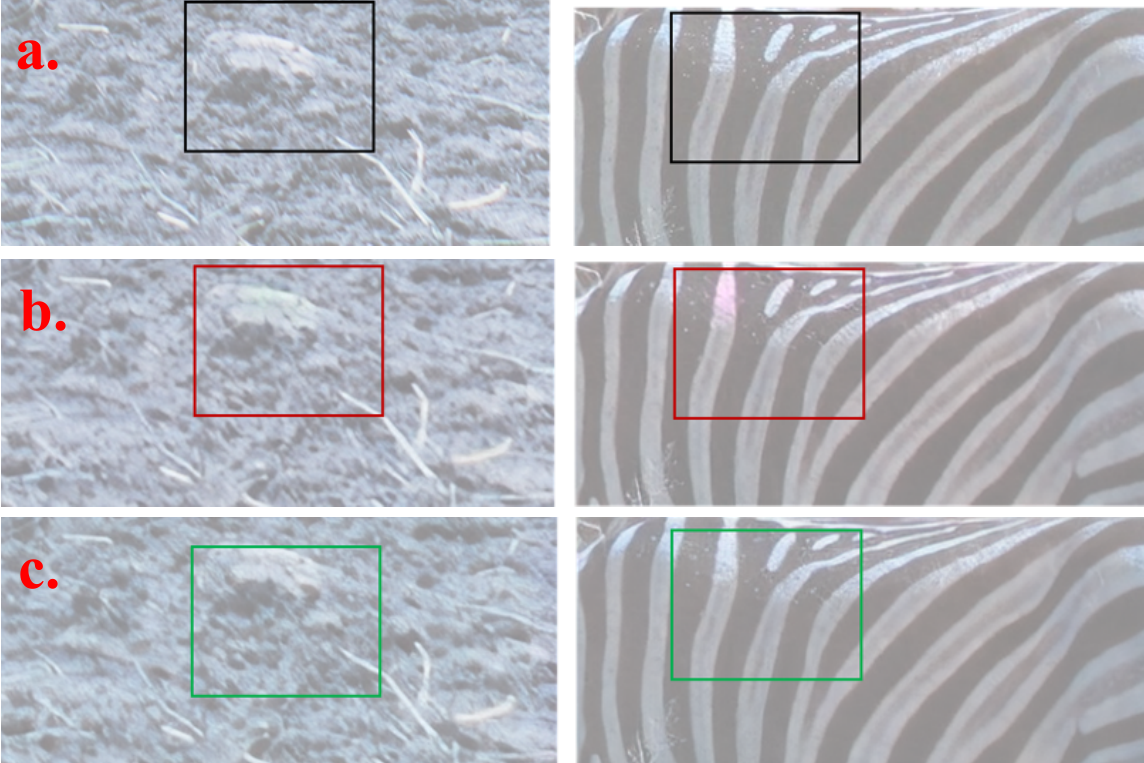}
    \caption{Columns: two different input images, a. input image, b. reconstructed images using Affine-AE (our) and c. AE-KL (used in StableSR). The green box shows a high-feature-rich and detailed reconstruction. Images zoomed and cropped for this comparison.}
    \label{fig:recon_AffineAE_vs_AE}
\end{figure*}

\paragraph{Ablation study and Discussion:}
For the ablation study, we train Affine-StableSR on a large dataset (8K) for classical image SR (×2) and test it on a DIV2K-val set. Qualitative comparison of different backbone networks for $\time 2$ SR. Figure \ref{fig:recon_AffineAE_vs_AE}, \ref{fig:qualti_comp}, see more reconstruction using Affine-VAE.

\begin{figure*}[!pht]
    \includegraphics[width=1\linewidth ]{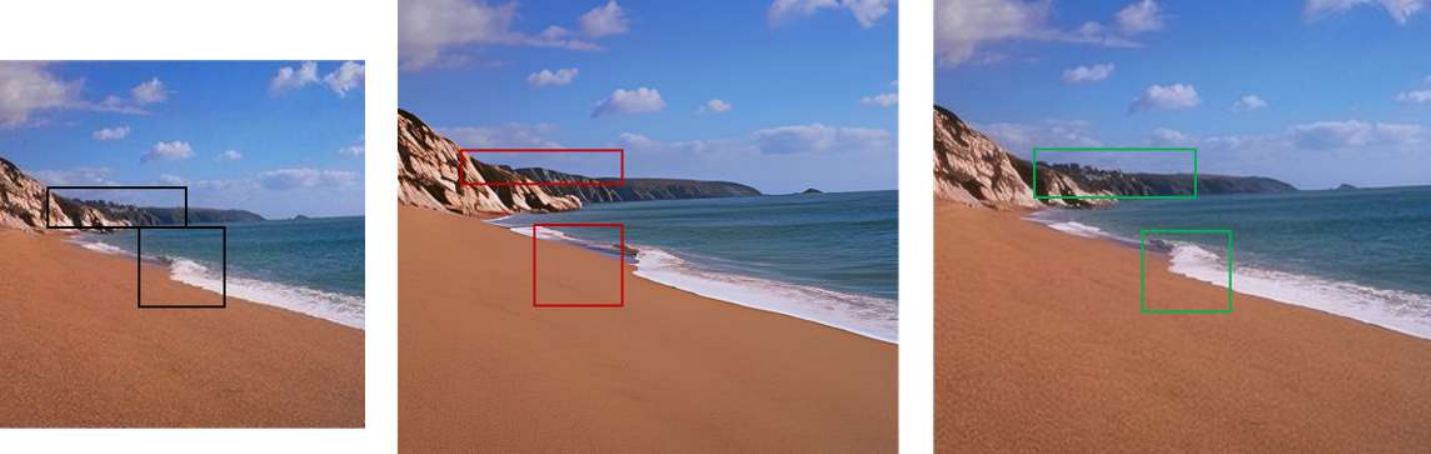}
    \caption{Qualitative comparison on Affine-StableSR and StableSR for $\time 2$ SR. Left: LR image, Centre: StableSR, and Right: Affine-StableSR. The green box shows a high-feature-rich and detailed reconstruction. Zoomed and cropped for this comparison.}
    \label{fig:qualti_comp}
\end{figure*}

\section{Summary} 
\label{sec:conclusion}
This work introduces Affine-StableSR, a novel approach harnessing diffusion priors for real-world Super-Resolution (SR). We undertake targeted training of both encoder and decoder from scratch, addressing prevalent challenges such as high computational demands and model size. Our proposed solutions incorporate affine-coupling layers within the encoder and decoder architectures. We envision our contributions as foundational steps in this domain, with Affine-StableSR offering valuable insights for future research, amalgamating concepts from normalizing flows and diffusion models. Furthermore, we present the Affine-StableSR framework, integrating encoder and decoder structures with affine-coupling layers and pre-trained weights for a stable diffusion model. Leveraging pre-trained weights from StableSR for the diffusion model (U-net), we introduce a new four-in-one dataset comprising images from four distinct sets to enhance training image diversity. We foresee the potential extension of our method to stable diffusion, such as replacing the ResNet block of the U-net and even the transformer for vision tasks.


\part{Applications of Generative Models \& Computer Vision}
 \chapter{Solving Class Imbalance in Corn Image Dataset}\label{chap:seeds}

\section{Introduction}
Quality checking is an essential step to ensuring food grain supplies. It ensures that stocks with different compositions of defective grains are not mixed and can be processed, packaged, and sold for appropriate uses, from high-quality, economy packaging to animal feeds. However, quality checking for small grains and seeds like corn and rice is a tedious task done mainly by experts through visual inspection. Manual inspections are not scalable because of the human resources required, inconsistency between inspectors, and a slower processing pipeline. An automated approach to seed/grain quality testing can solve many problems, leading to better usage and distribution of food stocks. 

\begin{figure}[!ht]
    \centering
    \includegraphics[width=.99\linewidth]{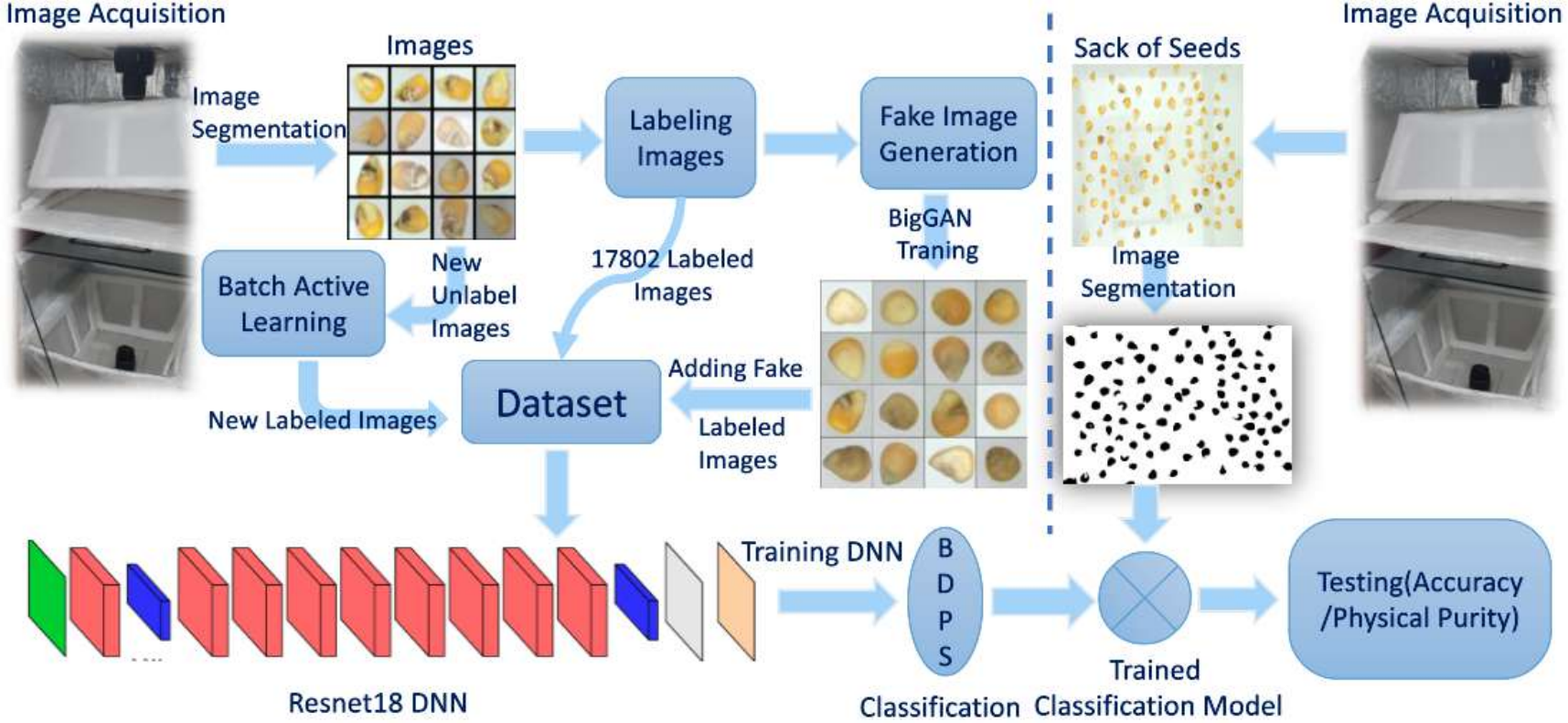}
    \caption{Overview of our proposed system.}
    \label{fig:my_label}
\end{figure}

With the advent of Deep Neural Networks (DNNs), data-driven models have become increasingly adept at image classification/detection tasks. DNN models have been used in literature for seed quality testing problems \cite{seeds_corn,seeds_corn2,cornseed}. However, there are some significant impediments to their widespread adoption. DNNs require large-scale datasets to give high accuracies, which match human inspectors. However, creating large-scale, good-quality datasets is challenging. Annotating seed data sets with defects requires experts with specialized knowledge. Also, a considerable imbalance of seeds can occur in a sample with a specific defect, which results in small datasets highly skewed to non-defective classes, with minimal images in some defective categories. Another main problem is that the defective part of the seed might not be visible from the top view.

We propose addressing class imbalance and annotation costs using machine learning techniques such as generative methods and \textcolor{blue}{Batch Active Learning (BAL).} We use active learning to select the most informative images from unlabeled data to be labeled by experts, leading to less annotation. We use generative methods, specifically conditional generative adversarial networks, to generate images of each class to address the class imbalance problem. We also developed a novel hardware design, which examines the seeds from the top and bottom, leading to better classification accuracy.  

Active learning is used to select an initial batch of images to be annotated. After a round of annotation, the next batch is determined based on the uncertainty of classification remaining, conditioned on the initial batch of annotation. We built an annotation tool, which shows the next batch obtained by Active Learning \cite{nagar2021automated}. Using this tool, we built a dataset of 26K corn seed images, classified into pure seeds and three other defective classes (broken, silkcut, and discolored).

However, the dataset created is highly imbalanced, with images of pure seeds being $40\%$ while some defective classes are as low as $9\%$, with $4$ classes in total. We use a Conditional Generative Adversarial Network (CGAN) to address the class imbalance problem. A CGAN (BigGAN) trained on the labeled base dataset, conditioned on the labels, to generate good-quality seed images. Then,  Conditional GAN is used to generate images that are indistinguishable from real images for each class. Hence, this results in a more balanced dataset. Finally, an image classifier DNN is trained on this dataset (with the fake images added), leading to a better accuracy of 80\%.

\paragraph{Main Contributions:}
We built a computer vision-based system for large-scale automated quality checking of corn seeds. 
\begin{enumerate}
    \item We give a better hardware design, which takes images from the top and bottom to inspect defective/pure seeds (see Section \ref{sec:hard-prim-data}).
    \item We use Conditional Generative Adversarial Networks to generate fake images of each class, leading to a larger and more balanced training dataset (see Section \ref{sec:cgan}).
    \item We build a dataset of 26K corn seed images labeled as pure, broken, silkcut, and discolored (see Section \ref{subsec:dataset}).
    \item We train an image classifier on the dataset with generated and real images, leading to improved accuracy (see Section \ref{sec:results_4}).
\end{enumerate}

\section{Related Works}

\paragraph{ML for Seed Quality Testing and Agriculture:}

Machine vision for precision agriculture has attracted research interest in recent years \cite{ml_in_agri, TIAN20201, rice, cornseed}. Plant health monitoring approaches are addressed, including weed, insect, and disease detection \cite{ml_in_agri}. With the success of DNNs, different approaches have been proposed to tackle problems of corn seed classification \cite{cornseed,seeds_corn,seeds_corn2}. Fine-grained objects (seeds) are visually similar at a rough glimpse, and details can correctly recognize them in discriminative local regions.

\paragraph{Generative Methods for Class Imbalance:}
Generative models can not only be used to generate images \cite{gan_for_class}, but adversarial learning showed good potential in restoring balance in imbalanced datasets \cite{gan_plant}. Generative models can generate samples that are difficult for the object detector to classify. Creating a balanced dataset is a problem because the availability of one type of sample (seed) with defects or impurities compared to others is not always the same. While creating a new dataset, we applied fake image generation to overcome the imbalance of the instances. We use a generative model for the Image-to-image translation with conditional adversarial networks \cite{gan_l}.


\section{Approach}
We approach the problem of seed quality testing by first building a camera setup and a preprocessing pipeline to obtain individual seed images from a sample, which are then labeled (see Section \ref{sec:hard-prim-data}). Then, since the data is highly imbalanced and to minimize the expert labeling effort, we propose two methods: i.) use an Active learning-based UI tool to aid the creation of a larger dataset with the least effort from the expert human intervention, and ii.) Use Conditional Generative Adversarial Networks (CGAN) to generate images to solve the class imbalance problem (see Section \ref{sec:cgan}).
\paragraph{Hardware Setup and Primary Dataset Creation:} \label{sec:hard-prim-data}
Our seed quality testing approach makes use of a camera setup shown in Fig. \ref {fig:setup} below. In this setup, we use two cameras, one on the top and one on the bottom. We place a bunch of seeds in the middle of a transparent glass, take a picture of one from each camera, and save it to the computer connected to the camera. We use a thin white film to block the other side of the transparent glass. While capturing the top view of the seeds, we put the thin film below the glass, which works as a white background, similar to the bottom view. Thus, the top camera gives the picture a top view of each seed, and similarly, the bottom camera captures the bottom picture of the seeds to train the classification model. We use the seeds' top and bottom images as individual input images, which can get two independent predictions, increasing the accuracy.

\begin{figure}
    \includegraphics[width=0.79\linewidth]{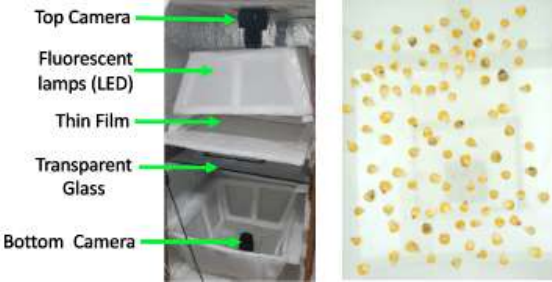} 
    \caption{Image capturing setup and sample image.}
    \label{fig:setup}
\end{figure}
For seed classification, individual corns in the captured image should be accurately segmented first. While placing the seeds on the transparent glass, we mind the gap between the neighboring seeds to avoid the overlapping of seeds segmentation \cite{seg} of the image done by the Watershed method, because this transform decomposes an image completely and assigns each pixel to a region or a watershed, and image segmentation can be accomplished simultaneously. After the segmentation, the expert's labeling of each seed image is done for the 17802, considering the top and bottom two seed images as different. To classify new seeds after the training model, we take a picture of a sack of seeds and use the segmentation to detect the location \cite{seg} of each seed and give it as input to the model classification.  

\paragraph{Conditional Image Generation to Balance Dataset:}\label{sec:cgan}

\begin{figure}
 \centering
 \includegraphics[width=0.99\linewidth]{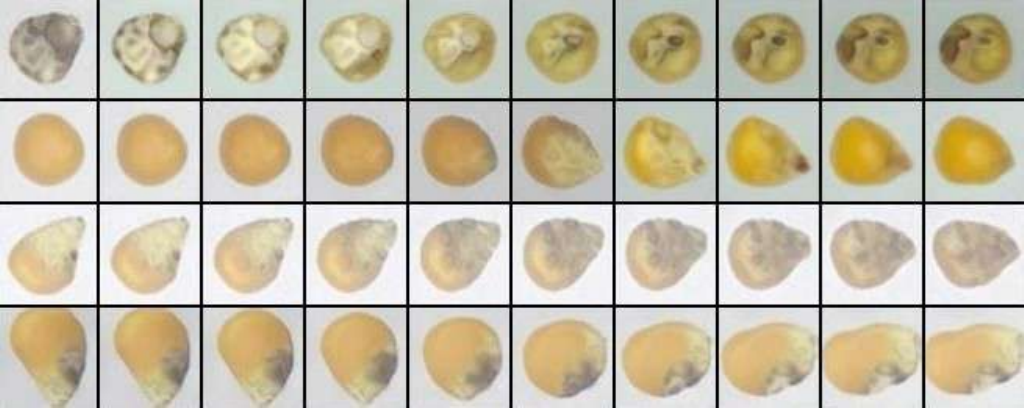} 
 \caption{Example of interpolation of two GAN-generated images. A gradual linear transformation from one seed to another using latent vector(z) interpolation.}
\label{fig:inter}
\end{figure}
Generative Adversarial Networks (GANs) are one of the state-of-the-art approaches for image generation. Existing works primarily aim at synthesizing data of high visual quality \cite{gan_l,gan_high}. We use BigGAN \cite{gan_l}, as it gives good-quality images in similar datasets. Training generative adversarial networks (GANs) using too little data typically leads to discriminator (D) over-fitting \cite{limitdata}. GAN training is dynamic and sensitive to nearly every aspect of its setup (from optimization parameters to model architecture). \textbf{Image Interpolation:} Exploring the structure of the \emph{latent space} $(z)$ for a GAN model is interesting for the problem domain. It helps develop an intuition for what has been learned by the generating model (see Figure \ref{fig:inter}). We use image interpolation to evaluate whether the inverted codes are semantically meaningful. We interpolate one type of seed to other classes in a large diversity. As we can see in Figure \ref{fig:inter} left to right, how smoothly the interpolation works, which validates that the GAN has learned a good latent representation for images in the dataset.

\section{Results}\label{sec:results_4}

\paragraph{Dataset \& Experimental Details:} 
\label{subsec:dataset}
Our primary dataset contains four classes: \emph{broken}, \emph{discolored}, \emph{pure}, and \emph{silkcut} (B, D, P, S), having different instances for each class (Imbalance). Images are taken for both sides of the corn, \emph{top-view}($8901$) and \emph{bottom-view}($8901$), as explained above in the hardware setup section-\ref{sec:hard-prim-data}. The primary dataset is highly imbalanced. We explore the confusion matrix in the result section \ref{sec:results_4} to analyze the class confusion for seeds. Instances of each class and their \% in the primary dataset are as follows: \emph{broken} $5670$ $(32\%)$, \emph{discolored} $3114$ $(17.4\%)$, \emph{pure} $7267 (40.8\%)$, \emph{silkcut} $1751 (09.8\%)$. We use two methods to add more images to the primary dataset: i.) adding GAN-generated images to balance the dataset, and ii.) Adding more captured images labeled using the Batch Active Learning method.

In case of adding fake images to balance the dataset, we split the primary dataset into two parts, training and validation, in a $70:30$ ratio, ensuring that each class has the same \% of instances on each set. The train set is used to train the BigGAN model, and after adding fake images generated by BigGAN into the train set, this new train set is used to train the classification model. Finally, the Validation set is used for testing the classification model only. 

We generated fake images as follows: \emph{broken} $2937$, \emph{discolored} $5823$, \emph{pure} $2937$, \emph{silkcut} $5823$ instances, and added them into the train set to balance the dataset. In the case of adding newly captured images labeled using the \emph{Batch Active Learning} method after image segmentation, new $9000$ labeled images are added to the \emph{primary} dataset. This new dataset contains 26,802 images split into train and validation sets $80:20$, respectively. The train set is used to train the classification model, and the Validation part is used to test the classification model. 

We train different Convolutional Neural Networks (CNN) models on this dataset in Section \ref{subsec:classification}. First, we use transfer learning; the models are initialized with weights learned from ImageNet Classification \cite{imnetclass}. Then, we trained the DNNs on the train set of the primary dataset to fine-tune the model and compare the validation accuracy for a different model (see table \ref{tab:freq}). Since Resnet$18$ has the highest accuracy, we trained it on three different datasets: i.) the primary dataset, ii.) with fake images generated using BigGAN, and iii.) after adding the newly annotated images labeled by the Batch Activation Learning method.

\paragraph{Conditional Image Generation (CGAN):}\label{subsec:CGAN}
We trained the conditional GANs to generate fake images to add to the dataset and reduce the class instances' imbalance. To train BigGAN, the \emph{primary} dataset is split into two sets, train and validation $70:30$, respectively. The train set is used to train the BigGAN and, after adding the fake images, to train the classification model. The validation set is used only to test the classification model. We trained the Generator (G) and Discriminator (D) of the BigGAN for 250 epochs, and after training, we passed random noise and a label to the G to generate the fake images. Hyper-parameters  used are as follows: input image resolution$=256\times256\times3$, learning rate $=2e^{-4}$, batch size $=16$, dimension of the latent vector space $\text{dim}(z)=128$. BigGAN is trained in alternate phases for D and G, and we ensure that each input batch contains an equal no. of images from each class. Figure \ref{fig:sample_img} shows and compares that the image generated is indistinguishable from real images.
\begin{figure}[ht!]
  \centering
  \includegraphics[width=\linewidth]{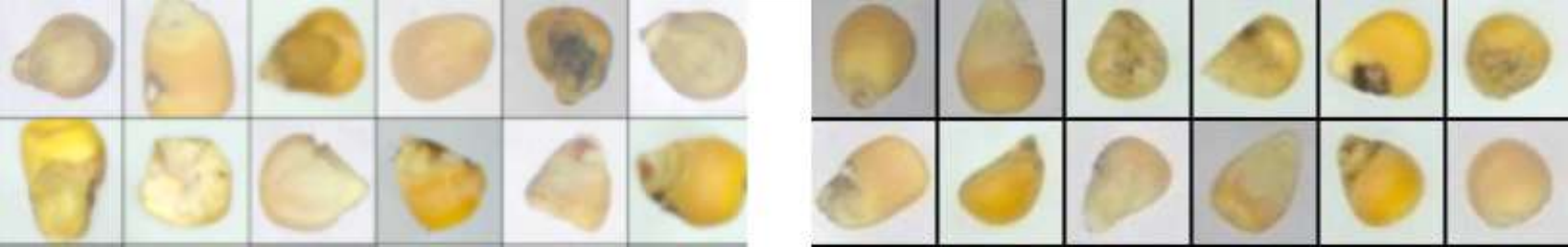}
  \caption{Sample images from primary dataset (left) \& ones generated using BigGAN (right).}
  \label{fig:sample_img}
\end{figure}
\paragraph{Classification:}\label{subsec:classification}
We have validated the effectiveness of Active Learning (\cite{nagar2021automated}) and analyzed the quality of images generated by the Conditional GAN in Section \ref{subsec:CGAN}. Here, we discuss the accuracy of the classification model trained on the dataset obtained.

\begin{table}[!th]
\centering
  \begin{tabular}{lc}
    \toprule
    Model &  Acc. \%   \\
    \hline
    Resnet18 &   \textcolor{blue}{71}\\
    Squeezenet &  71 \\
    ResNet50 &    70 \\
    Mobilenet &   68 \\
    WideResnet50  &  69 \\
  \bottomrule
  \end{tabular}
  \caption{Validation accuracy for different CNN models on \emph{primary} dataset. }
  \label{tab:freq}
\end{table}

\begin{table}[!ht]
\centering
  \begin{tabular}{lcccccc}
    \toprule
    \# Fake images &$\qquad$ & Acc. \% & $\qquad$ & Physical Purity \%  \\
    \midrule
    Zero && 71.00 && 80.58 \\
    20K && \textcolor{blue}{79.24} && \textcolor{blue}{88.25} \\ 
    40K && 78.23 && 87.68 \\
    100K && 78.88 && 87.24 \\
  \bottomrule
  \end{tabular}
  \caption{Validation accuracy comparison before and after adding the fake images to the dataset (for ResNet18 DNN,) solving imbalance. \emph{Accuracy:} four-class classification accuracy. \emph{Physical Purity}: Pure vs Impure (two-class classification).}
  \label{tab:fake}
\end{table}

\begin{figure}[ht!]
  \centering
  \includegraphics[width=0.99\textwidth]{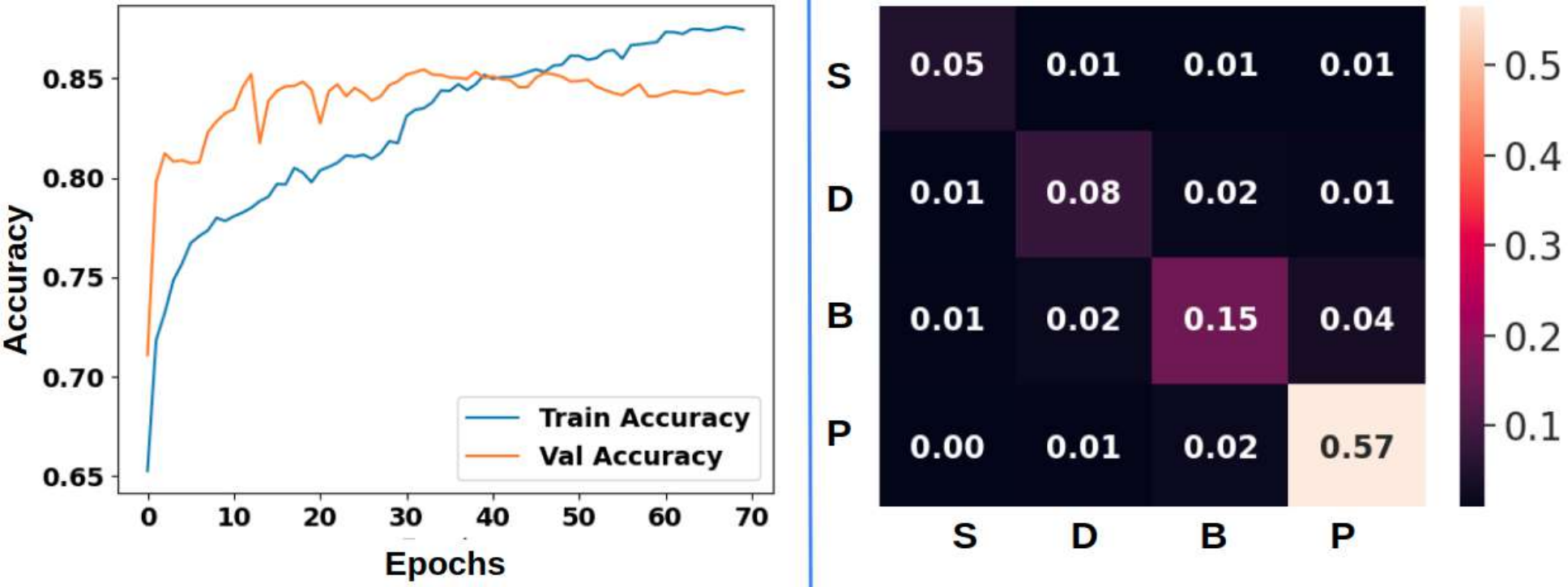}
  \caption{Left: Train and validate the accuracy of the ResNet18 using a pre-trained model. Right: Confusion Matrix: heat map of validation images (4946) after training the ResNet18 on a train set of new datasets (26802). Ground Truth: rows, Predicted: columns. B: broken, D: discolored, S: silkcut, P: pure.}
  \label{fig:heatmap}
\end{figure}

The highest validation accuracy of the primary dataset is $71.02\%$ (see Table \ref{tab:freq}) among different deep learning models. In the case of adding fake images into the train set of the primary dataset to balance the dataset, the validation accuracy after training the classification model on the new train set increases the form $71.02\%$ to $79.24\%$ (see Table \ref{tab:fake}), thereby validating the approach of using CGAN to solve the class imbalance problem. The graph in Figure \ref{fig:heatmap} plots the training and validation accuracy for Resnet 18 on the dataset after adding new images labeled by the BAL. This further improves the accuracy to $85.24\%$, and we also get the \emph{physical purity} (\emph{pure} vs \emph{impure} classification) accuracy to be $91.62\%$. The classwise accuracies are as follows: \emph{broken} $71.20\%$, \emph{discolored} $69.08\%$, \emph{pure} $94.94\%$, \emph{silkcut} $75.82\%$.

Some images in the dataset have high-class ambiguity. To analyze, we used the confusion matrix given in Figure \ref{fig:heatmap} for the validation set. Each matrix entry gives the \% of images of specific ground truth (rows) and a specific predicted class (columns). As can be seen from Figure \ref{fig:heatmap}, the classes \emph{pure} and \emph{broken} are the most confusing, followed by \emph{discolored} and \emph{broken}.

\section{Summary \& Discussion}
In this work, we propose a novel computer vision-based automated system that can be used for corn seed quality testing. A novel image acquisition setup is used to obtain two different viewpoints for every seed. We also address the class imbalance problem by using Conditional GANs (BigGAN) to generate more images of classes with a small dataset. We believe similar approaches can be used for quality testing of various seeds and vegetables, and can decrease wastage and human intervention.

 \chapter{Image Anonymization for Street Scenes Datasets using Image Inpainting}\label{chap:pvt_idd}

\section{Introduction}

As autonomous driving technology advances, large volumes of data are being collected on public roads globally. While this data is essential for model training, it also heightens privacy concerns, particularly regarding visible faces and vehicle license plates. This makes developing effective detection and anonymization models essential in real-world settings. However, existing public datasets for face and license plate detection are either limited to parking lot scenes or lack focus on the autonomous driving context.

Recent advances in computer vision have enabled its wide application in different domains, and one of the most exciting applications is autonomous vehicles (AVs), which have encouraged the development of several ML algorithms, from perception to prediction to planning. However, training AVs usually requires a lot of training data collected from different driving environments. Although existing privacy protection approaches have achieved theoretical and empirical success, there is still a gap when applying them to real-world applications such as autonomous vehicles \cite{xie2022privacy}. By scaling this idea from face to the entire image, prior studies adopted down-sampling and pixelation to
conceal Volatile Personally Identifiable Information (V-PII), which otherwise is apparent in high-resolution images.

A straightforward method for preserving privacy in visual data is image obfuscation, such as face blurring, before cloud storage or processing \cite{ilia2015face}. Rajput et al. \cite{rajput2020privacy} proposed a variation of this approach by introducing Gaussian noise into the image before applying down-sampling, thereby obscuring facial details.

\begin{figure*}[!ht]
     \centering
    \includegraphics[width=0.99\linewidth]{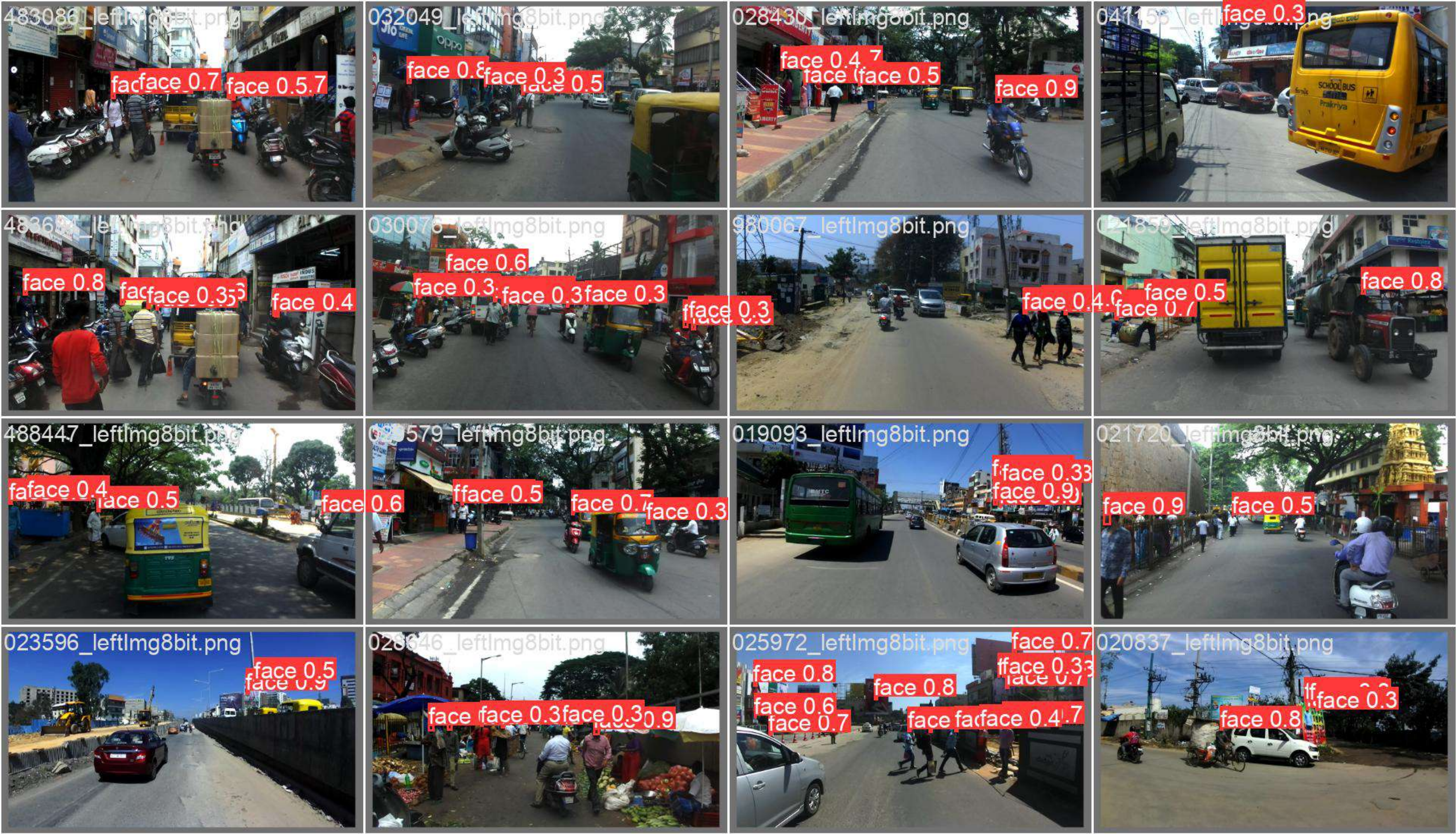}
    \caption{Sample predictions from the RetinaFace model \cite{deng2019retinaface} on the Pvt-IDD dataset. The predicted bounding boxes and their corresponding confidence scores and class labels are shown, demonstrating the model’s ability to localize faces in real-world autonomous driving scenarios.}
    \label{fig:pred_labels2}
\end{figure*}

However, the lack of publicly available, annotated datasets and standardized benchmark models significantly hinders progress in developing robust anonymization techniques for autonomous driving scenarios. Many commercial systems utilize proprietary datasets for training and evaluation, which are inaccessible to the research community. Furthermore, academic attention to privacy-preserving techniques in autonomous driving has been limited. No publicly released face detection datasets are tailored explicitly for in-vehicle or on-road use cases with anonymized faces. While numerous public datasets exist for general face detection, such as PASCAL FACE, FDDB, UFDD, MALF, and WIDER FACE \cite{everingham2015pascal, jain2010fddb, nada2018pushing, yang2015fine, yang2016wider}, none of them address the unique constraints and challenges of privacy in autonomous driving environments. 

We propose a method to preserve privacy in the road image dataset using face detection and image inpainting models. The results of real-dataset experiments validate that our method can achieve privacy protection while maintaining image recognition accuracy, which provides a more effective tradeoff between privacy and utility than the state-of-the-art.

Main contributions: 
\begin{itemize}
    
    \item We propose an automated anonymization pipeline by adapting a face detection model (RetinaFace) and a generative inpainting model (Inpaint-Anything) to replace facial regions while preserving scene context (see Section \ref{sec:face_anno}).
    
    \item We introduce a practical evaluation framework to assess anonymization quality based on its impact on face detection performance, ensuring task-relevant information is not lost (see Section \ref{sec:eval_anno}).
    
    \item We conduct comparative experiments to evaluate detection and segmentation accuracy on datasets with real versus anonymized faces, highlighting the trade-offs and effectiveness of our anonymization method (see Section \ref{sec:results_idd}).
\end{itemize}

\section{Related Work}
Existing public datasets for face detection and autonomous driving generally treat these two tasks separately, with few datasets tailored to driving scenarios. Furthermore, most face detection datasets do not account for dynamic driving contexts. In contrast, face datasets focus on stationary environments, rather than urban or highway scenes.

\paragraph{IDD Dataset:}
IDD \cite{varma2019idd} comprises high-resolution images captured from a front-facing camera mounted on a car driven through urban and semi-urban regions of Hyderabad and Bangalore. It features semantic segmentation and object detection annotations, offering rich and diverse scenes typical of Indian driving conditions. The segmentation dataset includes 10,003 images with unique labels such as billboards, auto-rickshaws, animals, and drivable muddy areas beside roads, capturing the complexities of unstructured traffic environments. Annotations follow a four-level hierarchical labeling scheme and are split into training, validation, and test sets. The detection subset includes over 46,000 images, with well-defined train, validation, and test partitions. This dataset stands out for its real-world variability, including occlusions, varied lighting, and the lack of strict lane discipline. It is a valuable resource for training robust perception models for autonomous driving in developing-world settings.

\paragraph{ECP Dataset:}
The EuroCity Persons (ECP) \cite{braun2018eurocity} dataset is a comprehensive and large-scale benchmark designed to support person detection in urban traffic scenes. It comprises over 47,300 images and more than 238,000 manually annotated person instances across various European cities, offering nearly ten times the data volume of previously established pedestrian datasets. The dataset provides high-quality bounding box annotations for fully visible and partially occluded individuals, with estimated extents and occlusion levels, making it particularly valuable for tasks like tracking. It distinguishes between pedestrians and riders (e.g., cyclists and motorcyclists) and includes part-based annotations for rider-vehicle pairs. Additional vehicle-related objects, such as bicycles, scooters, and wheelchairs, are annotated, along with ignore regions that help manage uncertain or ambiguous instances, improving robustness during training. The dataset’s scale, diversity, and detailed annotations make it a critical resource for advancing person detection in real-world, safety-critical applications like autonomous driving.

We also use the ECP \cite{braun2018eurocity} dataset to test the anonymization pipeline \ref{fig:samples2}, \ref{fig:samples3}. After filtering based on the visible face, we get 7712 training and 1533 validation images. For this ECP folder dataset, we did the detection only, as the ECP dataset has only detection labels. ECP dataset images are low-light day images.

\paragraph{Face Detection in Autonomous Driving Datasets:}
Most public face detection datasets are compiled from the internet or natural images, ranging from small collections (e.g., Annotated Faces in the Wild (AFW) \cite{zhu2012face} dataset with 205 images or PASCAL FACE with 851 images \cite{YAN2014790}) to extensive datasets (e.g., WIDER FACE with over 32,203 images \cite{yang2016wider} and  MALF with 5,250 images \cite{yang2015fine}). These datasets provide bounding box annotations and attributes like yaw, pitch, and roll under varied conditions, such as celebrity images, weather-based degradations, and motion blur. However, none specifically address the unique demands of driving scenarios, where faces in street scenes are typically small due to distance and captured from multiple viewing angles. Such distinctions pose specific challenges for anonymization models in autonomous driving.

\paragraph{Face Anonymization Models:}

Several commercial solutions have recently emerged to address privacy compliance in autonomous driving through data anonymization. Notably, Brighter AI introduced Deep Natural Anonymization (DNAT), a deep learning model based on R-CNN that targets anonymization of faces, license plates, and human bodies across standard and fisheye camera images, claiming $99\%$ accuracy on their private dataset. Celantur employs Mask-RCNN and keypoint detection to anonymize people, vehicles, and license plates. Similarly, Facebook’s Mapillary platform offers anonymization services and has published comparative evaluations highlighting its superiority over public APIs from Amazon, Google, and Microsoft \cite{mapillary2019privacy}. Other companies like dSpace’s Understand AI and NavInfo have also launched anonymization products \cite{understandai2022anonymizer}, with NavInfo uniquely reporting performance metrics on public datasets such as IJB-C \cite{maze2018iarpa} and CCPD \cite{ccpa2018}. However, most commercial offerings do not share their models or datasets, limiting reproducibility and academic benchmarking. 

To address the scarcity of open-source solutions for data anonymization in autonomous driving, PP4AV \cite{trinh2023pp4av} adopts the community version of dSpace Understand AI as a baseline. Additionally,  develop and release your baseline model using a knowledge-distillation approach, leveraging soft labels from state-of-the-art teacher models to overcome the limited availability of annotated driving data. In contrast, the IDD dataset represents real-world driving scenarios, presenting unique challenges for current state-of-the-art models developed on datasets without this level of complexity. This dataset offers an essential benchmark for evaluating privacy-preserving models in actual autonomous driving environments, bridging the gap between conventional datasets and realistic, diverse driving conditions. For face anonymization, we used the pretrained image inpainting model Inpaint-Anything \cite{yu2023inpaint} with the bounding boxes from the face detection model RetinaFace \cite{deng2019retinaface}




\section{Method}
\label{sec:impainting}

To ensure privacy in street-level imagery while preserving contextual scene information, we adopted a targeted face anonymization approach using the Inpaint-Anything model \cite{yu2023inpaint}, a state-of-the-art generative inpainting framework capable of filling masked regions with semantically consistent content. The core idea behind our method is to replace identifiable facial regions in images with visually coherent inpainted content that blends seamlessly with the surrounding environment.

\subsection{Face Anonymization}\label{sec:face_anno}
The anonymization pipeline begins with face detection using the RetinaFace model \cite{deng2019retinaface}, known for its high accuracy in detecting human faces under challenging real-world conditions. RetinaFace produces tight bounding boxes around detected faces, which are then used to generate binary masks indicating regions to be anonymized.

\begin{figure}
    \centering
    \includegraphics[width=0.99\linewidth]{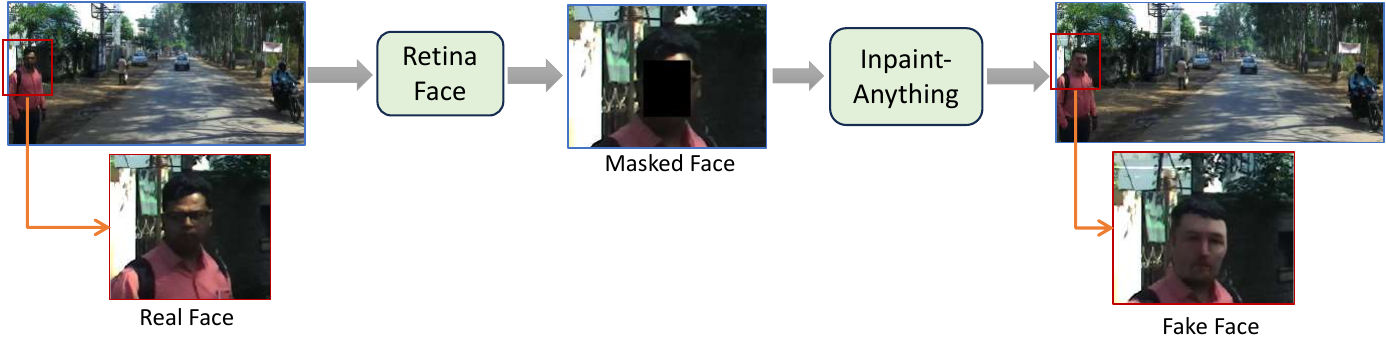}
    \caption{An overview of the face anonymization pipeline used to generate the training dataset. Faces are first detected using the RetinaFace model, followed by face region masking. The masked regions are then inpainted using the Inpaint-Anything model to generate synthetic yet contextually consistent faces. This ensures privacy while preserving visual and semantic coherence in the dataset. The validation set retains original faces for evaluation purposes.}
    \label{fig:idd_pvt_flow}
\end{figure}

These masked regions are passed to the Inpaint-Anything model, which synthesizes realistic replacements for facial areas using the surrounding pixel context. Importantly, this inpainting is confined strictly to the detected bounding boxes to ensure that the rest of the image remains completely untouched. This preserves critical scene semantics, such as pedestrian posture, spatial layout, and environmental features, while removing identifiable information. Such a design makes the anonymized data highly valuable for downstream tasks like pedestrian detection, person re-identification, or behavior prediction, where scene understanding is essential but identity information is not.

\subsection{Evaluating Anonymization}\label{sec:eval_anno}

To assess the effectiveness of our face anonymization pipeline, we perform a comparative evaluation using two parallel training setups: one with real face images and another with anonymized face images. In both cases, we train the same face detection model (e.g., YOLOv11) but on different training datasets. The first model is trained on the dataset containing real face images and evaluated on a validation set with real, unaltered faces. The second model is trained on the anonymized face dataset, where all detected faces are inpainted using the Inpaint-Anything model, and is likewise evaluated on the same validation set. This controlled comparison allows us to examine how anonymization affects model performance while keeping the evaluation benchmark consistent. The goal is to ensure that anonymization preserves essential scene information for downstream tasks, such as face detection, without compromising privacy.

For this evaluation, we apply the pipeline to the IDD dataset. First, we filter the dataset to retain only images that contain at least one detectable face. This results in a training set of 608 images with 3100 anonymized faces. The validation set consists of 153 images with 889 real, unmodified faces. This configuration allows us to measure how well the model trained on anonymized data generalizes to detecting real faces, directly measuring the anonymization impact.

We follow a similar procedure for the ECP dataset. After filtering for images with detectable faces, we obtain 7712 training images. The test set includes 1533 images with 3509 real, unaltered faces used for evaluation. Given the large scale and diversity of the ECP dataset, capturing various urban environments across Europe, it is a strong benchmark to validate the robustness and generalizability of our anonymization pipeline across complex real-world scenes.

\section{Results}
\label{sec:results_idd}

In Table~\ref{tab:res}, we present a comparative analysis of various YOLO-based models and our proposed variants evaluated on the Pvt-IDD dataset. As expected, increasing model size generally improves performance; however, our customized variants \textbf{yolov11l10} and \textbf{yolov11l19} achieve notable improvements in both precision and recall while maintaining a relatively compact model size (25M parameters, 87 GFLOPs). These models outperform standard YOLOv5 variants across all metrics, particularly in mAP50 and mAP50-90 for real (R) and anonymized (A) data. Notably, \textbf{yolov11l19} achieves the highest mAP50-90 score of 0.70 (A), indicating strong localization and generalization even in challenging privacy-preserving settings. This highlights the effectiveness of our proposed architecture modifications for face and license plate detection in real-world autonomous driving scenarios.

Table \ref{tab:loss} reports the validation losses in terms of box loss and objectness loss for both real (R) and anonymized (A) data across various models. Our proposed models, \textbf{yolov11l10} and \textbf{yolov11l19}, achieve the lowest box and objectness losses among all variants, indicating better convergence and more precise localization of faces and license plates. Notably, \textbf{yolov11l19} achieves the lowest box loss of 0.011 on anonymized data and an objectness loss of 0.007, showcasing its robustness in privacy-preserving scenarios. While larger models like DINO with SWIN backbone achieve competitive objectness loss, their box loss remains higher, particularly for anonymized inputs, emphasizing the effectiveness of our lightweight yet accurate model designs.

\subsection{Face Detection}
\label{ssec:det}

\paragraph{Training model for face detection on real images and anonymized images:}
The two figures show the training and validation performance of the YOLOv11l19 model on face detection tasks using datasets with real and anonymized faces, respectively. The training converges smoothly in both cases, consistently decreasing box loss, classification loss, and distribution focal loss across epochs. Precision, recall, and mean Average Precision (mAP) scores also improve steadily throughout training. Notably, the model trained on real face data (Figure \ref{fig:res_wF}) achieves slightly more stable and higher mAP and recall metrics compared to its anonymized counterpart (Figure \ref{fig:res_FF}), suggesting that facial anonymization introduces some challenges in localization and classification performance. However, the overall trends indicate successful learning in both setups. Yolov11l model for 1280 image size takes 0.2ms pre-process, 13.5ms inference, 0.9ms post-process per image.

\begin{figure*}
    \centering
    \includegraphics[width=0.99\linewidth]{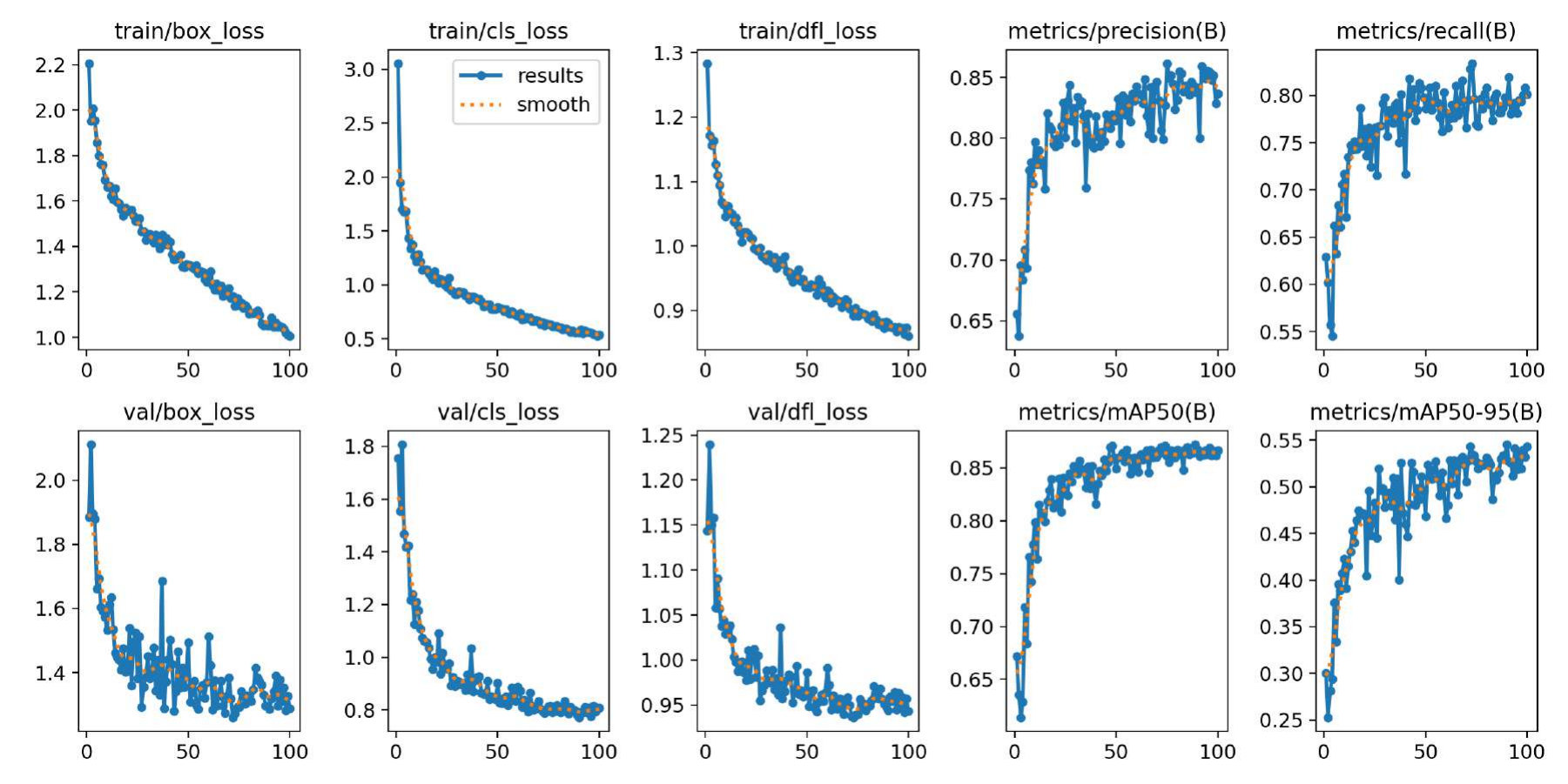}
    \caption{Face Detection training: validation result for the Yolov11l19 trained on the dataset with real faces and image size of $1920\times 1920$.}
    \label{fig:res_wF}
\end{figure*}

\begin{figure*}
    \centering
    \includegraphics[width=0.99\linewidth]{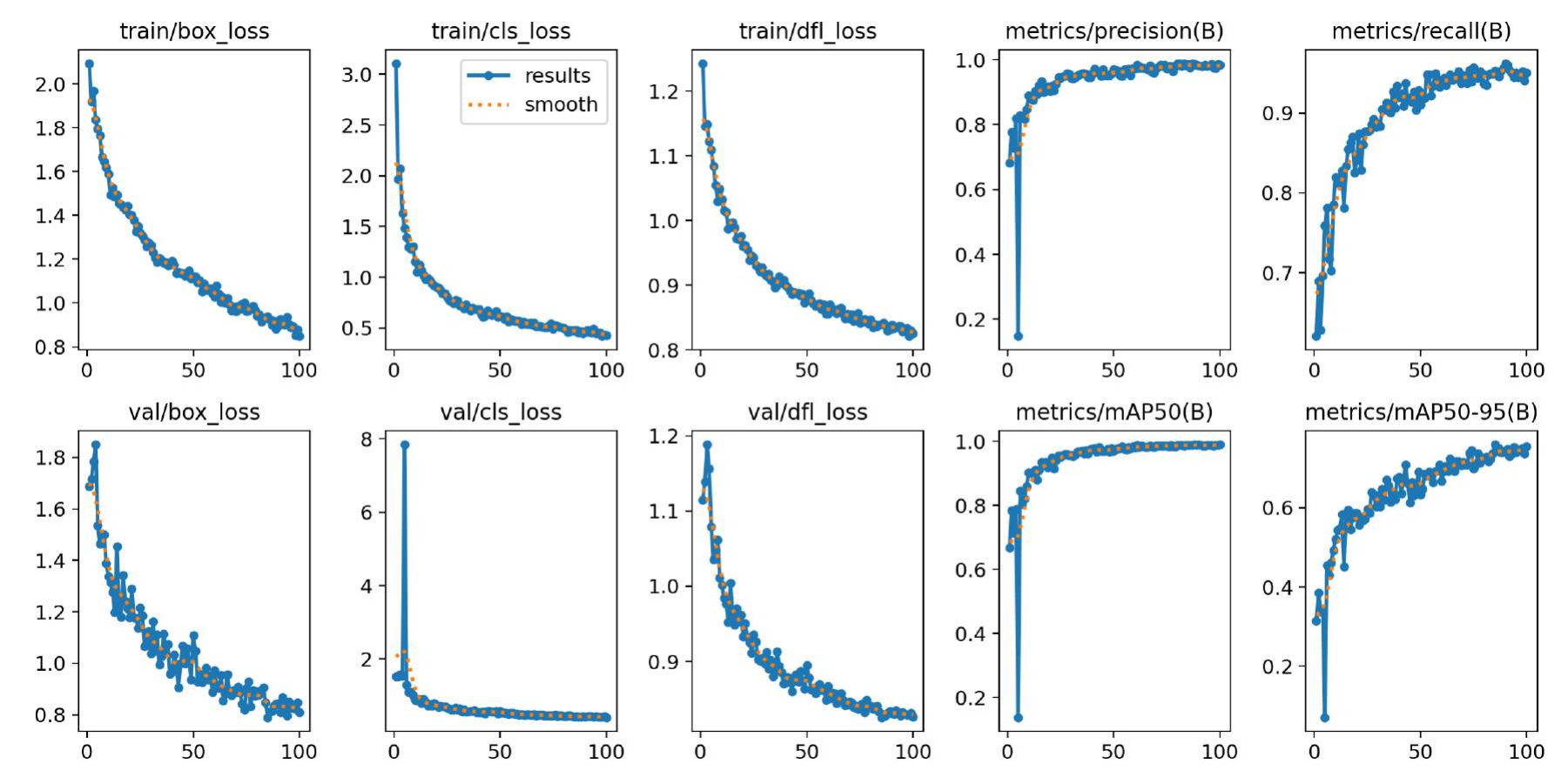}
    \caption{Face Detection training: validation result for the Yolov11l19 trained on the dataset with anonymized faces and image size of $1920\times 1920$.}
    \label{fig:res_FF}
\end{figure*}

\begin{table*}[!ht]
\centering
\resizebox{\columnwidth}{!}{
\renewcommand{\arraystretch}{1.5}
    \begin{tabular}{lcccccccccc}
    \toprule
     & \multicolumn{2}{c}{\textbf{Model Size}} & \multicolumn{2}{c}{\textbf{Precision}} & \multicolumn{2}{c}{\textbf{Recall}} & \multicolumn{2}{c}{\textbf{mAP50}}  & \multicolumn{2}{c}{\textbf{mAP50-90}}\\
    \cline{2-11} 
    \textbf{Model Name} & \#params (M)   & Flops (G)  & \textbf{R}   & \textbf{A}  & \textbf{R}   & \textbf{A}  & \textbf{R}   & \textbf{A} & \textbf{R}   & \textbf{A} \\ \hline
          yolov5n &    02  & 016 & 0.74 & 0.67 & 0.56 & 0.48 & 0.65 & 0.52  & 0.25 & 0.20\\ 
          yolov5s &    07  & 048 & 0.80 & 0.82 & 0.63 & 0.73 & 0.69 & 0.78 & 0.31 & 0.39 \\ 
          yolov5m &    20  & 108 & 0.76 & 0.80 & 0.65 & 0.72 & 0.69 & 0.77 & 0.35 & 0.35 \\
          yolov5l &    46  & 108 & 0.80 & 0.81 & 0.68 & 0.71 & 0.76 & 0.77 & 0.40 & 0.38 \\ 
          \hline
          yolov5s10 &  07  & 016 & 0.84 & 0.85 & 0.79 & 0.83 & 0.85 & 0.88 & 0.47 & 0.48 \\ 
          yolov5l6 &   76  & 110 & 0.85 & 0.92 & 0.80 & 0.85 & 0.87 & 0.92 & 0.53 & 0.56 \\
          \hline
          yolov11l10 & 25  & 087 & 0.86 & 0.93 & 0.80 & 0.90 & 0.88 & 0.95 & 0.55 & 0.62 \\
          yolov11l19 & 25  & 087 & 0.87 & 0.94 & 0.81 & 0.94 & 0.89 & 0.97 & 0.56 & 0.70 \\
          \bottomrule
    \end{tabular}
    }
\caption{640 input image size for first four rows, 1024 for 5th row, 1280 for next two rows, 1920 for last. R: trained on a dataset with real faces, F: trained on a dataset with anonymized faces. M: million, G: gigaFLOPS. Comparing performance metrics of various YOLOv5 and YOLOv11 models on face detection tasks. The table presents model size parameters (number of parameters in millions and computational complexity in GigaFLOPS), along with precision, recall, and mean Average Precision (mAP) metrics for both real (R) and anonymized (A) face datasets. Smaller models like YOLOv5n show lower performance but require fewer computational resources, while specialized models like YOLOv11l19 achieve the highest accuracy metrics, particularly on anonymized face detection tasks. Input image resolution increases progressively from 640 pixels for baseline models to 1920 pixels for the highest-performing YOLOv11 variants.}
\label{tab:res}
\end{table*}

\begin{table}[!ht]
    \centering
    \begin{tabular}{lcccc}
    \toprule
     & \multicolumn{2}{c}{\textbf{Box Loss}} & \multicolumn{2}{c}{\textbf{Obj Loss}} \\ 
    \cline{2-5} 
    \textbf{Model Name} & \textbf{R}   & \textbf{A}  & \textbf{R}   & \textbf{A}  \\ \hline
             yolov5n    & 0.038 & 0.024 & 0.011 & 0.009 \\ 
             yolov5s    & 0.030 & 0.024 & 0.008 & 0.008 \\
             yolov5m    & 0.029 & 0.024 & 0.008 & 0.008 \\ 
             yolov5l    & 0.022 & 0.023 & 0.010 & 0.010 \\
             \hline
             yolov5s10  & 0.032 & 0.032 & 0.017 & 0.017 \\ 
             yolov5l6   & 0.013 & 0.012 & 0.010 & 0.009 \\
             \hline
             yolov11l10 & 0.012 & 0.012 & 0.008 & 0.007 \\
             yolov11l19 & 0.012 & 0.011 & 0.007 & 0.007 \\
             DINO (ResNet50) & 0.030 & 0.060 & 0.008 & 0.008 \\
             DINO (SWIN) & 0.030 & 0.020 & 0.006 & 0.005 \\
             \bottomrule
    \end{tabular}
    \caption{Validation loss. DINO: 4 scales with ResNet-50 and Swin backbone. Comparing training loss metrics of YOLOv5, YOLOv11, and DINO models on face detection tasks. The table presents box loss and objectness loss values for models trained on real (R) and anonymized (A) face datasets. The results demonstrate a general trend of decreasing loss values as model complexity increases, with YOLOv11 variants achieving the lowest loss values overall. The DINO architecture with the Swin backbone shows competitive objectness loss performance despite varied box loss metrics. These validation loss measurements provide insight into model convergence and detection accuracy across architecture designs and dataset types.}
    \label{tab:loss}
\end{table}

\subsection{Segmentation}

In Tables \ref{tab:res_seg_bce} and \ref{tab:res_seg_dice}, we present segmentation results using BCE and Dice loss, respectively. Among all evaluated models, \textbf{UNet (ResNet34)} consistently outperforms others across real and anonymized datasets, achieving the highest mIoU and Dice scores with minimal performance drop due to anonymization. For instance, it maintains a Dice score of 0.975 on real and anonymized inputs with BCE loss and achieves a slightly better mIoU when trained with Dice loss. In contrast, DeepLabV3 and Fully Convolutional Networks (FCN) models show more significant performance drops, particularly in mIoU, highlighting the sensitivity of these architectures to facial anonymization. These results suggest that UNet-based architectures are more resilient to anonymized training data in pixel-wise segmentation tasks.

Tables \ref{tab:loss_bce} and \ref{tab:loss_dice} report the corresponding test losses for BCE and Dice loss functions. UNet models, especially the ResNet34 variant, exhibit lower and more stable loss values across both loss types and training settings, further reinforcing their robustness. While DeepLabV3 and FCN models show relatively low BCE loss values, their mIoU and Dice scores do not reflect similar reliability. Interestingly, Dice loss values in Table \ref{tab:loss_dice} show more variance between real and anonymized data, indicating sensitivity to label distribution and model generalization. These findings validate that specific segmentation architectures like UNet can preserve performance under anonymized conditions, making them more suitable for privacy-preserving visual learning tasks.

\begin{table*}[!ht]
\centering
\renewcommand{\arraystretch}{1.5}
    \begin{tabular}{lccccc}
    \toprule
     & \multicolumn{1}{c}{\textbf{Model Size}} & \multicolumn{2}{c}{\textbf{mIoU}} & \multicolumn{2}{c}{\textbf{Dice Coefficient}} \\
    \cline{2-6} 
    \textbf{Model Name} & \#params (M)  & \textbf{R}   & \textbf{A}  & \textbf{R}   & \textbf{A} \\ \hline
          UNet (ResNet34)       & 24 & 0.714 & 0.708 & 0.975 & 0.975  \\
          UNet (ResNet50)       & 32 & 0.709 & 0.708 & 0.975 & 0.974  \\
          DeepLabV3 (ResNet50)  & 39 & 0.600 & 0.588 & 0.962 & 0.961  \\
          DeepLabV3 (ResNet101) & 58 & 0.605 & 0.600 & 0.962 & 0.960  \\
          FCN (ResNet50)        & 32 & 0.630 & 0.619 & 0.964 & 0.963 \\
          FCN (ResNet101)       & 51 & 0.609 & 0.600 & 0.961 & 0.962 \\
          \bottomrule
    \end{tabular}
\caption{R: trained on dataset with real faces, A: trained on dataset with anonymized faces. M: million. Loss: \textbf{BCE}. Comparing segmentation performance of various deep learning architectures for face segmentation tasks. The table presents model size (number of parameters in millions) alongside mean Intersection over Union (mIoU) and Dice Coefficient metrics for models trained on both real (R) and anonymized (A) face datasets. UNet architectures with ResNet backbones demonstrate superior performance across both metrics compared to DeepLabV3 and FCN alternatives, despite having fewer parameters in some cases. Notably, segmentation performance remains relatively consistent between real and anonymized face datasets across all model configurations, with only minimal degradation observed when processing anonymized faces. All models were trained using the Binary Cross-Entropy (BCE) loss function.}
\label{tab:res_seg_bce}
\end{table*}

\begin{table*}[!ht]
\centering
\renewcommand{\arraystretch}{1.5}
    \begin{tabular}{lccccc}
    \toprule
     & \multicolumn{1}{c}{\textbf{Model Size}} & \multicolumn{2}{c}{\textbf{mIoU}} & \multicolumn{2}{c}{\textbf{Dice Coefficient}} \\
    \cline{2-6} 
    \textbf{Model Name} & \#params (M)  & \textbf{R}   & \textbf{A}  & \textbf{R}   & \textbf{A} \\ \hline
          UNet (ResNet34) & 24 & 0.726 & 0.723 & 0.975 & 0.975  \\
          UNet (ResNet50) & 32 & 0.710 & 0.702 & 0.974 & 0.973  \\
          \bottomrule
    \end{tabular}
\caption{R: trained on dataset with real faces, A: trained on dataset with anonymized faces. M: million. Loss: \textbf{Dice}. Comparing the segmentation performance of UNet architectures with different ResNet backbones when trained using the Dice loss function. The table presents model size (number of parameters in millions) alongside mean Intersection over Union (mIoU) and Dice Coefficient metrics for models trained on both real (R) and anonymized (A) face datasets. Results indicate that the lighter UNet (ResNet34) model achieves slightly superior performance compared to the larger UNet (ResNet50) variant across both metrics. Performance remains highly consistent between real and anonymized face datasets, with minimal differences in segmentation quality. Compared to BCE loss results (from the previous table), the Dice loss function appears to yield modest improvements in mIoU scores while maintaining equivalent Dice Coefficient values.}
\label{tab:res_seg_dice}
\end{table*}

\begin{table}[!ht]
    \centering
    \begin{tabular}{lcc}
    \toprule
     & \multicolumn{2}{c}{\textbf{BCE Loss}}  \\ 
    \cline{2-3} 
    \textbf{Model Name} & \textbf{R}  & \textbf{A}  \\ \hline
             UNet (ResNet34) & 0.178 & 0.184 \\ 
             UNet (ResNet50) & 0.204 & 0.204 \\
             DeepLabV3 (ResNet50) & 0.105 & 0.109 \\ 
             DeepLabV3 (ResNet101) & 0.100 & 0.116 \\
             FCN (ResNet50) & 0.099 & 0.102 \\ 
             FCN (ResNet101) & 0.094 & 0.105 \\
             \bottomrule
    \end{tabular}
    \caption{Test loss: Presenting Binary Cross Entropy (BCE) loss values for various segmentation models on test datasets containing both real (R) and anonymized (A) faces. Test BCE loss for segmentation models on real (R) and anonymized (A) face datasets. Despite strong performance metrics elsewhere, UNet shows higher loss, while FCN-ResNet101 performs best. Loss differences between R and A are minimal across models.}
    \label{tab:loss_bce}
\end{table}

\begin{table}[!ht]
    \centering
    \begin{tabular}{lcc}
    \toprule
     & \multicolumn{2}{c}{\textbf{Dice Loss}}  \\ 
    \cline{2-3} 
    \textbf{Model Name} & \textbf{R}  & \textbf{A}  \\ \hline
             UNet (ResNet34) & 0.560 & 0.618 \\ 
             UNet (ResNet50) & 0.659 & 0.544 \\
    \bottomrule
    \end{tabular}
    \caption{Test loss: presenting Dice loss values for UNet segmentation models with different ResNet backbones on test datasets containing both real (R) and anonymized (A) faces. The UNet (ResNet34) model shows lower Dice loss on real faces than the UNet (ResNet50) variant, but this relationship reverses when processing anonymized faces. Notably, the UNet (ResNet50) demonstrates significantly better loss performance on anonymized faces than on real faces, while the opposite pattern appears for the UNet (ResNet34) architecture. These divergent trends in loss values between model variants and dataset types highlight potential differences in how these architectures respond to face anonymization during training, despite the relatively consistent performance metrics (mIoU and Dice Coefficient) reported in previous tables.}
    \label{tab:loss_dice}
\end{table}

\subsection{Limitations} To assess the robustness of the proposed privacy-preserving method, we conducted a validation study on the Euro City Person (ECP) \cite{braun2018eurocity} daytime dataset, which contains approximately 40,000 training and testing images. Among these, 7,712 training images include visible faces, along with 1,533 validation images. Due to the generally low visibility in this dataset, facial features are often not discernible. We employed RetinaFace to detect facial bounding boxes and used Inpaint-Anything to replace the detected regions with synthetic faces. However, the quality of these modified images is noticeably lower compared to those in the Pvt-IDD dataset (see Figure \ref{fig:samples2}, \ref{fig:samples3}, and Table \ref{tab:res_ecp}). 

\begin{figure}[!ht]
    \centering
    \includegraphics[width=\textwidth]{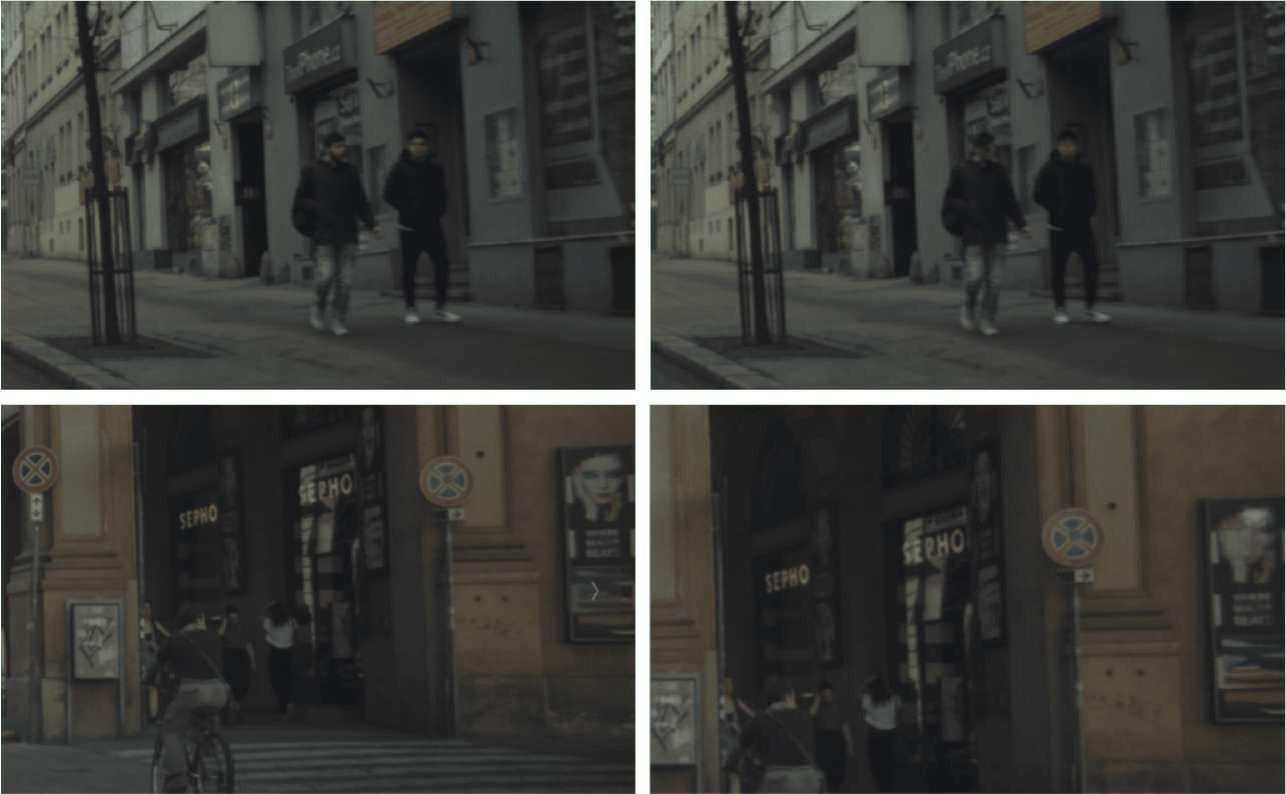}
    \caption{Cropped and zoomed image inpainting samples for the ECP dataset \cite{braun2018eurocity}  images.}
    \label{fig:samples2}
\end{figure}

\begin{figure}[!ht]
    \includegraphics[width=\textwidth]{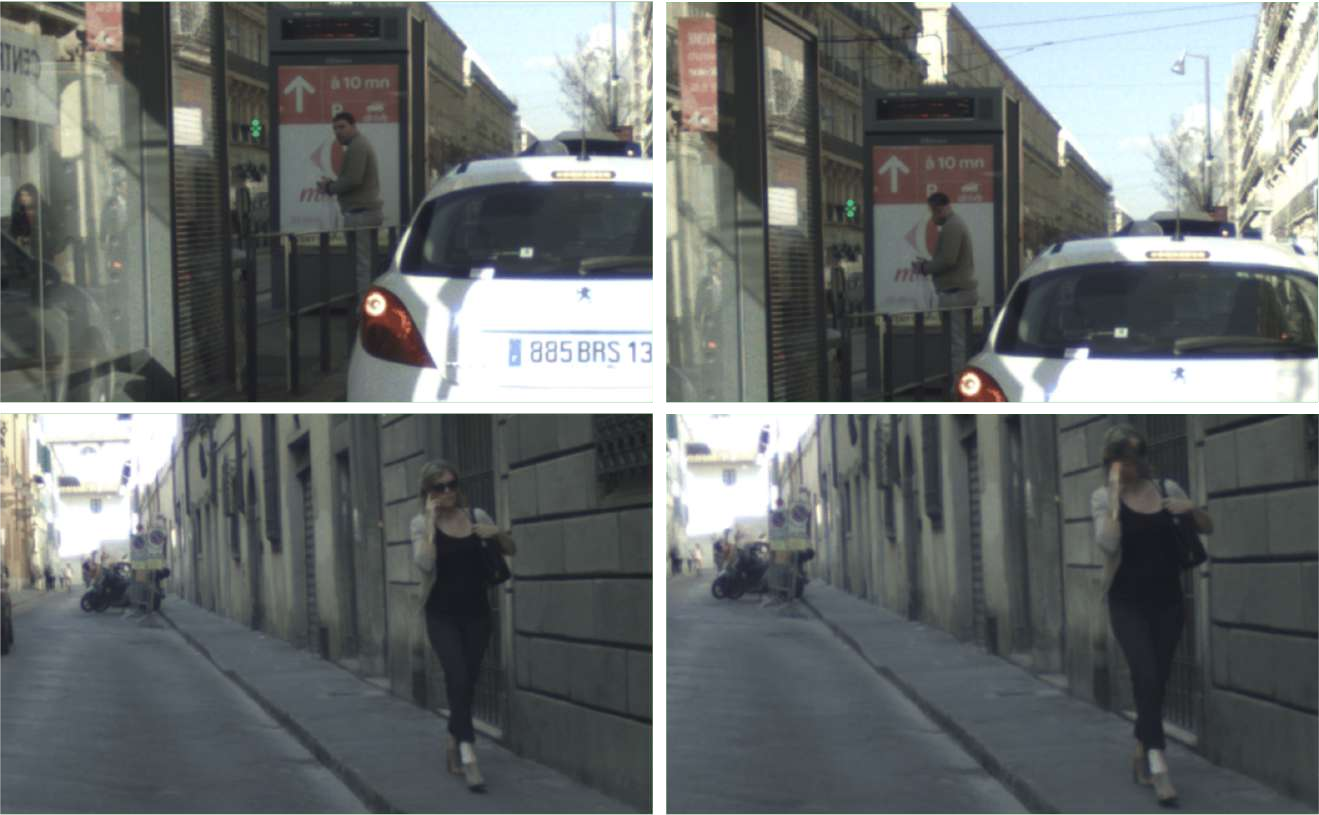}
    \caption{Cropped and zoomed image inpainting samples for the ECP dataset \cite{braun2018eurocity}  images.}
    \label{fig:samples3}
\end{figure}

The results in Table \ref{tab:res_ecp} indicate that face inpainting on the ECP dataset significantly reduces detection performance, with precision and mAP scores notably lower for models trained on anonymized (A) data compared to real (R) data. This drop highlights that the inpainted faces generated using Inpaint-Anything are of poor quality, especially compared to those in the Pvt-IDD dataset, ultimately degrading the effectiveness of downstream face detection models.
\begin{table}[!ht]
\centering
\resizebox{\columnwidth}{!}{
\renewcommand{\arraystretch}{1.5}
    \begin{tabular}{lcccccccccc}
    \toprule
     & \multicolumn{2}{c}{\textbf{Model Size}} & \multicolumn{2}{c}{\textbf{Precision}} & \multicolumn{2}{c}{\textbf{Recall}} & \multicolumn{2}{c}{\textbf{mAP50}}  & \multicolumn{2}{c}{\textbf{mAP50-90}}\\
    \cline{2-11} 
    \textbf{Model Name} & \#params (M)   & Flops (G)  & \textbf{R}   & \textbf{A}  & \textbf{R}   & \textbf{A}  & \textbf{R}   & \textbf{A} & \textbf{R}   & \textbf{A} \\
          \hline
          yolov11l10 &  25 & 87 & 0.50 & 0.40 & 0.73 & 0.55 & 0.51 & 0.37 & 0.49 & 0.39 \\
          yolov11l19 &  25 & 87 & 0.42 & 0.30 & 0.60 & 0.35 & 0.40 & 0.25 & 0.30 & 0.22 \\
          \bottomrule
    \end{tabular}
    }
\caption{Face Detection on ECP val dataset: The image size of 1280 for the first rows, 1920 for the second. R: trained on a dataset with real faces, A: trained on a dataset with anonymized faces. M: million, G: gigaFLOPS.}
\label{tab:res_ecp}

\end{table}

\section{Summary}

This work presents Pvt-IDD, the first dataset annotated with faces and license plates specifically for autonomous driving applications in Indian urban settings. Our experiments will reveal significant challenges Pvt-IDD poses to current state-of-the-art face and license plate detection models, highlighting gaps in privacy-preserving capabilities. We aim for Pvt-IDD to stimulate further research into robust anonymization techniques for autonomous driving.

Building on these findings, we will propose a new baseline face and license plate detection model by refining the YOLOX architecture. This approach, which operates without needing labeled data, surpasses several strong, generic, state-of-the-art models, demonstrating its utility in autonomous driving contexts. To advance the field, we plan to publish an in-depth failure analysis that explores current face and license plate detection model limitations, offering insights to guide future developments.

In conclusion, our experiments demonstrate that adequate face anonymization using the Inpaint-Anything model, applied specifically within bounding boxes detected by RetinaFace, preserves the utility of visual data for downstream tasks such as segmentation and detection. Across segmentation benchmarks, models trained on anonymized datasets exhibited comparable performance to those trained on real-face datasets, with minimal drop in mIoU and Dice scores. Similarly, the face detection models trained on anonymized data maintained substantial precision, recall, and mAP metrics, suggesting that the anonymization process does not significantly degrade detection capability. These results collectively highlight that our selective, bbox-level anonymization approach achieves a practical balance between data privacy and task performance, making it a viable strategy for privacy-preserving computer vision applications.

Future directions include expanding Pvt-IDD with a wider variety of real-world scenarios to support privacy-preserving techniques under diverse conditions. We recognize that our dataset and models could inadvertently propagate biases and stereotypes present in large vision models, posing ethical risks. To address this, we plan to investigate methods, including rule-based filtering and tailored classifiers, for identifying and excluding harmful data, ensuring more responsible application of autonomous driving technologies.

 \chapter{Geological Mapping and Generative Models}\label{chap:geo_vae}

\section{Introduction}

Geological mapping is crucial for various purposes, such as assessing the mineralization potential of a region and creating prospectivity maps \citep{bachri2019machine,wang2021lithological,shirmard2022review}. Remote sensing provides an efficient alternative to traditional fieldwork for geological mapping, which is often costly and sometimes impractical due to harsh topography or political constraints \citep{yu2012towards}. Optical remote sensing images, captured by scanners on platforms such as satellites, cover multiple spectral bands ranging from the visible to infrared regions of the electromagnetic spectrum \citep{clark2003imaging}. The detailed information in multispectral images aids geological analysis by identifying and mapping rocks and minerals based on their specific spectral absorption properties \citep{chen2010integrating,weilin2016application,lu2021lithology}. This spectral data forms the foundation for spectrum-based approaches in classifying and mapping image pixels. The spectral and spatial resolution of remote sensing data, especially from satellites, allows for identifying rock units over extensive areas \citep{pour2018mapping}. However, geological mapping via remote sensing, particularly optical data, faces different challenges due to the influence of land cover, which can obscure rock outcrops, introduce spectral mixing, and vary seasonally, complicating the interpretation of spectral signatures. Geological mapping focuses on identifying and characterizing rock types, geological formations, and structures to understand Earth's history and resources. In contrast, land use/land cover mapping classifies and monitors human activities and natural features on the Earth's surface for urban planning, agriculture, and environmental management \citep{bouslihim2022comparing,tabassum2023exploring,akanbi2024integrating,amare2024impacts,masoudi2024assessment}. Land cover types such as vegetation and urban structures can mask or mimic rock characteristics, and shadows can mislead classifications \citep{hashim2013automatic,bentahar2020fracture,shebl2021reappraisal,shirmard2022review}. However, integrating optical data with other remote sensing data types and employing advanced data analytics tools enhances accuracy, making optical data a valuable tool for geological mapping despite the challenges posed by land cover \citep{galdames2019rock,ran2019rock}. 

Standard geological mapping techniques based on remote sensing data processing involve comparing absorption features to reference spectra or training samples \citep{chen2010integrating}. However, collecting sufficient pixels for reference spectra or training samples is challenging. Geological processes influence the spectral variability in rocks, which depends on their chemical and mineral composition, grain size, texture, and structure \citep{sgavetti2006reflectance}. This spectral variability significantly impacts geological mapping using remote sensing data. Insufficient training samples with highly correlated spectral bands often lead to challenges in discriminating rock units \citep{bruzzone2014review}. Therefore, extracting useful information from images and removing redundant information is essential to improve geological discrimination \citep{sun2019hyperspectral}. Machine learning models, particularly deep learning models, have proven to be powerful tools for extracting valuable information from remote sensing data and mapping geological anomalies \citep{zhao2023recognition,dou2024large}. Such models utilize dimensionality reduction, clustering, and classification approaches to automatically analyze and interpret complex datasets \citep{bedini2009mapping, carneiro2012semiautomated,sahoo2017pattern, awange2020hybrid,dou2024remote}. In remote sensing, machine learning models have found widespread applications \citep{zuo2019deep,dou2024time}, and their adoption in mineral exploration is gaining significant traction \citep{shirmard2022comparative}. These models have been integrated with conventional image processing techniques and geological surveys, significantly enhancing remote sensing for geological mapping and mineral prospectivity mapping \citep{shirmard2022review,guo2023gis,hajaj2024review}.

Unsupervised machine learning techniques, such as dimensionality reduction and transformation methods, have shown remarkable efficiency in distinguishing between geological units, making them invaluable for identification and mapping purposes \citep{behnia2012remote}. Data compression and transformation techniques such as principal component analysis (PCA) \citep{wold1987principal}, independent component analysis (ICA) \citep{comon1994independent,forootan2012independent}, and minimum noise fraction (MNF) \citep{nielsen2010kernel} have the ability to suppress irradiance that dominates the bands of remote sensing data \citep{gao2017optimized}, thereby enhancing the spectral reflectance of geological features \citep{richards1999remote}. Supervised learning models can partially automate the extraction of features specifically related to the labeled data and address some of the challenges of semi-manual analysis. However, supervised learning requires ground truth data and expert input, which are costly and can introduce bias \citep{gewali2018machine}. The availability of fully labeled training data for geological features is limited, compounding the challenges supervised learning faces. Supervised learning models have been applied to multivariate datasets, such as multispectral remote sensing data, to extract specific spectral responses from different rock units \citep{asadzadeh2016review, nalepa2020unsupervised}; however, the large spectral variability and limited availability of training data make classification a challenging task in the geological analysis of remotely sensed data.

Autoencoders are a class of machine learning models primarily used for unsupervised learning tasks such as dimensionality reduction and feature learning \citep{kramer1992autoassociative,kingma2019introduction,li2023comprehensive}. They consist of two main components: an encoder and a decoder. Autoencoders provide a lower-dimensional (reduced) representation of the data \citep{wang2016auto,kingma2019introduction} using a latent vector that enables the data to be represented with fewer features. The latent vector has been used in remote sensing data processing to extract features or structures, such as geological units \citep{protopapadakis2021stacked}, and studies have shown that the features in the latent vector correspond to different types of minerals \citep{calvin2018band,gao2021generalized}. Latent vectors play a crucial role in maintaining feature independence, essential for preventing the mixing of different mineral types \citep{protopapadakis2021stacked}. This separation ensures that each mineral type or rock unit is accurately represented and distinguished, enhancing the clarity and reliability of the analysis.

Stacking is an ensemble learning approach \citep{sagi2018ensemble} that has been shown to enhance the performance of autoencoders. Stacked autoencoders \citep{vincent2010stacked} have proven helpful in various applications, such as feature extraction for multi-class change detection in hyperspectral images \citep{lopez2018stacked} and the classification of multispectral and hyperspectral images \citep{ozdemir2014hyperspectral,lv2017remote}. They provide a powerful framework for learning deep data representations in an unsupervised manner, which is particularly beneficial for machine learning tasks where labeled data is scarce or expensive to obtain \citep{vincent2010stacked}. In a recent study, \cite{protopapadakis2021stacked} addressed noise in input signals due to dimensionality redundancy without losing important features using a stacked autoencoder. Additionally, stacked autoencoders can help in understanding the nonlinearity between spectral bands and distinguishing complex features, such as geological units \citep{dai2023optimization}.

The extracted latent vectors from dimensionality reduction or transformation techniques can be used as inputs for clustering methods, such as $k$-means clustering, to group pixels and map geological units \citep{davies1979cluster,gao2021generalized}. Clustering methods are unsupervised learning techniques \citep{davies1979cluster,omran2007overview} that organize a dataset into groups based on the similarity of samples (instances) using various distance measures \citep{xie2016unsupervised,yadav2019study}. These methods have been widely used in remote sensing applications alongside dimensionality reduction techniques \citep{rodarmel2002principal}. Examples include pixel clustering \citep{bandyopadhyay2007multiobjective}, fuzzy clustering for change detection \citep{ghosh2011fuzzy}, image segmentation \citep{fan2009single}, mean-shift clustering of multispectral imagery \citep{bo2009mean}, and hyperspectral image subspace clustering involving dimension reduction, subspace identification, and clustering \citep{zhang2016spectral}. However, clustering methods face challenges due to the curse of dimensionality, which complicates their use in remote sensing data processing.

In large datasets, such as multispectral remote sensing data covering vast areas, the combined power of autoencoders and clustering can address the limitations and challenges of conventional methods. A major challenge in remote sensing-based geological mapping arises from the intricate and diverse nature of geological features, coupled with the often remote and inaccessible locations of target areas \citep{pal2020optimized,dos2021deep,shirmard2022review}. Although conventional dimensionality reduction methods may struggle with nonlinear data, autoencoders have been effective in modeling nonlinear relationships. Stacked autoencoders, featuring multiple interconnected layers that capture hierarchical data representation, can be useful for remote sensing data processing. Leveraging autoencoders with clustering methods has the potential to provide accurate geological maps.

In this study, we present an unsupervised machine-learning framework that combines stacked autoencoders for dimensionality reduction with $k$-means clustering to map geological units, aiding in the identification of potential mineralized areas. The role of the stacked autoencoders is to compress data with a wide range of spectral and spatial features, enhancing both the accuracy and efficiency of geological mapping. We use $k$-means clustering in our framework to generate clustered maps from the reduced dimensions. We evaluate our framework across multiple multispectral remote sensing datasets (Landsat 8, ASTER, and Sentinel-2) to map geological units in the Mutawintji region of Western New South Wales (NSW), Australia. We also compare our results with PCA and canonical autoencoders and provide open-source code to further extend the study.

\section{Materials and methods}
\paragraph{Geological setting:}

The Mutawintji region is located in the far west of NSW and the semi-arid zone of the state. It is approximately 1,150 kilometers (km) west of Sydney and covers an area of approximately 700 $km^2$ within the Curnamona Province, a geological province that covers a large area of southeastern Australia. As shown in Fig. \ref{fig_1}(a), the study area is situated on the eastern margin of the Curnamona Province, characterized by a thick sequence of sedimentary rocks deposited in a shallow marine environment during the Cambrian and Ordovician periods \citep{hewson2005seamless}. The geological setting of the study area is dominated by sedimentary rocks of the Cambrian and Ordovician periods, which are around 500 to 480 million years old. However, Quaternary residual and colluvial deposits cover a significant part of the sedimentary rocks \citep{young2009ordovician}. The sedimentary rocks comprise various rock types, including sandstone, shale, siltstone, and limestone. These rocks were formed by the accumulation of sediments in an ancient sea that covered much of the region during the Cambrian and Ordovician periods. The sediment was later buried and compressed, eventually forming the sedimentary rocks that can be observed on the surface. In addition to the sedimentary rocks, the geology of the study area also includes a range of other rock types, including volcanic rocks and granites. Major faults in the study area strike North-South or North West--South East, which separates Ordovician quartzite and sandstone units from shale, siltstone, and limestone in the southwest. Fig. \ref{fig_1}(b) presents the geology of the study area, characterized by a complex and diverse range of rock types that reflect the region's long and varied geological history and make it an interesting area for our study.

\begin{figure*}
    \centering
    \includegraphics[width=0.85\textwidth]{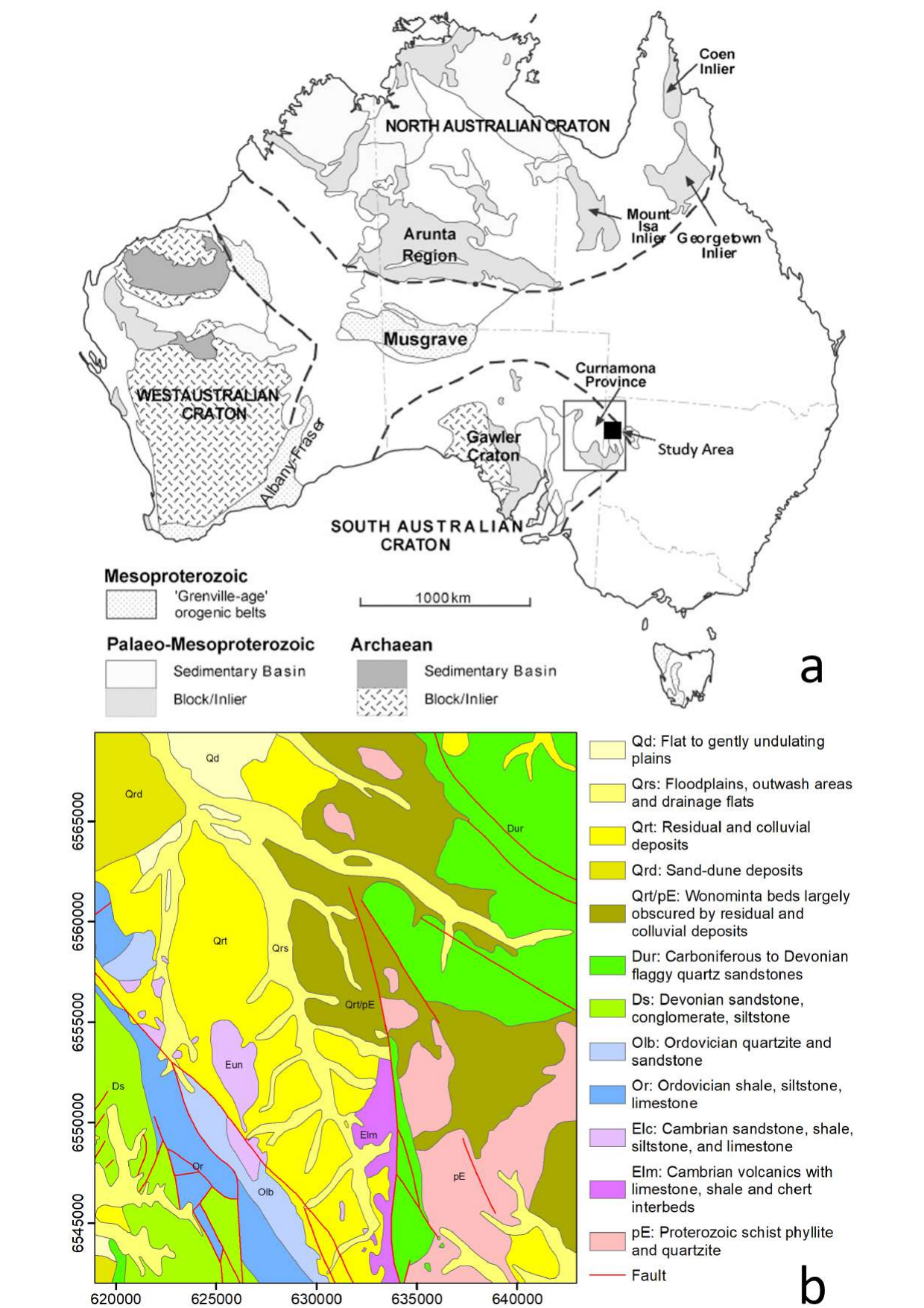}
    \caption{a) The Curnamona Province and other Proterozoic terrains in Australia \citep{barovich2008tectonic}; the study area is shown using a black square. b) Simplified geological map of the study area.}
    \label{fig_1}
\end{figure*}

\paragraph{Remote sensing data and pre-processing:}

Our study utilizes three types of multispectral remote sensing data, each with its unique capabilities and resolutions. Landsat 8 (launched in 2013) is equipped with two sensors—the Operational Land Imager (OLI) and the Thermal Infrared Sensor (TIRS). It provides images in 11 different spectral bands, with resolutions ranging from 15 meters (m) for the panchromatic band to 30 m for the visible and near-infrared (VNIR) and shortwave infrared (SWIR) bands. The thermal bands numbered 10 and 11 have a resolution of 100 m \citep{zhang2016integrating}. Geological remote sensing for mapping has significantly improved with the launch of the ASTER sensor on the Terra platform in 1999. ASTER's VNIR bands have a spatial resolution of 15 m, six SWIR bands with a resolution of 30 m, and five thermal infrared bands with a resolution of 90 m \citep{rowan2003lithologic}. Sentinel-2A and Sentinel-2B are twin satellites in sun-synchronous orbit, phased 180 degrees apart. Their onboard multispectral instrument captures data in 13 spectral bands, ranging from VNIR to SWIR, with spatial resolutions varying from 10 to 60 m \citep{drusch2012sentinel2}. The use of these diverse data types will enable us to comprehensively map the geological units in the study area.

In this study, we focus on spectral bands that are particularly important in geological remote sensing due to their characteristic behaviors, such as high absorption or reflectance in different geological units, which allow for the generation of meaningful geological maps. Accordingly, we select seven bands from OLI (bands 1--7), nine bands from ASTER (bands 1--9), and ten bands from Sentinel-2 (bands 2--8, 8a, 11, and 12). We obtained a cloud-free Landsat 8 scene of the study area from the US Geological Survey Earth Resources Observation and Science (USGS EROS) center\footnote{\url{https://earthexplorer.usgs.gov} (accessed on 31 January 2022)}. The image, captured on 5 October 2021, is a level-1T product that has been terrain-corrected. The acquired ASTER image of the study area was captured on 10 August 2001; this cloud-free level-1-precision terrain-corrected product (ASTER\_L1T) was also obtained from the USGS EROS center. Additionally, we downloaded a cloud-free Sentinel-2A scene of the study area, captured on 19 March 2022, from the European Space Agency via the Copernicus Open Access Hub\footnote{\url{https://scihub.copernicus.eu/} (accessed on 28 March 2022)}. This Sentinel-2 image is a level-1C product that has undergone radiometric and geometric corrections and orthorectification, resulting in top-of-atmosphere reflectance values.

The remote sensing datasets used in this study are pre-georeferenced to the Universal Transverse Mercator (UTM) zone 54 South, eliminating the need for geometric correction. The Landsat 8 OLI and ASTER data have been radiometrically corrected, and the reflectance data are used as inputs to the workflow. To match the spatial resolution of the VNIR bands (15 m), the SWIR bands in the ASTER data are resampled using the nearest neighbor technique \citep{taunk2019brief}. After resampling, a data layer is created by stacking the VNIR and SWIR bands for further processing. The Sentinel-2 image includes atmospheric correction. We stack the Sentinel-2 VNIR and SWIR bands using the nearest neighbor technique to create a 10-band dataset with 10 m spatial resolution. Before processing, all images are resized to fit the target area size. Figure \ref{fig_2} shows false color composite images created by stacking images from the Landsat 8, ASTER, and Sentinel-2 datasets.

\begin{figure*}
    \centering
    \includegraphics[width=\textwidth]{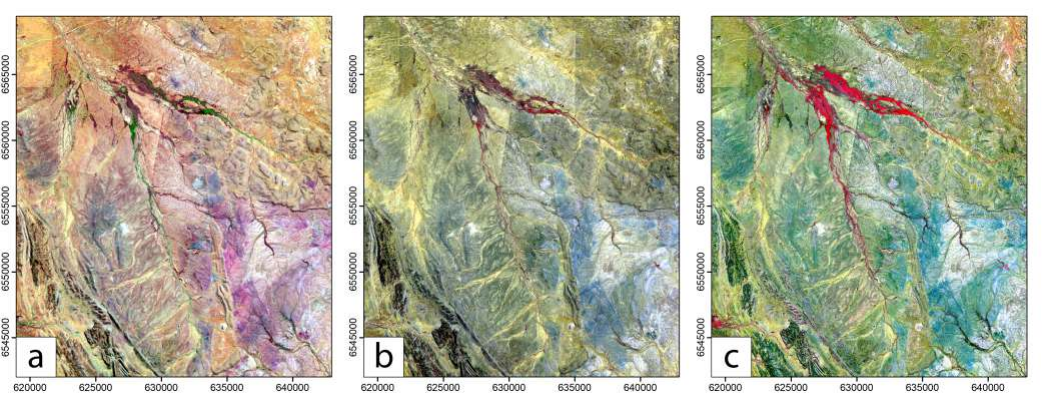}
    \caption{False color composite images generated using a) Landsat 8 (RGB 753), b) ASTER (RGB 321), and c) Sentinel-2 (RGB 843) data.}
    \label{fig_2}
\end{figure*}

\paragraph{Stacked autoencoders:}

The autoencoder is a dimensionality reduction technique used to uncover a lower-dimensional manifold, also known as the latent space (vector) of intrinsic dimensionality, while preserving the essential information present in the original data. Fig. \ref{fig_3}(a) depicts a canonical autoencoder consisting of an encoder, which reduces the dimensionality of the input data, and a decoder, which reconstructs the original data from the encoded representation. Unlike conventional dimensionality reduction methods such as PCA, which find linear combinations of the original features \citep{mackiewicz1993principal,abdi2010principal}, autoencoders can learn more abstract and higher-level features of the data. This capability is particularly useful when the data has complex patterns and structures. Although PCA is relatively simpler to implement and use for dimensionality reduction, training autoencoders can be more complex and require more training time and computational resources \citep{lv2017remote}.

The stacked autoencoder architecture comprises multiple layers of autoencoders, each trained independently as an individual autoencoder \citep{xu2019review}, as shown in Fig. \ref{fig_3}(b). The output from one layer serves as the input for the next, enabling the network to learn hierarchical representations of the data. Fig. \ref{fig_3}(b) presents a layered autoencoder with three encoders and decoders stacked sequentially \citep{dai2023optimization}. A typical stacking approach involves at least two layers: the first layer contains several models (any machine learning model), and the second layer combines the predictions using a simpler model that is also trained. In the context of a multispectral image dataset, the first layer captures meaningful features and patterns, as depicted in Fig. \ref{fig_4}. Additionally, the stacked architecture acts as a regularizer, preventing overfitting by forcing the model to learn more generalized features. Each layer acts as a feature extractor, reducing the data's dimensionality and encouraging the model to learn more generalizable features. Stacked autoencoders can extract features from the input data for use in other machine learning models, thereby improving their performance \citep{zhou2019learning}.

Unlike linear dimensionality reduction methods such as PCA, autoencoders do not aim to preserve specific attributes such as distance or topology. In scenarios where the relationships between input features are deep and nonlinear \citep{zhang2018automated}, traditional dimensionality reduction methods often fail to yield satisfactory results \citep{zhong2021advances}. Recognizing this limitation, the stacked autoencoder was developed, as the canonical autoencoder alone may struggle to address the nonlinearity inherent in many applications \citep{li2021distributed}. The architecture of a stacked autoencoder is entirely learned from the data, ensuring that the model adapts to the data's characteristics without the need for manual selection of nonlinear functions. In many cases, determining whether the relationship between spectral bands in a remote sensing dataset and target characteristics is linear or nonlinear can be challenging. The stacked autoencoder addresses both global and local characteristics within the dataset while reducing its dimensionality \citep{zhang2018local}.

\begin{figure}
    \centering
    \includegraphics[width=0.99\textwidth]{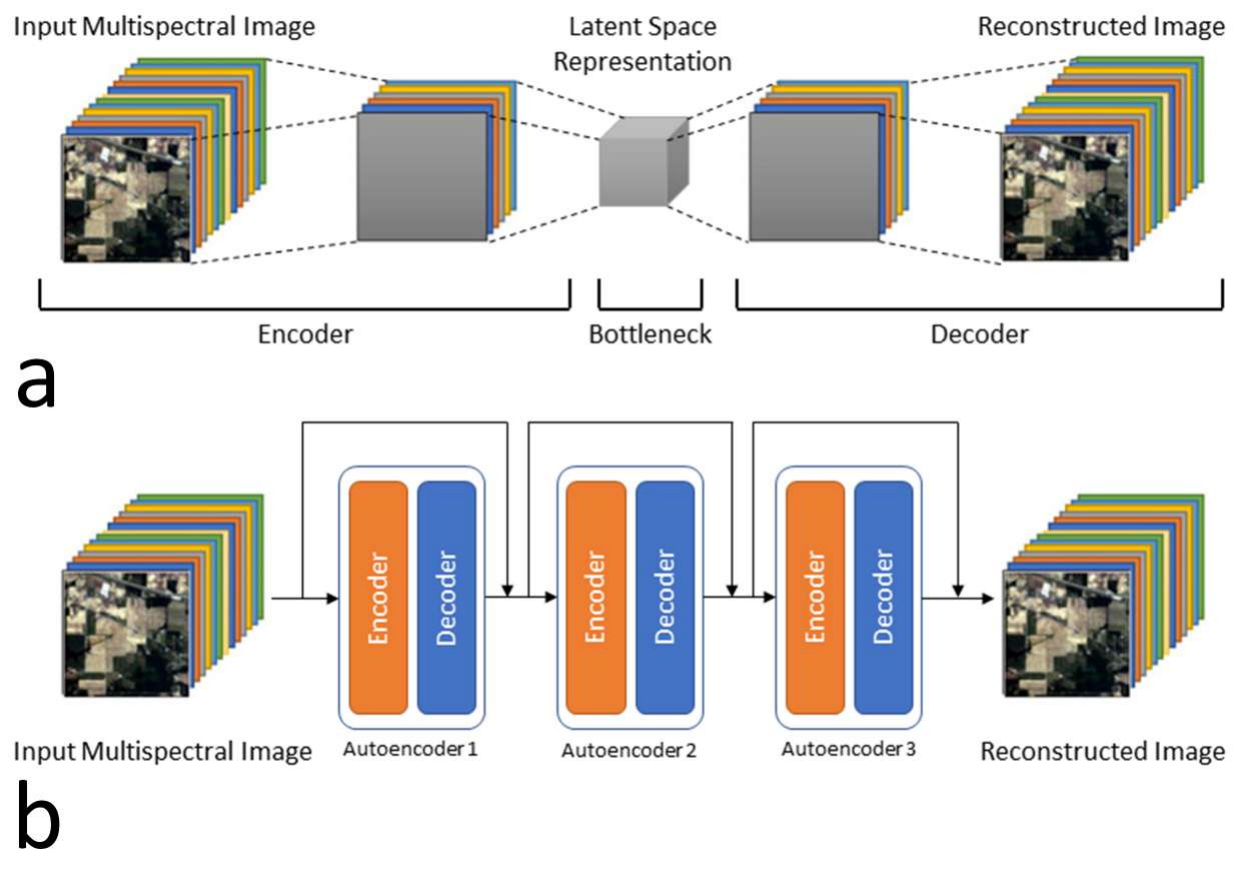}
    \caption{a) The architecture of a canonical autoencoder consists of an encoder and a decoder. The encoder takes the multispectral image as input ($x$) and reduces the dimension to the latent vector ($z$), where dim($x$) $>=$ dim($z$). The decoder reconstructs the image from the latent vector ($Z$). b) A stacked autoencoder with three encoders and decoders for each stacking level. Each stack level's encoder and decoder architecture is the same as the canonical autoencoder, with a number of hidden layers for each.}
    \label{fig_3}
\end{figure}

\begin{figure}
    \centering
    \includegraphics[width=0.79\textwidth]{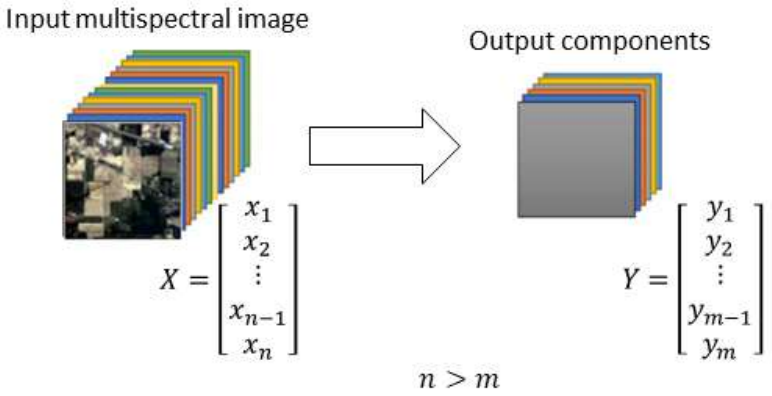}
    \caption{The visualization of the dimensionality reduction for a multispectral dataset. The number of input spectral bands $n$ is reduced to $m$ in the output dataset. Each colored layer in the input image and the output represents a spectral band and a component, respectively.}
    \label{fig_4}
\end{figure}

\paragraph{Clustering:}

Unsupervised machine learning identifies patterns and relationships within the data autonomously without the need for labeled data. This is particularly useful for tasks such as clustering, anomaly detection, and association \citep{yadav2019study,awange2020hybrid}. Clustering, a key technique in unsupervised learning, involves grouping a set of objects in such a way that objects in the same group (or cluster) are more similar to each other than to those in other groups \citep{jain1999data}. Popular clustering methods include hierarchical clustering, DBSCAN (Density-Based Spatial Clustering of Applications with Noise), and $k$-means clustering \citep{jain2010data}. $k$-means clustering is one of the most widely used due to its simplicity and efficiency. It partitions the data into $k$ clusters, where each data point belongs to the cluster with the nearest mean, serving as the cluster's centroid \citep{sakthivel2021conspectus}.

$k$-means clustering has numerous applications, particularly in image processing and geological mapping \citep{shirmard2022review}. $k$-means is often used for image processing (computer vision) tasks such as image segmentation, where the goal is to partition an image into meaningful regions for easier analysis \citep{sakthivel2021conspectus}. In the case of remote sensing data, the algorithm works by treating pixel values as data points and grouping similar pixels into clusters, thereby simplifying the image into distinct segments \citep{selim1984kmeans}. In geological mapping, $k$-means can be instrumental in categorizing different landforms or mineral compositions based on remote sensing data. Geologists can identify and map various geological features, such as rock types, by clustering spectral data from satellite images \citep{shirmard2022review}. This application enhances the accuracy of geological surveys and aids in resource exploration and environmental monitoring. The versatility and effectiveness of $k$-means make it a valuable tool in both image processing and remote sensing-based geological mapping, enabling the extraction of meaningful patterns and insights from complex datasets.

\paragraph{Elbow method:}

The elbow method is a widely used heuristic for determining the optimal number of clusters in $k$-means clustering \citep{celebi2013comparative}. This method involves running $k$-means clustering for a range of $k$ values and plotting the resulting sum of squared errors (SSE), also known as the within-cluster sum of squares (WCSS). The SSE measures the compactness of the clusters, with lower values indicating more tightly packed clusters \citep{patel2022approaches}. As $k$ increases, SSE naturally decreases because clusters have fewer data points, making them more compact. The key idea of the elbow method is to identify the point where SSE improvement dramatically slows down, forming an elbow in the plot \citep{onumanyi2022autoelbow}. This point suggests a suitable trade-off between the number of clusters and the compactness of the clusters, representing the optimal $k$. The elbow method is popular due to its simplicity and intuitive graphical representation, making it accessible even to those without advanced statistical knowledge \citep{onumanyi2022autoelbow}.

Despite its advantages, the elbow method has several limitations. One major drawback is its reliance on visual interpretation, which can be subjective and may lead to different conclusions depending on the interpreter's judgment \citep{shi2021quantitative}. In some cases, the elbow in the SSE plot might not be clearly defined, making it difficult to pinpoint the optimal $k$. Additionally, the elbow method assumes that the best clustering solution is where the rate of SSE improvement slows down, but this might not always align with the actual structure of the data \citep{shi2021quantitative}. In the case of datasets with complex or overlapping cluster structures, the elbow method may not provide a clear or accurate determination of $k$. Moreover, the method does not account for the possibility of multiple valid clustering solutions, each potentially useful for different applications \citep{shi2021quantitative}. These disadvantages suggest that while the elbow method is a valuable tool, it is often best used in conjunction with other techniques to ensure a more robust determination of the optimal number of clusters. However, given the high number of pixels in the satellite images used in this study and the time-consuming process of calculating other statistics like the silhouette score \citep{rousseeuw1987silhouettes}, we rely on the elbow method. We use the \textit{KElbowVisualizer} from the \textit{yellowbrick} Python library to determine the optimal number of clusters. This module employs the Kneedle algorithm \citep{satopaa2011finding}, which detects the elbow point by identifying the maximum curvature in the plot of WCSS. The algorithm normalizes the WCSS values, calculates the difference between these values and a linear approximation, and identifies the point with the maximum difference as the elbow. This approach provides a reliable and automated method for determining the optimal $k$.

\paragraph{Framework:}

We need to adopt a multidisciplinary approach that integrates specialized domain-specific knowledge to implement deep learning models for geological mapping. In this study, we leverage stacked autoencoders and fine-tune their weights by training the model on various datasets to enhance its effectiveness in accurately identifying geological features. We highlighted earlier that autoencoders can be more robust to outliers and noisy data than PCA, as they can learn to ignore or suppress noisy features during training \citep{vincent2008extracting}. Fig. \ref{fig_5} illustrates the first step: acquiring multispectral data from Landsat 8, ASTER, and Sentinel-2, followed by necessary radiometric and geometric corrections and data scaling, which constitute the pre-processing stage. Next, we implement various dimensionality reduction methods, including PCA, canonical autoencoders, and stacked autoencoders, to create a compressed dataset (latent features) with reduced dimensions. Each pixel of this compressed data is considered a collection of non-spatial spectral observations by spectral classifiers. Subsequently, we apply clustering to the compressed data from these dimensionality reduction methods. Using the elbow method to implement $k$-means and generate clustered maps representing geological features, we determine the optimal number of clusters. Finally, we generate the clustered maps and interpret the results from a geological perspective.

After implementing the clustering phase, we compare different dimensionality reduction methods and data types. We utilize metrics that include the Davies-Bouldin index and the variance ratio criterion (Calinski-Harabasz index), which can be calculated without labeled data \citep{patel2022approaches}. The Calinski-Harabasz index measures the ratio of between-cluster variation to within-cluster variance, with higher values indicating better-defined clusters. Therefore, the ideal number of clusters corresponds to maps with a high Calinski-Harabasz index, whereas lower values of the Davies-Bouldin index indicate better model performance.

\cite{mittal2022comprehensive} reported that neighboring pixels in images are likely to belong to the same cluster due to spatial correlation, which aids in the clustering process. Geological units typically have regional distributions in space \citep{suchet2003worldwide}, meaning that adjacent pixels with similar properties are likely to belong to the same geological unit. However, each group of pixels may be close to a neighboring cluster, which can create confusion. Applying a majority filter to the clustered map can improve its accuracy and consistency by removing outliers and inconsistencies, as well as the noise associated with the data. Since autoencoders map the data distribution to a normal distribution for input images without filtering (including noise), mapping without filtering will not be optimal and may result in blurred maps with fewer sharp boundaries. The majority filter can enhance the accuracy and reliability of geological maps, leading to a better understanding of an area's geology. In this study, we apply a majority filter with a kernel size of $7 \times 7$ to the clustered maps, assigning spurious pixels within a large single class to that class. The center pixel in the kernel is replaced with the class value that the majority of the pixels in the kernel possess \citep{kantakumar2015multitemporal}.

If we remove majority filtering from the framework, anomalous pixels may indicate either remote sensing errors or specific non-homogeneities in geological structures. Anomalous pixels resulting from remote sensing errors can be identified through cross-validation with ground truth data and other remote sensing datasets. We can analyze these anomalies without filtering to improve the accuracy and reliability of the remote sensing data. Methods such as statistical outlier detection or comparison with high-resolution imagery can differentiate between true geological anomalies and errors. The anomalous pixels may also indicate genuine non-homogeneities within geological structures, which can provide significant geological insights. In this case, detailed field studies and additional sampling might be necessary to understand the underlying causes of these non-homogeneities.

\begin{figure}
    \centering
    \includegraphics[width=\textwidth]{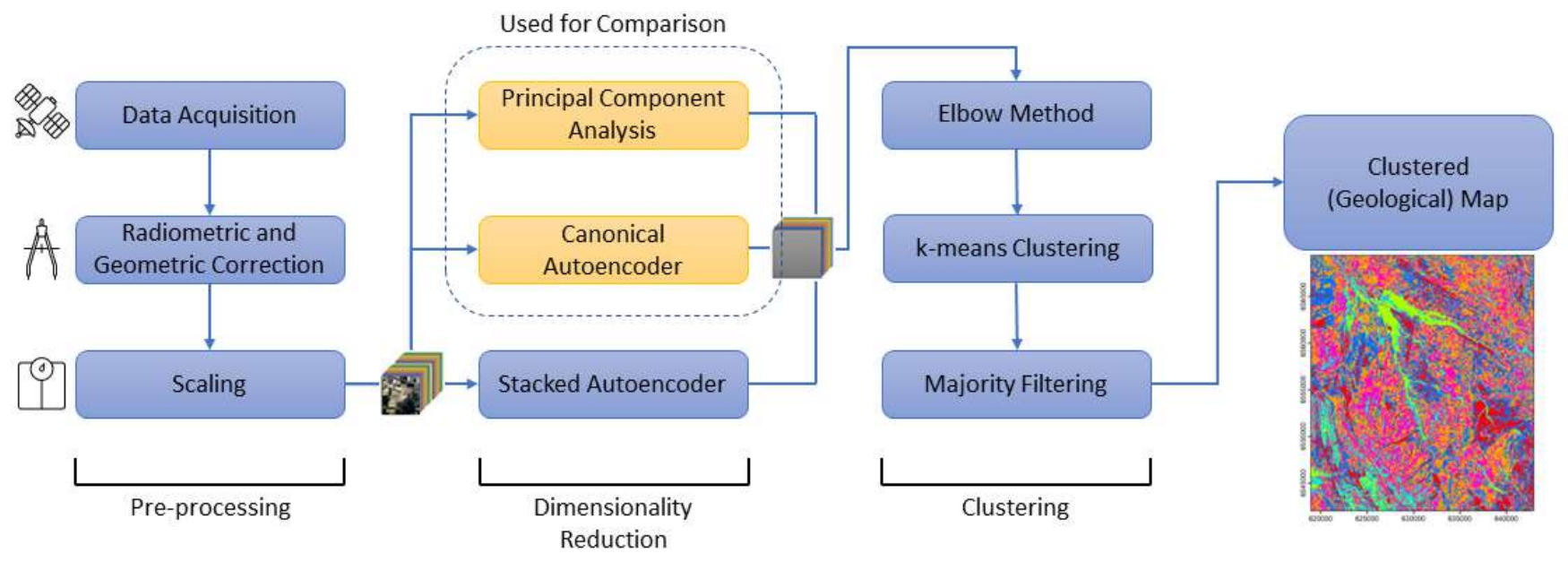}
    \caption{Machine learning framework for creating geological maps using the integration of the dimensionality reduction (PCA, canonical autoencoder, and stacked autoencoder) methods and the $k$-means clustering.}
    \label{fig_5}
\end{figure}

\paragraph{Implementation:}

We implement the framework shown in Fig. \ref{fig_5} using the Python programming language and various machine learning packages, including Keras\footnote{Keras: \url{https://keras.io/api/}}, which facilitate the efficient implementation and execution of deep learning models and streamline the experimentation process. Implementing deep learning models typically requires setting several hyperparameters, and we experiment with different values to achieve the most accurate results.

In the case of PCA, we select the principal components that preserve $90\%$ of the total variance of the input spectral bands, resulting in a different number of components for each data type. The architecture of the canonical autoencoders includes one hidden layer and ten iterations (epochs). We use the rectified linear unit (ReLU) \citep{nair2010rectified} as the activation function for the hidden layers to account for non-linearity and the sigmoid activation function for the output layer. We use the Adam optimizer \citep{kingma2014adam} and mean squared error (MSE) loss function for model training.

The architecture of the stacked autoencoders used in this study comprises two autoencoders with one hidden layer for each. The optimizer, loss, and activation functions for the hidden and output layers are the same as those used for the canonical autoencoders. We provide the code implementation in a GitHub repository\footnote{\url{https://github.com/sydney-machine-learning/autoencoders_remotesensing}} and execute the experiments using an 11th Gen Intel(R) Core(TM) i7-11700 @ 2.50GHz CPU.

\section{Results}

We apply the proposed framework to Landsat 8, ASTER, and Sentinel-2 multispectral data from the Mutawintji region in western New South Wales, Australia. Fig. \ref{fig_6} displays the elbow graphs used to determine the optimal number of clusters for different data types and dimensionality reduction method pairs. In these plots, the optimal $k$ for $k$-means clustering is indicated by a green dashed line, and the black line represents the sum of squared distances to the cluster centers. The point of maximum curvature in the elbow plots marks the optimal $k$ \citep{yuan2019research}. Table \ref{table_1} summarizes the optimal numbers of clusters, suggesting that six or seven major geological units in the study area exhibit specific spectral characteristics depending on the data type and dimensionality reduction method.

In addition to PCA, we train canonical and stacked autoencoders and calculate the reconstruction loss, demonstrating minimal loss of information/features after dimensionality reduction. The features learned from the canonical and stacked autoencoders are then used to cluster the remote sensing data. We observe that the reconstruction loss stabilizes after a few epochs. Several metrics are available to evaluate the efficiency of clustering without labeled data, including the Silhouette coefficient, Calinski-Harabasz index, and Davies-Bouldin score \citep{patel2022approaches}. Due to the large number of pixels, calculating the Silhouette coefficient is time-consuming on a standard computer and is impractical. Therefore, we evaluate model performance using the Calinski-Harabasz and Davies-Bouldin scores \citep{renjith2020pragmatic,gao2021generalized}, as shown in Table \ref{table_2}. A higher Calinski-Harabasz score (first column for each method in Table \ref{table_2}) indicates better-defined clusters, while a lower Davies-Bouldin score suggests more efficient clustering results. It is noteworthy that we conducted additional analyses to quantify the impact of varying numbers of clusters ($k$) on our results. Specifically, we calculated the Calinski-Harabasz and Davies-Bouldin scores for $k$ values of 5, 6, 7, and 8 across all pairs of remote sensing data and dimensionality reduction methods. We found that the $k$ values determined by the Kneedle algorithm yielded the best results according to both the Calinski-Harabasz and Davies-Bouldin scores.

Fig. \ref{fig_7} presents the clustered maps obtained by applying the framework to various pairs of remote sensing data and dimensionality reduction methods. We observe that geological maps generated using PCA followed by $k$-means clustering differ from those produced by canonical and stacked autoencoders. The maps generated using stacked autoencoders on Landsat 8 and ASTER, as well as canonical autoencoders on Sentinel-2, consist of seven clusters, while the others have six clusters. In addition to the previously mentioned metrics, we use 30 rock samples and associated information provided by the Geological Survey of NSW\footnote{\url{https://minview.geoscience.nsw.gov.au}} as ground truth data. We calculate the overall accuracy for each pair of dimensionality reduction methods and remote sensing data by dividing the correctly clustered samples by the total number of samples (Table \ref{table_3}). Given that the optimal number of clusters varies among data types and dimensionality reduction methods, we simplify the rock types and categorize them into six (Fig. \ref{fig_8}(a)) or seven classes according to Table \ref{table_1}.

\begin{figure}
    \centering
    \includegraphics[width=\textwidth]{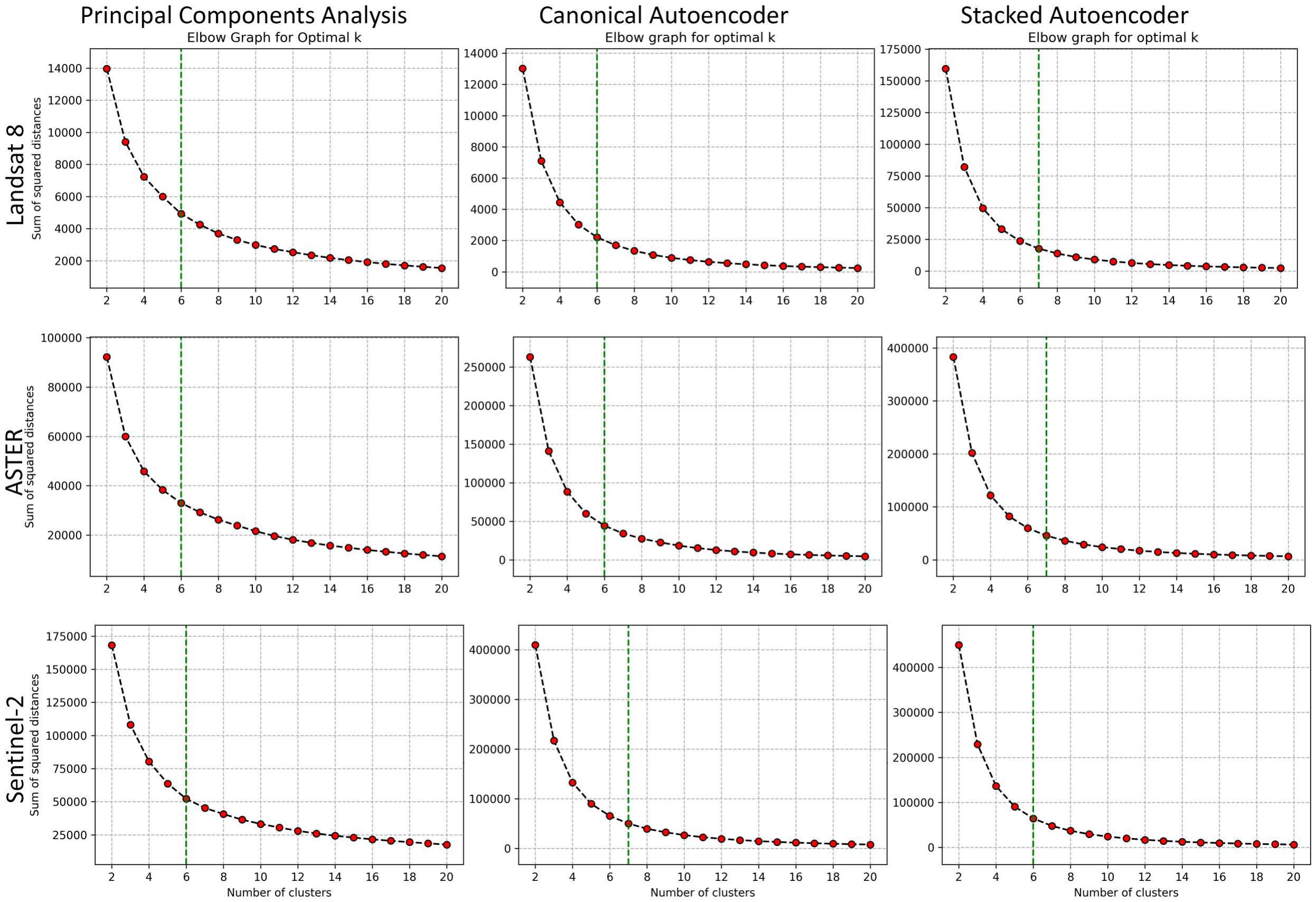}
    \caption{Elbow plots used to determine the optimal number of clusters ($k$) for each combination of remote sensing data (Landsat 8, ASTER, Sentinel-2) and dimensionality reduction method (PCA, canonical autoencoder, stacked autoencoder). The x-axis shows the number of clusters, and the y-axis represents the sum of squared distances to the cluster centers. The point of maximum curvature in each plot, identified by the green dashed line, indicates the optimal value of $k$.}
    \label{fig_6}
\end{figure}

\begin{table*}[htbp]
    \centering
    \begin{tabular}{lccc}
    \toprule
    \multicolumn{1}{c}{Data Type/Method} & PCA & Canonical Autoencoder & Stacked Autoencoder \\ \hline
    Landsat 8                              & 6   & 6                     & 7    \\ 
    ASTER                                  & 6   & 6                     & 7    \\ 
    Sentinel 2                             & 6   & 7                     & 6    \\ 
    \bottomrule
    \end{tabular}
    \caption{Optimal number of clusters ($k$) identified using the elbow method for different combinations of remote sensing data (Landsat 8, ASTER, and Sentinel-2) and dimensionality reduction techniques (PCA, canonical autoencoder, and stacked autoencoder). The selected $k$ values reflect the point of maximum curvature in the elbow plots, indicating distinct spectral patterns corresponding to six or seven major geological units.}
    \label{table_1}
    
\end{table*}

\begin{table*}[htbp]
    \centering
    \resizebox{\textwidth}{!}{%
    \begin{tabular}{lcccccc}
    \toprule
    \multicolumn{1}{c}{\multirow{2}{*}{Data Type/Method}} & \multicolumn{2}{c}{PCA}                                & \multicolumn{2}{c}{Canonical Autoencoder}              & \multicolumn{2}{c}{Stacked Autoencoder}                \\ \cline{2-7} 
    \multicolumn{1}{c}{}                                  & \multicolumn{1}{c}{Calinski-Harabasz} & Davies-Bouldin & \multicolumn{1}{c}{Calinski-Harabasz} & Davies-Bouldin & \multicolumn{1}{c}{Calinski-Harabasz} & Davies-Bouldin \\ \hline
    Landsat 8                                               & \multicolumn{1}{c}{725156}            & 0.848          & \multicolumn{1}{c}{2049661}           & 0.538          & \multicolumn{1}{c}{2928417}           & 0.520          \\ 
    ASTER                                                   & \multicolumn{1}{c}{3203478}           & 0.882          & \multicolumn{1}{c}{8940866}           & 0.534          & \multicolumn{1}{c}{10204767}          & 0.533          \\ 
    Sentinel 2                                              & \multicolumn{1}{c}{7975444}           & 0.799          & \multicolumn{1}{c}{20868268}          & 0.530          & \multicolumn{1}{c}{22433325}          & 0.525          \\ 
    \bottomrule
    \end{tabular}%
    }
    \caption{Calinski-Harabasz and Davies-Bouldin scores calculated for different pairs of data types and dimensionality reduction methods. Evaluation of clustering performance for different combinations of remote sensing data (Landsat 8, ASTER, and Sentinel-2) and dimensionality reduction techniques (PCA, canonical autoencoder, and stacked autoencoder) using Calinski-Harabasz and Davies-Bouldin scores. Higher Calinski-Harabasz values indicate better cluster separation, while lower Davies-Bouldin scores suggest more compact and well-defined clusters. These metrics confirm the effectiveness of autoencoder-based methods, particularly stacked autoencoders, in enhancing clustering quality.}
    \label{table_2}
\end{table*}

\begin{figure}
    \centering
    \includegraphics[width=0.97\linewidth]{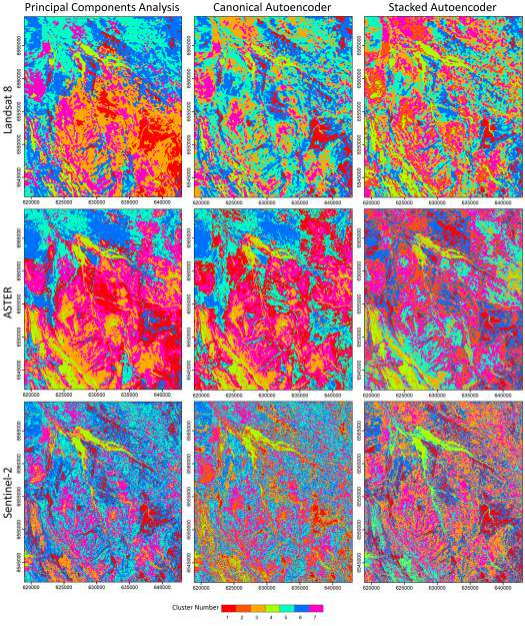}
    \caption{Clustered maps of the study area obtained by different pairs of data types and dimensionality reduction methods. Each distinct colored region in the maps represents a unique geological unit on the ground. We find that PCA (first column) generally results in less detailed geological unit differentiation than autoencoder-based methods, reflecting its limitations in capturing complex nonlinear relationships in the data. Sentinel-2 data (last row) shows the best cluster compactness and separation performance, indicating a more distinct geological unit classification.}
    \label{fig_7}
\end{figure}

\begin{table}[htbp]
    \centering
    \begin{tabular}{lccc}
    \hline
    \multicolumn{1}{c}{Data Type/Method} & PCA   & Canonical Autoencoder & Stacked Autoencoder \\
    \toprule
    Landsat 8               & 0.767 & 0.800         & \textcolor{blue}{0.866}  \\ 
    ASTER                   & 0.833 & 0.833         & \textcolor{blue}{0.900}  \\
    Sentinel 2              & 0.800 & 0.833         & \textcolor{blue}{0.900}  \\ 
    \bottomrule
    \end{tabular}
    \caption{Overall accuracy of different pairs of data types and dimensionality reduction methods based on ground truth data (rock samples). Overall clustering accuracy for different combinations of remote sensing data (Landsat 8, ASTER, and Sentinel-2) and dimensionality reduction methods (PCA, canonical autoencoder, and stacked autoencoder), evaluated using 30 rock samples as ground truth. Accuracy is computed as the proportion of correctly clustered samples. The highest accuracy for each data type is highlighted in blue, with stacked autoencoders achieving the best performance across all datasets.}
    \label{table_3}
\end{table}

\begin{figure}
    \centering
    \includegraphics[width=\textwidth]{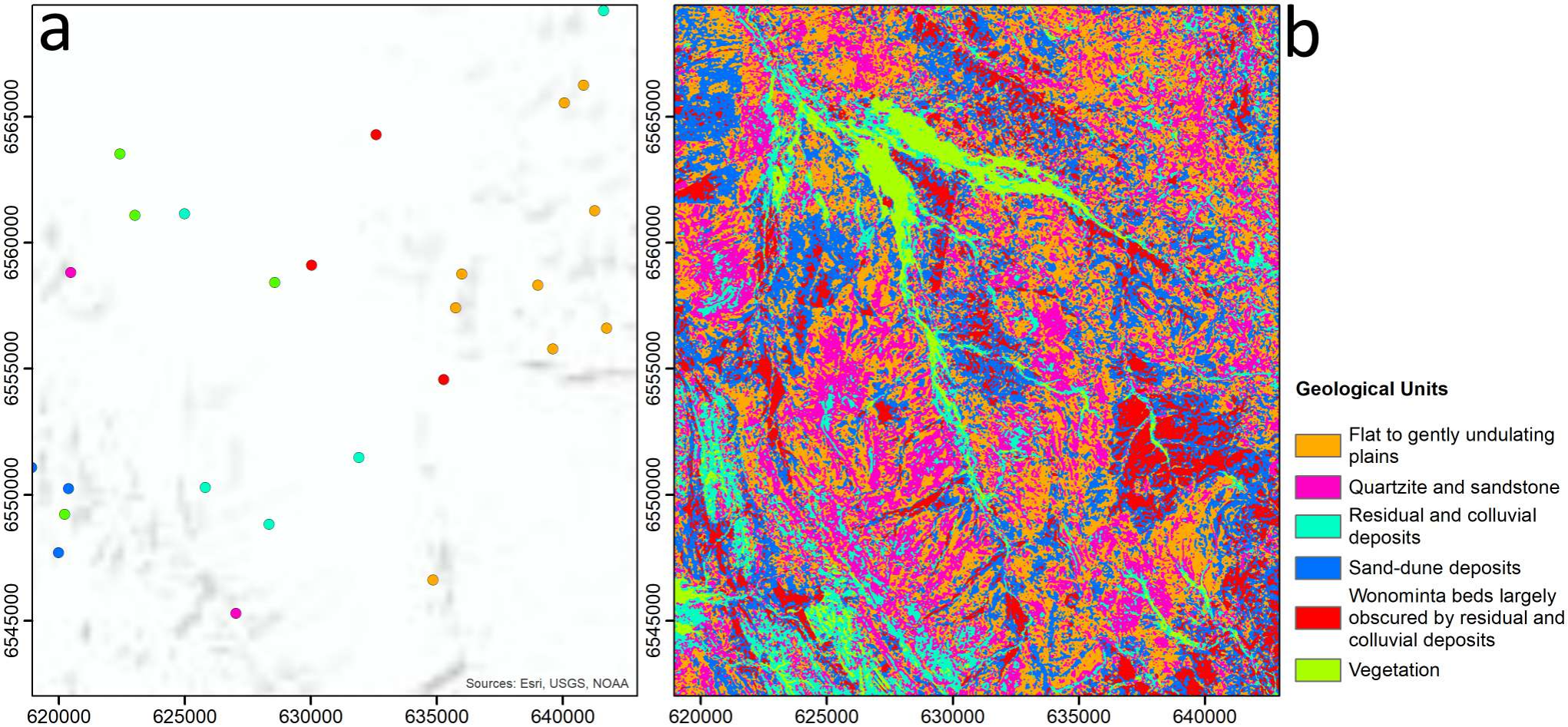}
    \caption{a) Rock samples are available from across the study area through MinView; b) The clustered map is interpreted by implementing the proposed framework and applying stacked autoencoders to the Sentinel-2 dataset.}
    \label{fig_8}
\end{figure}

We compare the clustered maps obtained from our framework (Fig. \ref{fig_7}) to evaluate the efficiency of different dimensionality reduction methods in terms of noise removal, data compression, and geological unit discrimination. We use three metrics to compare these methods. According to Table \ref{table_2}, stacked autoencoders achieve larger Calinski-Harabasz scores and lower Davies-Bouldin scores for each data type, indicating that the clusters generated using the latent vectors of stacked autoencoders are more separable compared to principal components and the latent vectors obtained by canonical autoencoders. We also observe that as spatial resolution increases, both scores improve, with Sentinel-2 yielding the highest scores due to its superior spatial resolution. Stacked autoencoders applied to Sentinel-2 data provide the best results among all data types and dimensionality reduction methods.

Each uniquely colored region in the maps shown in Fig. \ref{fig_7} represents a distinct geological unit clustered using a pair of dimensionality reduction methods and data types. By comparing these clustered maps with the geological map shown in Fig. \ref{fig_1}(b), it is evident that Sentinel-2 data produce a more detailed clustered map than other data types and conventional approaches for mapping geological units. Sentinel-2 data uniquely identifies vegetation and assigns a separate cluster to relevant pixels. Additionally, the comparison of dimensionality reduction methods reveals that only stacked autoencoders accurately differentiate between various sedimentary units in the south and southeast of the study area. Utilizing high-resolution remote sensing data such as Sentinel-2, combined with non-linear dimensionality reduction methods like stacked autoencoders, enhances the signal-to-noise ratio of input features to clustering algorithms, resulting in more detailed and accurate geological maps than conventional approaches like field surveys.

The metrics reported in Table \ref{table_2} do not consider ground truth or labeled data, relying solely on input features and assigned labels for each observation or pixel. The overall accuracy presented in Table \ref{table_3} incorporates the adaptation of collected rock samples from the study area shown in Fig. \ref{fig_8}(a) with the clustered maps, providing a more reliable metric for determining the best approach for generating a clustered or geological map. According to Table \ref{table_3}, employing stacked autoencoders on Sentinel-2 data yields the highest accuracy, indicating that the majority of rock samples have been assigned to the correct cluster. Consequently, the map generated using Sentinel-2 and stacked autoencoders is interpreted as the geological map shown in Fig. \ref{fig_7}(b). This map identifies five different geological units plus vegetation, effectively distinguishing between different sedimentary units in the study area, such as sandstones, residual and colluvial deposits, and undulating plains.

\section{Discussion}

The results presented in this study provide insights into the efficacy of different dimensionality reduction methods combined with various remote sensing data types for geological mapping. The evaluation metrics offer quantitative validation of the proposed machine learning framework for geological mapping via dimensionality reduction and clustering. According to Table \ref{table_2}, the Calinski-Harabasz score indicates the separability of clusters generated by PCA and autoencoders, with stacked autoencoders consistently yielding higher scores across all data types compared to PCA. This suggests that the non-linear nature of autoencoders facilitates better discrimination between geological units, particularly with Sentinel-2 data, which benefits from higher spatial resolution. The Davies-Bouldin score, which assesses cluster compactness and separation, reveals similar findings, with the stacked autoencoder on Sentinel-2 data showing the best performance. This highlights the importance of considering spectral characteristics and data suitability for specific models when evaluating clustering performance in remote sensing applications.

The visual examination of clustered maps (Fig. \ref{fig_7}) further demonstrates the advantages of using Sentinel-2 data with stacked autoencoders. The resultant map provides detailed and accurate delineation for geological mapping. Moreover, stacked autoencoders exhibit superior discrimination capabilities, especially for different sedimentary units in specific regions of the study area. Including ground truth data enhances the assessment, with overall accuracy serving as a robust metric. The results underscore the effectiveness of stacked autoencoders on Sentinel-2 data, yielding the highest accuracy and ensuring the proper assignment of rock samples to clusters (Table \ref{table_3}). Consequently, leveraging Sentinel-2 data and stacked autoencoders produces a detailed geological map of the study area, successfully discriminating between various geological units.

Although autoencoders are more computationally intensive to train and may require extensive hyperparameter tuning, the findings demonstrate that the proposed framework yields compelling results for geological mapping applications using multispectral data without labeled data. However, the effectiveness of the proposed approach should be validated across diverse geological terrains to assess its broader applicability. Additionally, preprocessing steps applied to remote sensing data, such as atmospheric correction, radiometric calibration, or geometric registration, can significantly impact the quality and accuracy of the derived geological maps and should be carefully considered.

The choice of the number of clusters ($k$) introduces uncertainties in the creation of clustered maps, which are interpreted as geological maps. An incorrect $k$ value can lead to either over-segmentation or under-segmentation of geological units, affecting the accuracy and reliability of the map. Over-segmentation might result in an excessive number of clusters, misrepresenting homogenous geological units as multiple distinct entities. Conversely, under-segmentation may group distinct geological units together, masking important geological variations.

Additionally, the low number of rock samples can further introduce uncertainties. Limited samples may not adequately capture the variability of geological units, leading to less accurate cluster definitions. This insufficiency can result in clustered maps that do not accurately reflect the true geological diversity of the area. These uncertainties have significant implications for geological map interpretations. Variations in $k$ values and sample sizes can lead to different geological unit definitions, affecting exploration and decision-making processes. For example, misidentified units might lead to incorrect assumptions about mineral resources, structural stability, or environmental conditions. Therefore, careful consideration of $k$ values and efforts to obtain a sufficient number of representative rock samples are crucial. Employing robust validation techniques and cross-referencing with existing geological data can help mitigate these uncertainties, leading to more accurate and reliable geological maps.

In our study, the stationary assumptions overlook the dynamic nature of geological features (e.g., due to landslides, earthquakes, seasonal erosion, and human activities), necessitating methods to incorporate temporal dynamics for more accurate mapping. Reliance on subjective ground truth data introduces uncertainty, which can be mitigated by integrating multiple sources and using consensus-based approaches. Although advanced techniques improve discrimination, efforts are needed to enhance the interpretability of the resulting maps. Other dimensionality reduction methods, such as manifold learning \citep{izenman2012introduction,pless2009survey}, can be used to extend our framework. This study illustrated the potential of unsupervised feature learning methods in feature extraction, motivating large-scale applications using hyperspectral datasets. Moreover, incorporating ancillary data sources, such as geological maps, digital elevation models, and hydrological data, could enrich the analysis and improve the discrimination of geological units. The framework can be extended with novel clustering methods, such as spectral and hierarchical clustering \citep{saxena2017review} and Gaussian mixture models, which have shown promising results in remote sensing and image-based data processing \citep{deo2024reefcoreseg,barve2023reef}.

\section{Summary}
We presented a framework that combined stacked autoencoders with $k$-means clustering to generate geological maps. By applying various pairs of remote sensing data types and dimensionality reduction methods, we created input features for the $k$-means algorithm, resulting in automated geological maps for the Mutawintji region in NSW, Australia. Our investigation revealed that combining stacked autoencoders with Sentinel-2 data yields the highest spatial resolution and accuracy compared to other combinations. The stacked autoencoders were demonstrated to be highly effective for dimensionality reduction and feature learning, enabling better extraction of complex and hierarchical representations of the input data when compared to canonical autoencoders and PCA. Our findings reveal that the integration of stacked autoencoders with Sentinel-2 data yields the highest spatial resolution and accuracy. The stacked autoencoders proved to be highly effective for dimensionality reduction and feature learning, enabling the extraction of more complex and hierarchical representations of the input data compared to canonical autoencoders and PCA. The flexibility of our framework allows for further enhancements with the incorporation of novel dimensionality reduction and clustering methods. This adaptability ensures that our approach can evolve alongside advancements in remote sensing and machine learning techniques, making it a robust tool for geological mapping in various regions and contexts.

We proposed a flexible framework that integrates stacked autoencoders with \$k\$-means clustering to generate automated geological maps. By combining different remote sensing data types, Landsat 8, ASTER, and Sentinel-2, with dimensionality reduction techniques, we extracted effective feature representations for clustering. Among the combinations tested, stacked autoencoders applied to Sentinel-2 data produced the most accurate and spatially detailed geological maps for the Mutawintji region in New South Wales, Australia. The results demonstrate that stacked autoencoders outperform canonical autoencoders and PCA in capturing complex and hierarchical data structures, leading to improved clustering performance. Furthermore, the modularity of our framework allows for the integration of emerging dimensionality reduction and clustering techniques, ensuring its continued relevance and applicability across diverse geological mapping scenarios.

 \chapter{Image Art Restoration and Generative Models}\label{chap:art_restore}
\section{Introduction}
Preserving cultural heritage is of paramount importance. While history has preserved countless masterpieces, the ravages of time have left many artworks faded, damaged, or on the brink of disappearance. Traditional restoration methods are often time-consuming and require extensive expertise. By resurrecting damaged or obscured artworks, we breathe life back into these forgotten stories, reviving the narratives that have shaped our collective consciousness. The domain of image art restoration (IR) holds significant importance within the low-level vision discipline, aiming to enhance the perceptual quality of images that have suffered a wide array of degradation. This intricate task operates as a versatile and interpretable solution to a range of inverse problems, utilizing readily available denoising techniques as implicit image priors \cite{zhu2023denoising}. Within low-level vision research, IR has persistently remained a focal point, contributing substantially to enhancing image aesthetics \cite{li2023diffusion}. 

In the context of deep learning advancements, a plethora of IR methodologies have harnessed the power of datasets tailored for diverse IR challenges, such as super-resolution (DIV2K, Set5, Set14), rain removal (Rain800, Rain200, Raindrop, DID-MDN), and motion deblurring (REDS: REalistic and Dynamic Scenes datase, Gopro) \cite{li2023diffusion}. Notably, the emergence of diffusion models (DM) has ushered in a new paradigm within generative models, catalyzing remarkable breakthroughs across various visual generation tasks. The diffusion model excels through a sequential application of denoising networks to replicate the image synthesis process \cite{xia2023diffir}. Capitalizing on the exceptional generative prowess of diffusion models, we employ them as a benchmark for image restoration.

Traditional supervised learning approaches hinge upon extensive collections of distorted/clean image pairs, while zero-shot methods predominantly rely on known degradation modes. However, these methodologies encounter limitations in real-world scenarios characterized by diverse and unknown distortions. To address this concern, recent research work has extended diffusion models to accommodate blind/real-world image restoration scenarios by integrating real-world distortion simulations and kernel-based techniques. This expansion seeks to bridge the gap between diffusion models and the complexity of real-world image restoration challenges, offering a potential avenue for more effective applications in practical settings.

\paragraph{About challenge:}
This work is part of the \emph{Competitions @ ICETCI 2023} \href{https://ietcint.com/user/competitions}{link}.

Motivation: By resurrecting damaged or obscured artworks, we breathe life back into these forgotten stories, reviving the narratives that have shaped our collective consciousness. The participants must develop an innovative model that can automate the restoration process and ensure the longevity of art pieces for future generations. 

Objective: The challenge aims to design and implement an advanced computer vision model for restoring deteriorated art images. Participants are encouraged to explore various techniques, architectures, and training methodologies to
develop a robust and efficient solution.

\section{Related Work}

\paragraph{Traditional image restoration methods:}  Diffusion-based image restoration techniques rely on partial differential and variational methods grounded in geometric image models. These methods employ edge information in the damaged area to guide diffusion direction, propagating known information to the target region. While effective for minor image damage, they may yield fuzzy results when handling extensive damage or complex textures \cite{qiang2019survey}.

\paragraph{Deep learning based methods:}
Convolutional neural networks (CNNs) possess remarkable capabilities for learning and representing image features, enabling effective prediction of missing image content. The image restoration process primarily relies on supervised learning methods \cite{zhang2017learning}. 

In contrast to CNNs, which face challenges in supervised image restoration learning, autoencoders (AE) are artificial neural networks proficient at unsupervised learning, effectively learning and expressing input data \cite{mao2016image}. AEs try to regenerate the images from the latent vector and fail to remove the unordered noise/distortion. 

The GAN-based image restoration method differs from the convolution autoencoder-based method \cite{goodfellow2014generative}. The GAN-based image restoration method generates the image to be repaired directly through the generator. The input can be a random noise vector, and the former is applied to the damaged image to generate the repair area. While GANs excel in generating high-quality images, training them can be challenging, primarily due to the complexity of the loss function employed in the training process.

Another branch of deep learning and generative models, normalizing flows (NF) \cite{rezende2015variational}, is also used for image restoration. NFs are based on the invertible CNN layers \cite{nagar2021cinc, visapp23}, but NFs are slow and cost more computation for high-quality input images than CNNs, GANs, and VAEs. NFs work better for the deburring due to their inevitability and tractable nature \cite{wei2022deep}.

Image restoration (IR) represents an essential and demanding endeavor within the realm of low-level vision. Its objective is to enhance the perceptual quality of images afflicted by diverse degradation types. Notably, the diffusion model has made remarkable strides in the visual generation of Artificial Intelligence Generated Content (AIGC), prompting a natural inquiry: "Can the diffusion model enhance image restoration?" \cite{li2023diffusion}. Motivated by this question, we used the Diffusion Model (DM) based super-resolution to solve the problem of art restoration in the images.

\section{Method Description}
In artistic representation, the integrity of artworks can be compromised by many factors, such as motion-induced disruptions, various forms of noise, filters, and even water intrusion. This degradation or distortion also extends to encapsulating art within images, further entailing inherent discrepancies within the representation. Consequently, restoring genuine artistic essence and preserving authentic artifacts presents a formidable challenge due to the reliance on these images as the sole source of information regarding the artwork.

\begin{figure*}[!ht]
    \centering
    \includegraphics[width=0.99\linewidth]{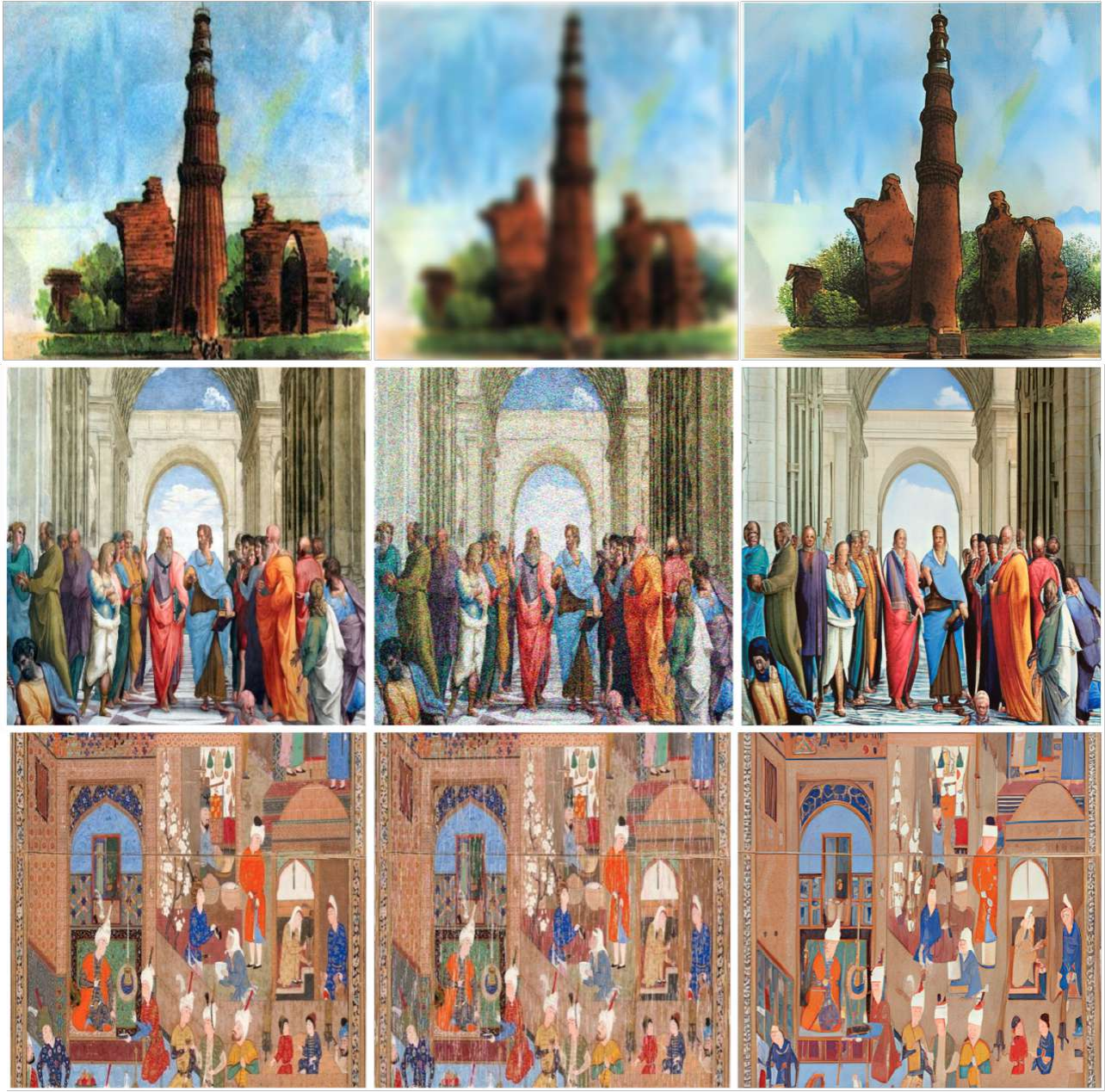}
    \caption{Method: StableSR. Left: GT, Center: distorted by \textbf{noise/ damaged}, Right: Restored. Image restoration results using the StableSR framework under various distortion conditions, including noise and structural damage. Each triplet of images shows (Left) the Ground Truth, (Center) the degraded input, and (Right) the output restored by StableSR. The results demonstrate the framework's effectiveness in recovering fine textures, edges, and semantic content across diverse image types and degradation patterns.}
    \label{fig:sample_fig_label}
\end{figure*}

\begin{figure*}[!ht]
    \centering
    \includegraphics[width=0.99\linewidth]{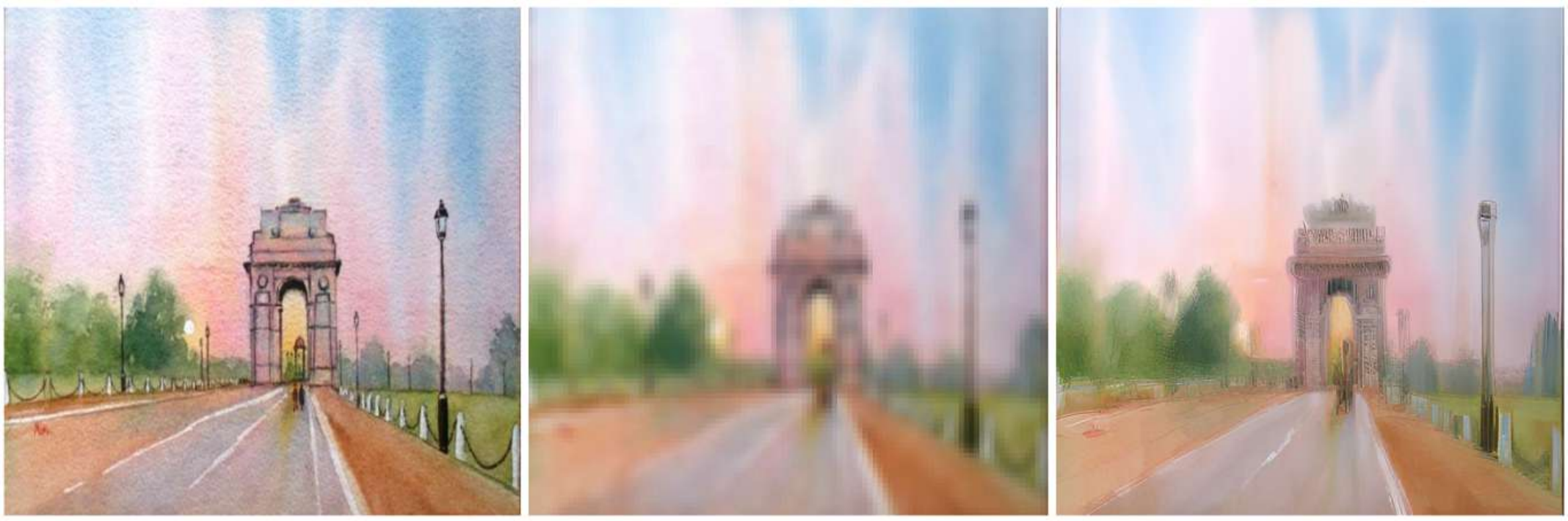}
    \caption{Method: StableSR. Left: GT, Center: distorted by \textbf{pixelating}, Right: Restored. Restoration results using the StableSR framework on images degraded by pixelation. The figure shows (Left) the Ground Truth image, (Center) the pixelated input, and (Right) the image restored by StableSR. The method effectively reconstructs structural elements and smooth textures, mitigating pixelation effects.}
    \label{fig:sample_fig_label4}
\end{figure*}


The endeavor of rejuvenating impaired artifacts is intricate, marked by its irrevocable nature. This compels the exploration of computer vision models as a resource, harnessing their capacity to leverage embedded attributes within the images. Among the array of techniques, diffusion models emerge as the paramount state-of-the-art (SOTA) method for image generation and restoration. In particular, the application of image super-resolution models proves to be salient in the restoration process. In pursuit of enhanced resolution, these models inherently address a broad spectrum of prevailing distortions and degradation in the captured images.

It is imperative to acknowledge that the acquisition of images itself is predisposed to quality deterioration, often stemming from the intricacies of the capturing apparatus. This encompasses introducing supplementary noise and filters, exacerbating the challenges intrinsic to preserving the fidelity of art images. 

So we propose to use the super-resolution SOTA (StableSR) \cite{wang2023exploiting} to restore the art. Further, we fine-tune the StableSR model for art restoration and reconstruction. Further, to verify and compare the StableSR model to other existing super-resolution SOTA, we also test the sample images using the ResShift \cite{yue2023resshift} super-resolution model (see Figure \ref{fig:ResShift1}). 

Image restoration aims to recover high-quality images from degraded versions, where everyday tasks include super-resolution (SR), denoising, and deblurring. The StableSR framework integrates the power of pre-trained weights from the Stable Diffusion model \cite{li2023diffusion} with fine-tuning strategies tailored for specific degradation types. This method improves image resolution and ensures effective restoration in various domains, such as removing noise, correcting blur, and enhancing image details.

\section{StableSR Framework}

The StableSR framework consists of an Encoder-Decoder Architecture. This component utilizes a pre-trained Stable Diffusion model as the backbone. The encoder extracts features from degraded images, and the decoder generates high-resolution outputs based on these features.

The approach begins by leveraging pre-trained weights from the Stable Diffusion model as initialization for the restoration network. This provides a strong feature representation, significantly improving the model's ability to handle various degradation types. The fine-tuning process adjusts the model to perform specific image restoration tasks, such as super-resolution, deblurring, and noise reduction, by employing a set of multi-task learning objectives. 

\paragraph{Method Overview}

As illustrated in Figure \ref{fig:stablesr_architecture}, the method involves a series of steps to restore images through the StableSR framework. These steps include:

\begin{enumerate}
    \item Degraded Image Input: The image undergoes degradation, including noise, blur, or low resolution.
    \item Feature Extraction: The encoder, based on the pre-trained Stable Diffusion model, extracts feature maps from the degraded image.
    \item Restoration: The model progressively restores the high-resolution image using a combination of encoder and decoder fine-tuned decoder layers. 
    \item Output: The restored image is obtained as the final output, which is visually enhanced compared to the input degraded image.
\end{enumerate}

\paragraph{Loss Function}

The loss function used to train the StableSR model includes several components:
\begin{itemize}
    \item Pixel-wise loss: Measures the difference between the restored and ground truth image at the pixel level, typically using Mean Squared Error (MSE).
    \item Perceptual loss: Extracted from the pre-trained VGG network, this loss ensures that the restored image preserves high-level perceptual features.
    \item Adversarial loss: A discriminator is introduced during training to encourage the model to generate more realistic images, helping to avoid artifacts in the restoration process.
\end{itemize}

\paragraph{StableSR Architecture}
The architecture of the StableSR model is based on the encoder-decoder structure, enabling efficient and smooth image restoration while maintaining computational efficiency. Figure \ref{fig:stablesr_architecture} visually represents the architecture. The encoder part is responsible for extracting essential image features from the degraded input, while the decoder generates the restored image.

\begin{figure}[h!]
    \centering
    \includegraphics[width=0.99\textwidth]{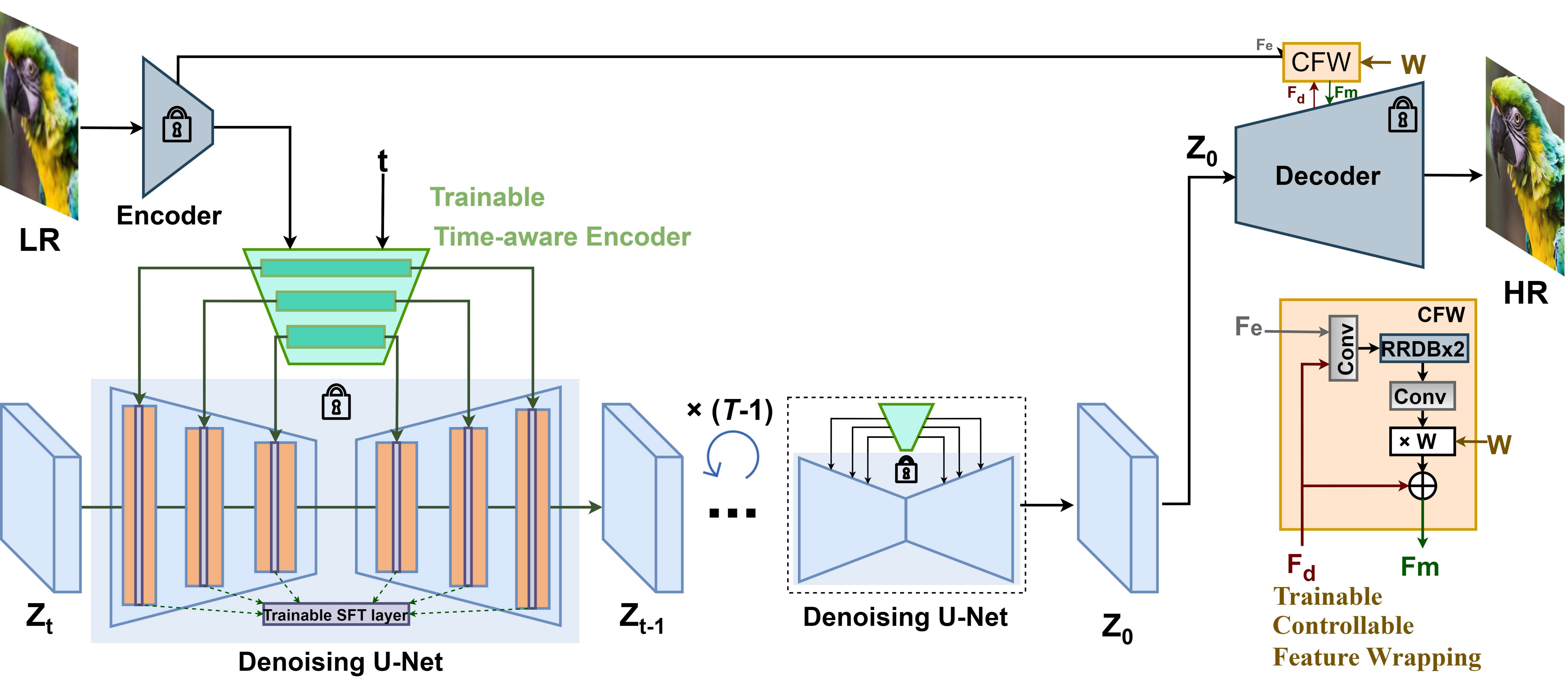}
    \caption{Architecture of the StableSR model for image restoration. The model leverages pre-trained weights from the Stable Diffusion model and incorporates an Encoder-Decoder for efficient image restoration.}
    \label{fig:stablesr_architecture}
\end{figure}

\paragraph{Results and Evaluation:} To evaluate the performance of the StableSR model, we conduct experiments across multiple image restoration tasks, including super-resolution, denoising, and deblurring. The model is compared against existing SOTA methods on benchmark datasets, demonstrating superior performance in quantitative metrics (e.g., PSNR, structural similarity index measure) and qualitative image quality.

\paragraph{Super-Resolution Results:} For super-resolution tasks, the StableSR model achieves a notable increase in the peak signal-to-noise ratio (PSNR) compared to traditional methods. This is due to the ability of the encoder-decoder structure to preserve fine-grained details while enlarging the image.

\paragraph{Denoising and Deblurring Results:} The model also performs well in denoising and deblurring tasks, efficiently removing noise and sharpening blurred images without introducing artifacts. The perceptual loss and adversarial loss components help ensure the restoration results look natural and visually accurate.
The StableSR model introduces a practical framework for image restoration that integrates the power of pre-trained models with tailored fine-tuning strategies. By leveraging the Encoder-Decoder and the capabilities of the Stable Diffusion model, StableSR achieves high-quality restoration results across multiple tasks while maintaining computational efficiency. The model demonstrates its potential for real-world applications in image enhancement, restoration, and generation functions, offering a versatile tool for computer vision problems.

\section{Experimental Results}

Within this section, we present the outcomes of our experimentation and conduct a comparative analysis between the ground truth images and their restored counterparts: StableSR (see Figure \ref{fig:sample_fig_label}, \ref{fig:sample_fig_label4}). The ensuing paragraphs elaborate on the results obtained through this dual-model approach, shedding light on the performance of StableSR in terms of image restoration. 

\section{Summary and Future Work}

In conclusion, this endeavor has spotlighted the efficacy of contemporary diffusion models in image restoration (IR), harnessing their robust generative potential to amplify structural and textural revitalization. The initial phase of this work entailed leveraging pre-trained weights to establish a foundational baseline, followed by the progressive evolution of the diffusion model for IR applications, with a specific focus on the adaptation of StableSR through a systematic fine-tuning process. This research has further delved into the comprehensive categorization of ten distinct distortions, shedding light on their nuances through the lens of training strategies and degradation scenarios.

Through meticulous analysis, we undertook a comparative assessment of existing works, encompassing both super-resolution and IR domains. Each approach was dissected precisely, affording an intricate taxonomy delineating its strengths and weaknesses. The evaluation process involved an overview of prevalent datasets and evaluation metrics within the diffusion model-based IR landscape. This culminated in a comprehensive comparison of two cutting-edge open-source SOTA methodologies, evaluated through a fusion of distortion and perceptual metrics across three quintessential tasks: image super-resolution, deblurring, and inpainting.

Remarkably, our observations highlighted the effectiveness of training diffusion models on specialized datasets tailored to distinct degradation types. This strategy yielded commendable outcomes, particularly in scenarios mirroring the noise or degradation patterns akin to the training data. Addressing the challenges inherent in diffusion model-based IR entails exploring diverse baseline datasets and refining training strategies as we steer toward prospects. By doing so, the realm of diffusion models can be further optimized for achieving superior outcomes, marking a promising direction for future exploration. 


\section*{Appendix}

Below, we present additional results where noise becomes integrated into the images, making it difficult to restore the original content (see fig-\ref{fig:sample_fig_label2})

\begin{figure*}[!ht]
    \centering
    \includegraphics[width=0.99\linewidth]{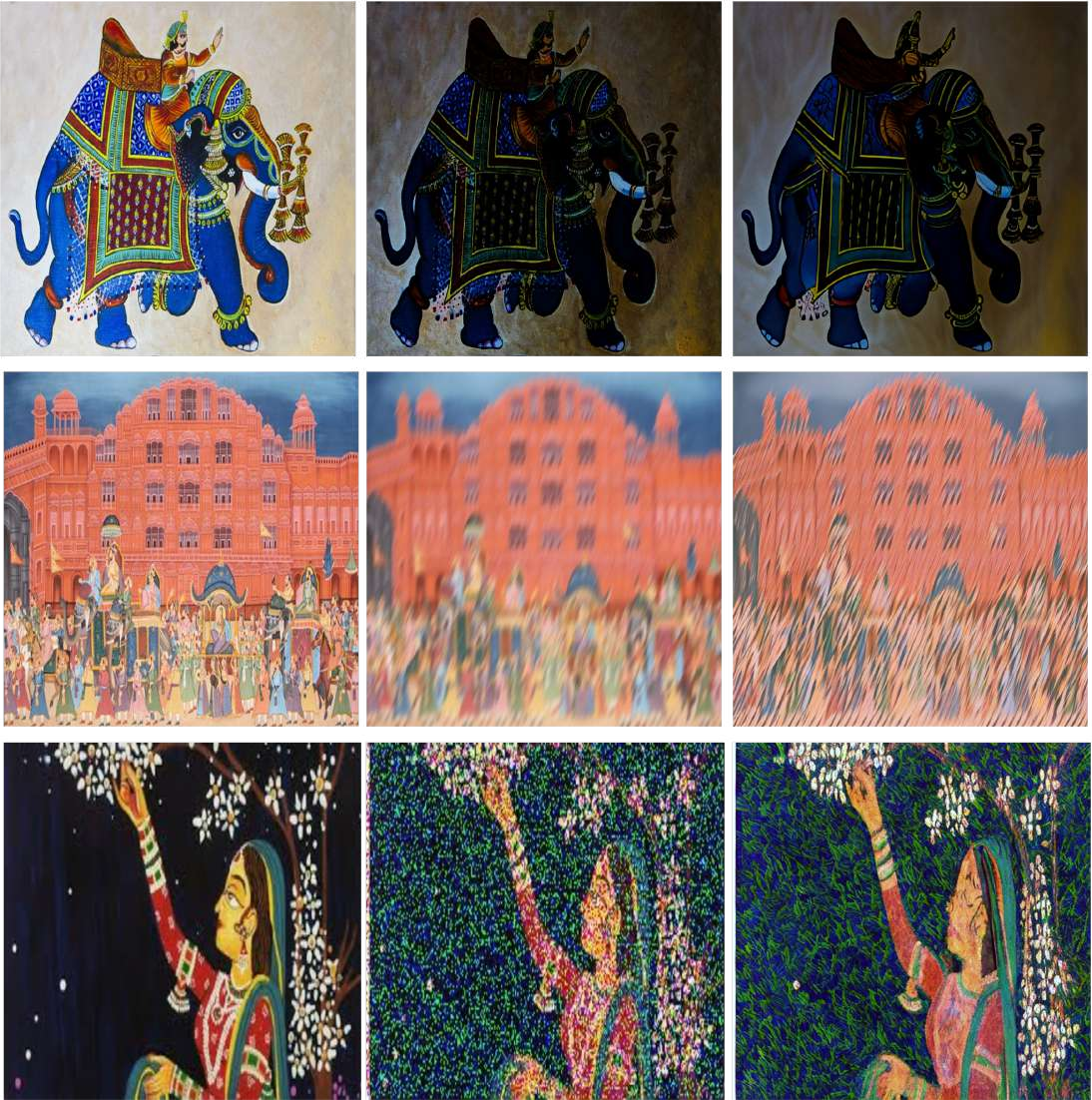}
    \caption{Method: StableSR. Left: GT, Center: distorted by \textbf{damage}, Right: Restored. Visual comparisons using the StableSR framework for image restoration. Each triplet illustrates (Left) the Ground Truth image, (Center) the image degraded by severe distortions or damage, and (Right) the image restored using StableSR. The results highlight StableSR’s ability to reconstruct fine details and recover structural integrity in heavily damaged images.}
    \label{fig:sample_fig_label2}
\end{figure*}


\begin{figure*}[!ht]
    \centering
    \begin{tabular}{ccc}
        \includegraphics[width=.33\linewidth,valign=m]{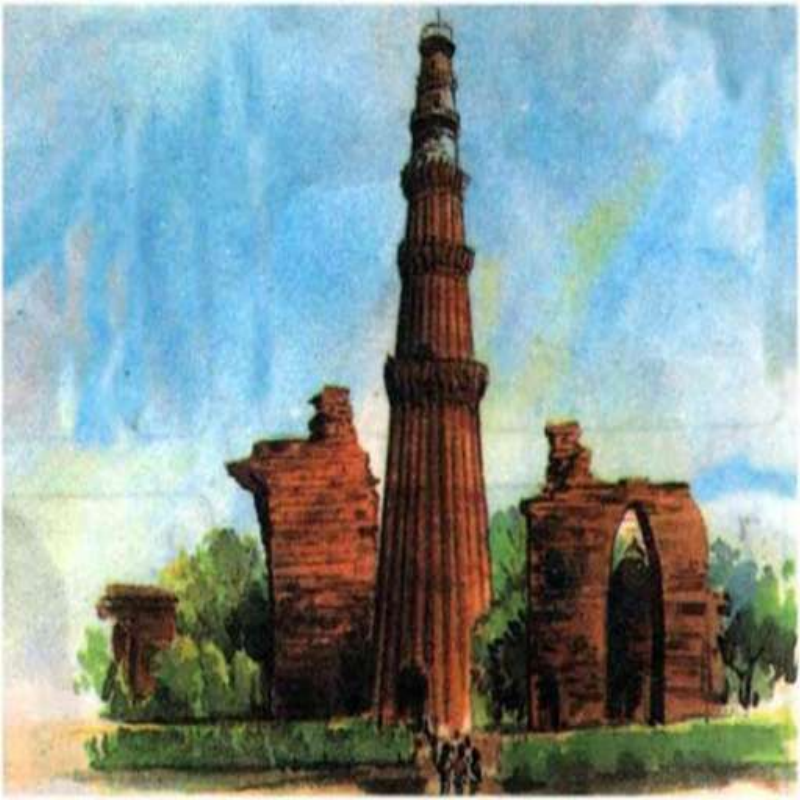} &
        \includegraphics[width=.33\linewidth,valign=m]{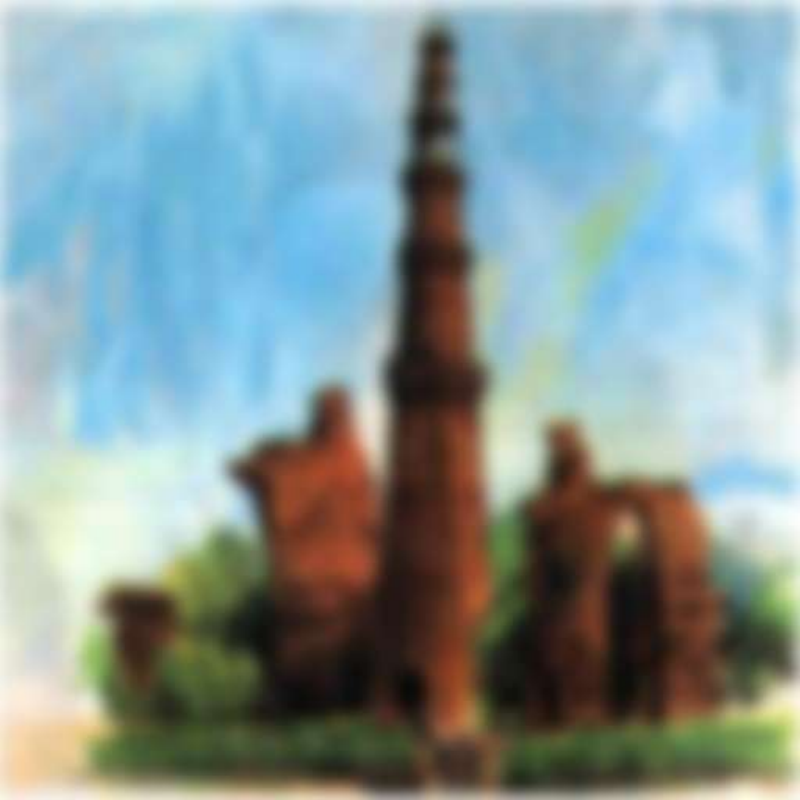} &
        \includegraphics[width=.33\linewidth,valign=m]{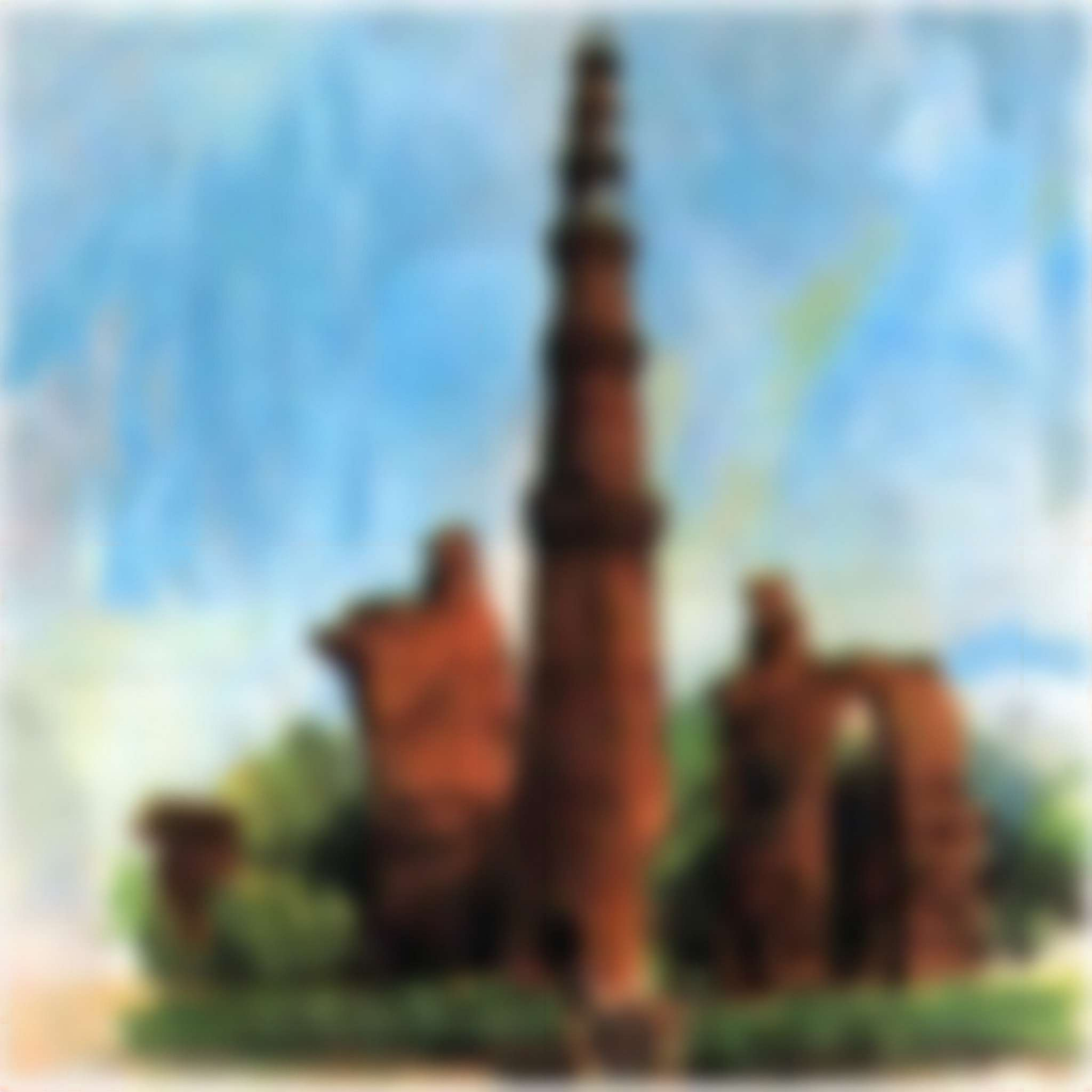} \\
        
        \includegraphics[width=.33\linewidth,valign=m]{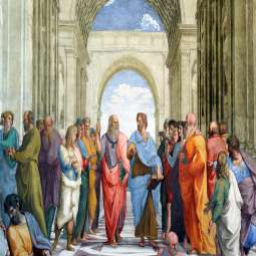} &
        \includegraphics[width=.33\linewidth,valign=m]{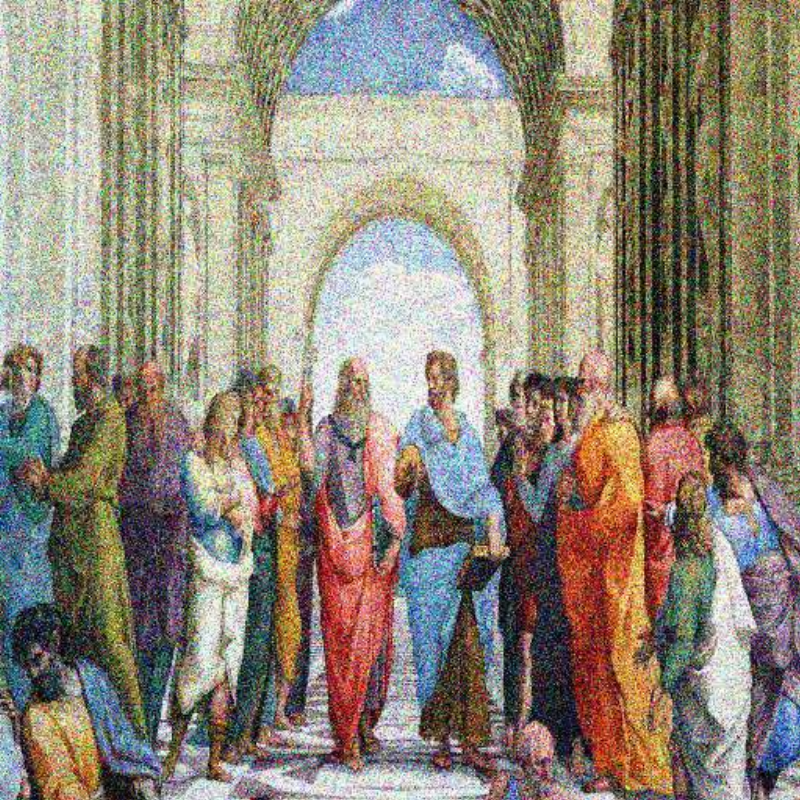} &
        \includegraphics[width=.33\linewidth,valign=m]{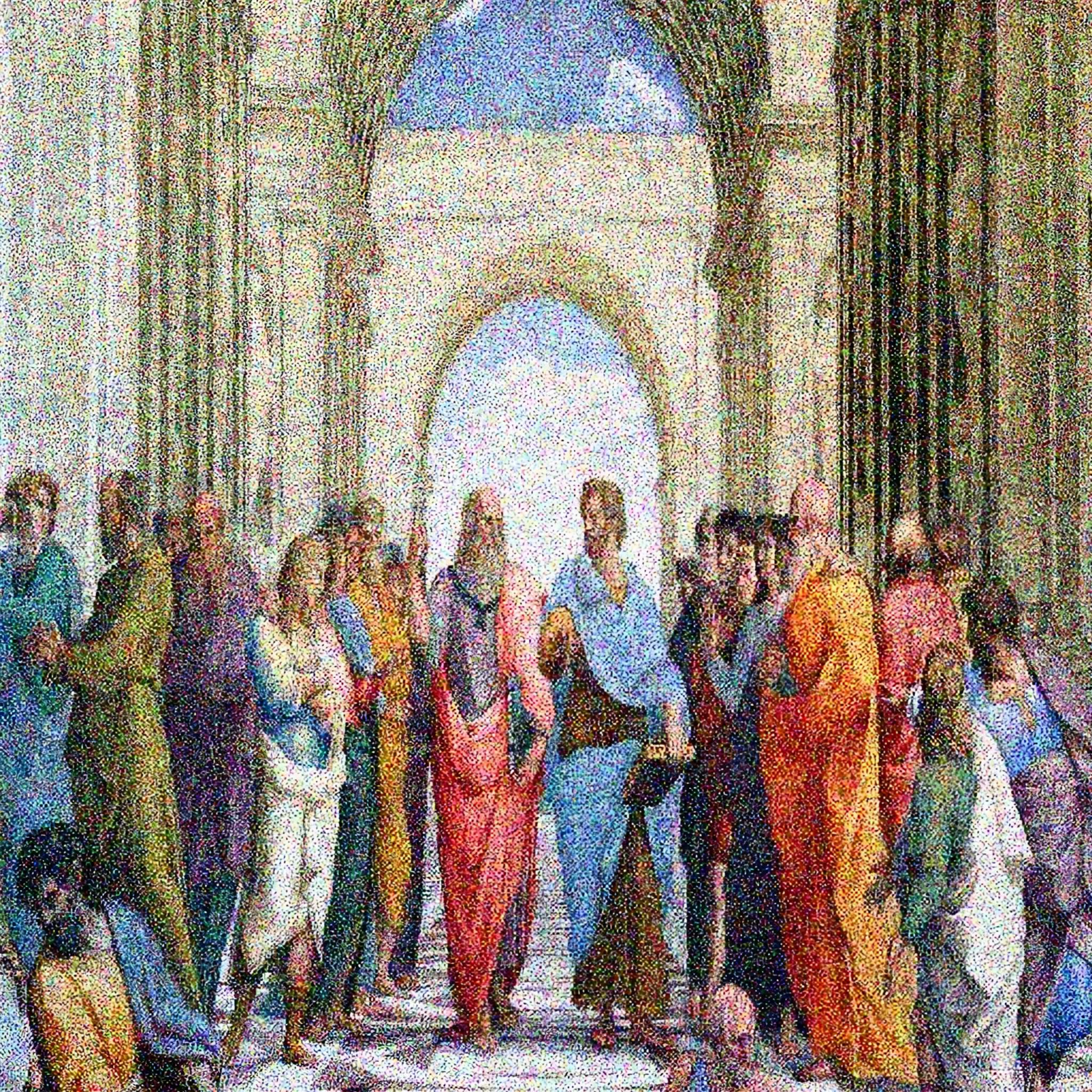} \\
        
        \includegraphics[width=.33\linewidth,valign=m]{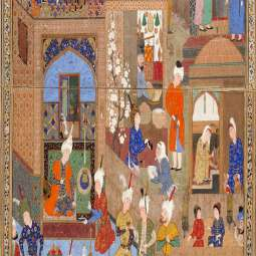} &
        \includegraphics[width=.33\linewidth,valign=m]{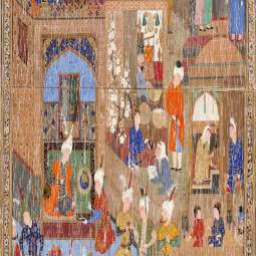} &
        \includegraphics[width=.33\linewidth,valign=m]{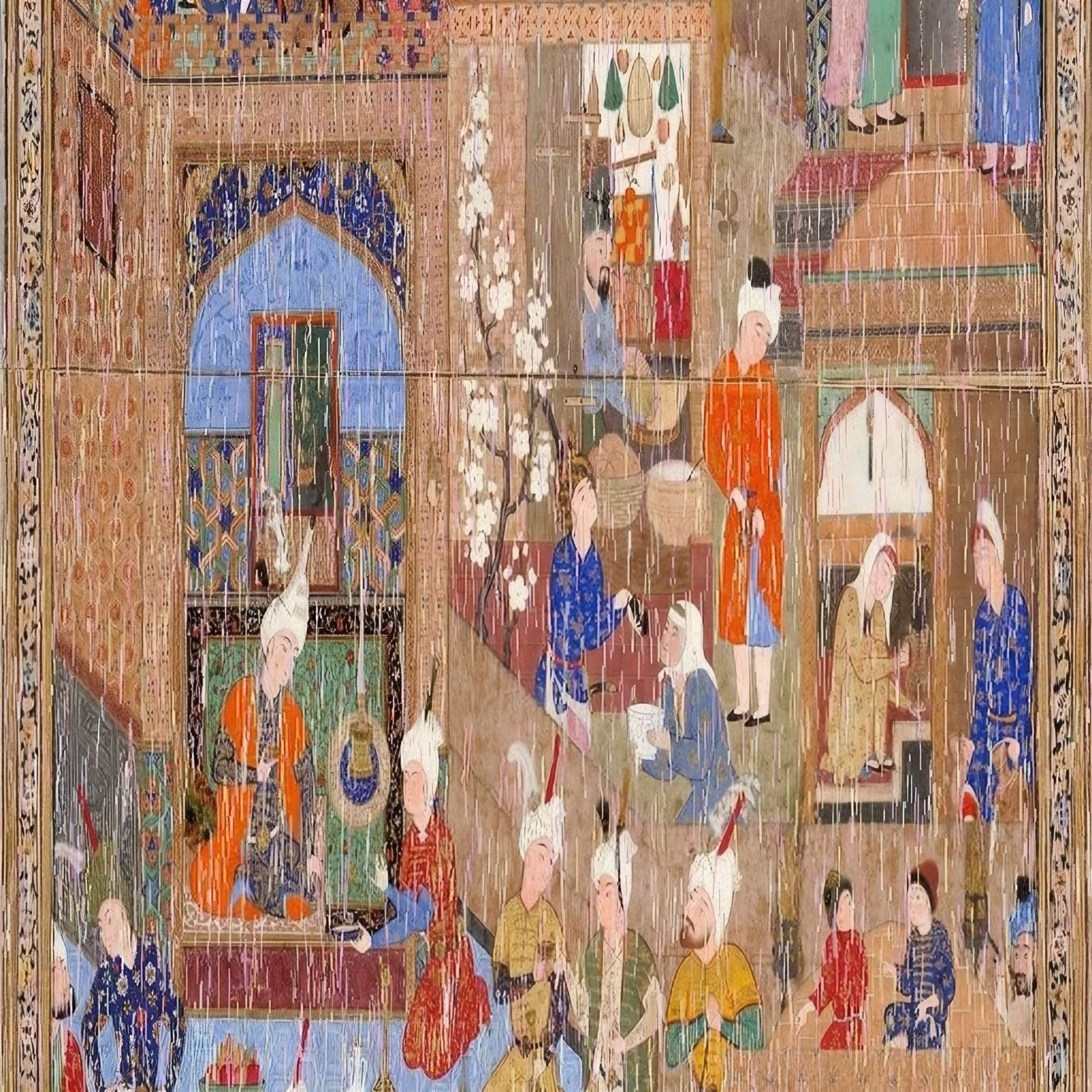} \\
    \end{tabular}
    \caption{Method: ResShift. Left: GT, Center: distorted by artifact, Right: Restored. Qualitative results of the ResShift method integrated with the StableSR framework for image restoration. Each triplet shows (Left) the Ground Truth image, (Center) the degraded image affected by artifacts such as blur or noise, and (Right) the image restored using ResShift. The restored outputs demonstrate substantial visual improvement and detail recovery compared to the degraded inputs, indicating the effectiveness of the proposed approach.}
    \label{fig:ResShift1}
\end{figure*} 
 \chapter{Robust Missing Traffic Sign Detection with Data Augmentation and Model Fine-Tuning}\label{chap:missing_sign}
\section{Introduction}

Global Autonomous Vehicle (AV) market is projected to reach $2162$ billion by $2030$. Around $ 1.3 $M lives are lost yearly in road accidents, with traffic sign violations contributing significantly to this number. The traffic signs are generally installed at the side of the road to control traffic flow or convey information about the road environment to Vulnerable Road Users (VRUs). Often, the data is also available in the form of cues present in the context around the traffic signs (\textit{obstacle-delineator}) or in the cues away from it (\textit{pedestrian-crossing}), which we refer to as \textit{contextual cues}.

The Missing Traffic Signs Video Dataset (MTSVD) \citep{gupta2023cuecan} is the first publicly accessible dataset for missing traffic signs.

Traffic signs, despite being crucial for road safety, frequently remain absent. This challenge provides $200$ scenes from a recent Missing Traffic Signs Video Dataset (MTSVD), distributed over four types of missing traffic signs: \textit{left-hand-curve}, \textit{right-hand-curve}, \textit{gap-in-median}, and \textit{side-road-left}, individually observed with their respective contextual cues. $2000$ training images, each containing one of the four traffic signs with corresponding bounding boxes, are provided. Two tasks are proposed for the challenge: i) \textit{Object Detection}, wherein the model is trained using bounding box annotations, and ii) \textit{Missing Traffic Sign Scene Categorization} \ref{fig:context_variety}, wherein the model is trained using road scene images with in-painted traffic signs, provided with the challenge dataset. Baselines were provided to the participants for both tasks. 54 teams registered for the challenge. Overall, the participants could improve the top-1 accuracy significantly by a margin of $31.5\%$ over the baseline. This work presents the MTSVD, challenges, baselines, and the methodology.

\begin{figure}
    \centering
    \includegraphics[width=0.99\linewidth]{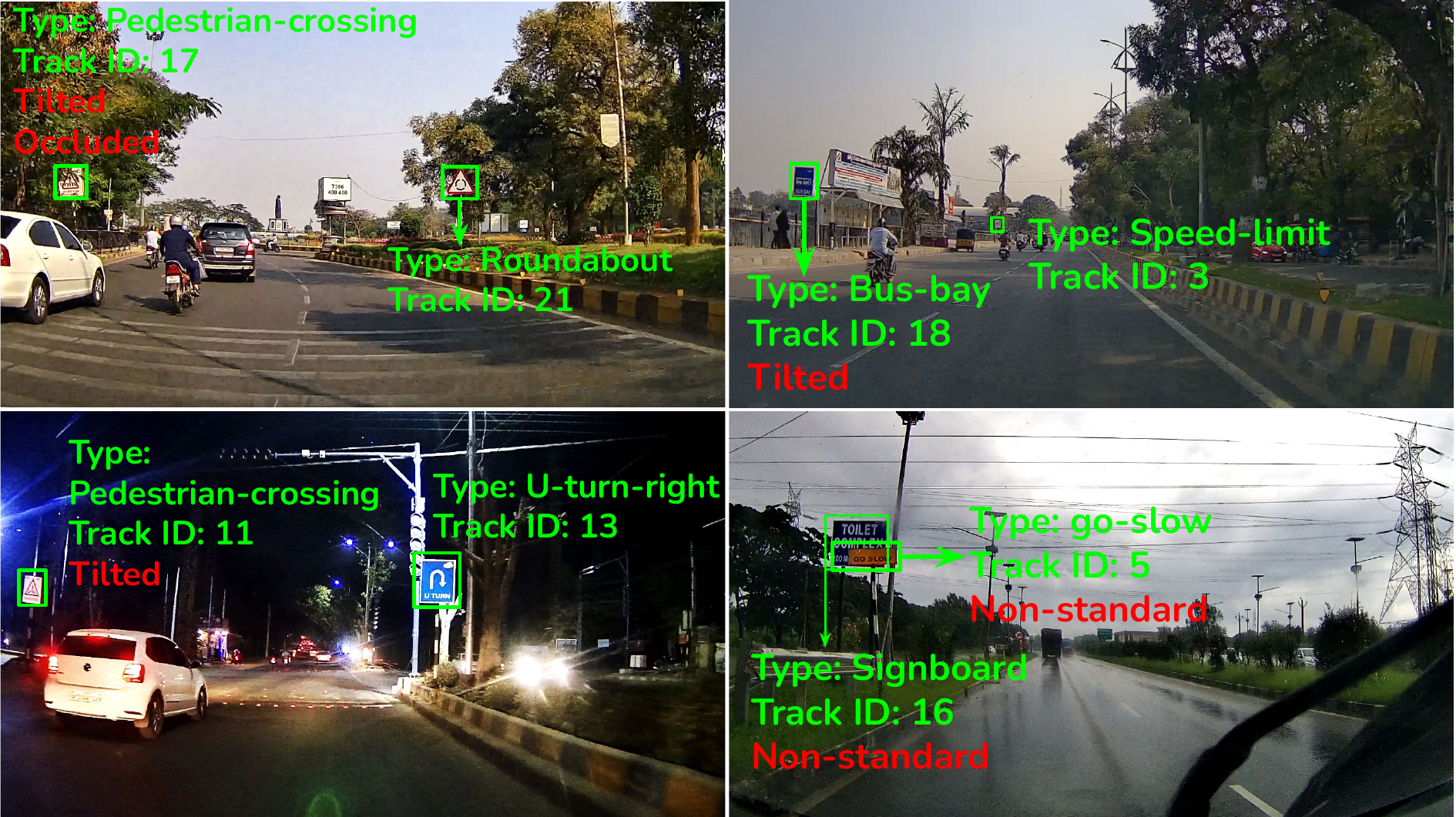}
    \caption{Examples of frame-level traffic sign annotations in the MTSVD \cite{gupta2023cuecan}. Each frame includes bounding-box tracks for traffic signs, annotated with type, track ID, and visual attributes such as tilted, occluded, and non-standard. The dataset captures a variety of sign types across diverse lighting and weather conditions, contributing to its robustness for real-world scene understanding and detection tasks.}
    \label{fig:enter-label}
\end{figure}

\begin{figure}
    \centering
    \includegraphics[width=0.99\linewidth]{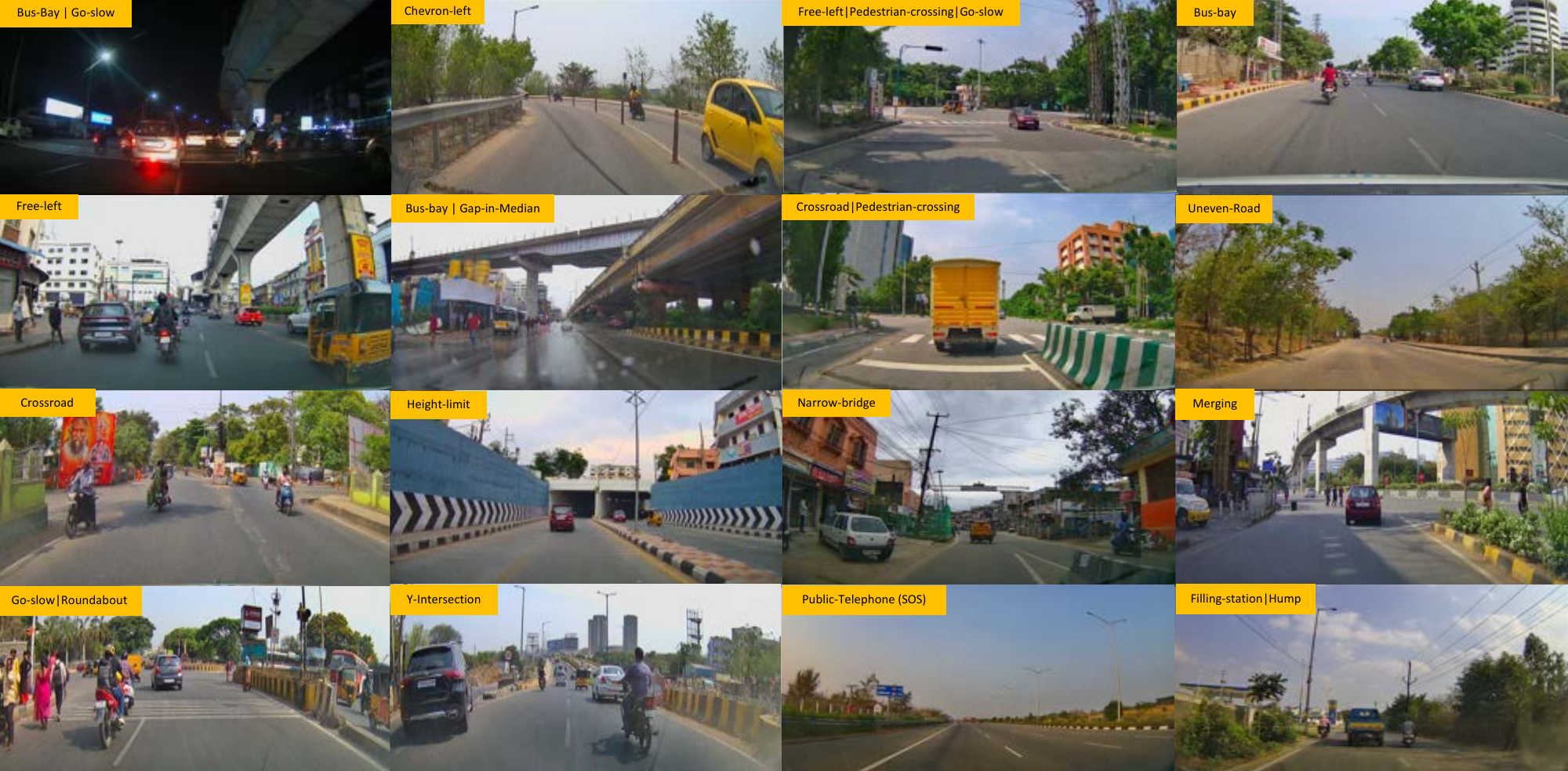}
    \caption{Illustrative examples of contextual scene categories from the MTSVD, where the associated traffic sign is missing. These scenes demonstrate a variety of real-world road environments such as pedestrian crossings, bus bays, sharp curves, and merging roads, each containing consistent contextual cues despite the absence of the expected traffic sign. Such context-driven patterns are critical for learning-based models to infer missing traffic signage and enable effective scene categorization \cite{gupta2023cuecan}.}
    \label{fig:context_variety}
\end{figure}

\section{Challenge: Missing Traffic Signs}

Traffic signs are fundamental components of road infrastructure, critical in ensuring road safety. However, traffic signs are often missing or obscured in real-world driving conditions due to environmental factors, vandalism, or poor maintenance. To address this issue, the Missing Traffic Signs Video Dataset (MTSVD) was introduced, comprising $200$ curated scenes that depict the absence of specific traffic signs. These scenes are categorized into four classes based on the missing traffic sign: left-hand curve, right-hand curve, gap-in-median, and side-road-left. Each class is defined by its distinctive contextual cues observed in the scene.

In addition to the $200$ testing scenes, the dataset includes $2000$ training images, each containing one of the four traffic signs annotated with bounding boxes. The challenge is structured around two primary tasks:
(i) Object Detection, where models are trained using bounding-box annotations to detect the presence and location of traffic signs; and
(ii) Missing Traffic Sign Scene Categorization, where models are trained on road scene images with traffic signs to categorize scenes based on which sign is missing.

This chapter presents a detailed overview of the MTSVD dataset, the baseline approaches, and the methods adopted by the top two performing teams, offering valuable insights into current methodologies for addressing the problem of missing traffic signs using computer vision.


\begin{figure}[!ht]
   \centering
   \includegraphics[width=\linewidth]{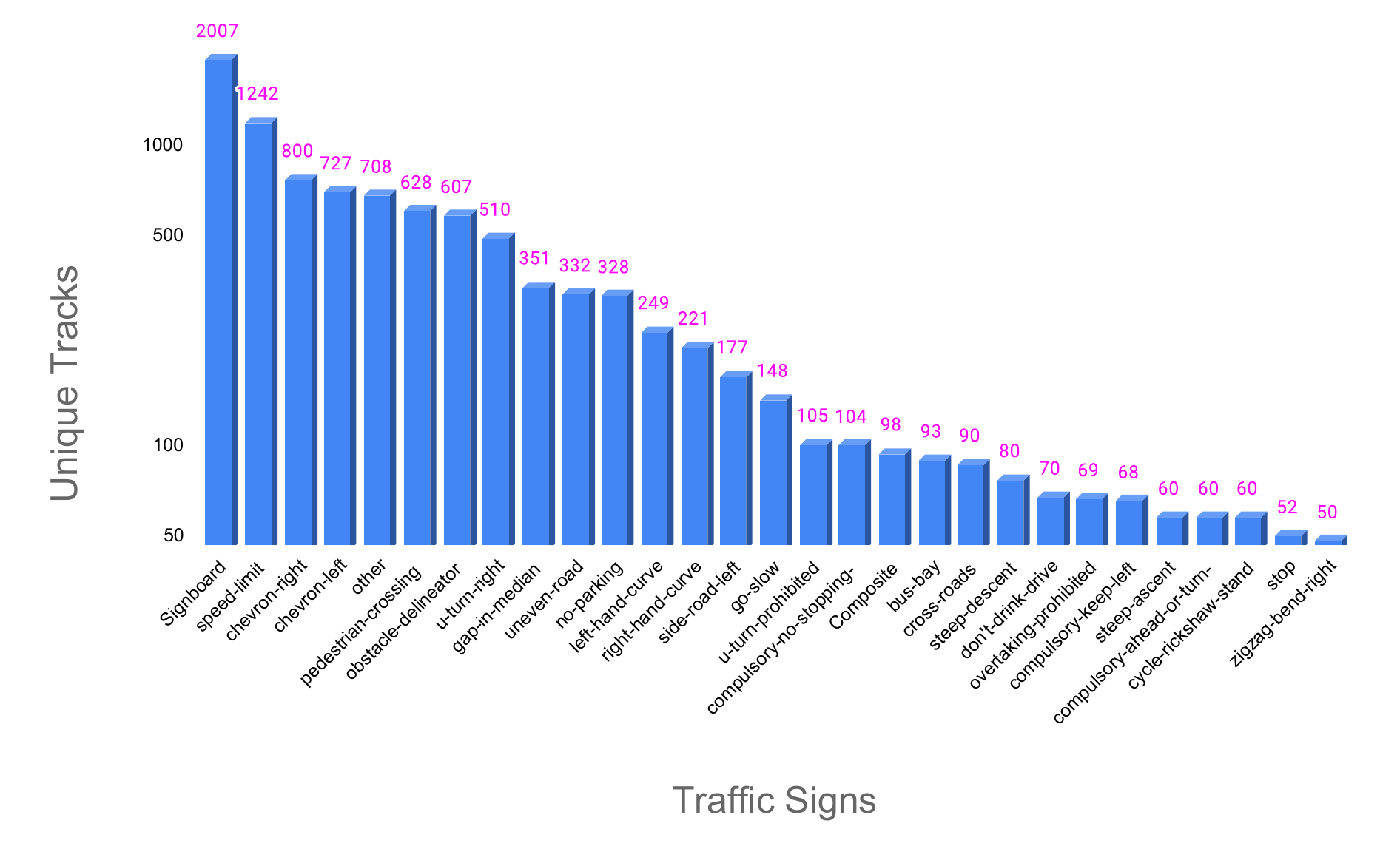}
   \caption{Distribution of the top 30 most frequent traffic signs in the MTSVD dataset, represented by the number of unique annotated tracks. Statistics of traffic sign annotations in the Missing Traffic Signs Video Dataset (MTSVD). These statistics highlight the imbalance in traffic sign presence and absence within the dataset, reflecting real-world scenarios where certain signs are more commonly encountered or missed. Annotated and Missing traffic signs in the MTSVD \cite{gupta2023cuecan}.}
   \label{fig:30sub_a} 
\end{figure}
    
\begin{figure}[!ht]
   \centering
   \includegraphics[width=\linewidth]{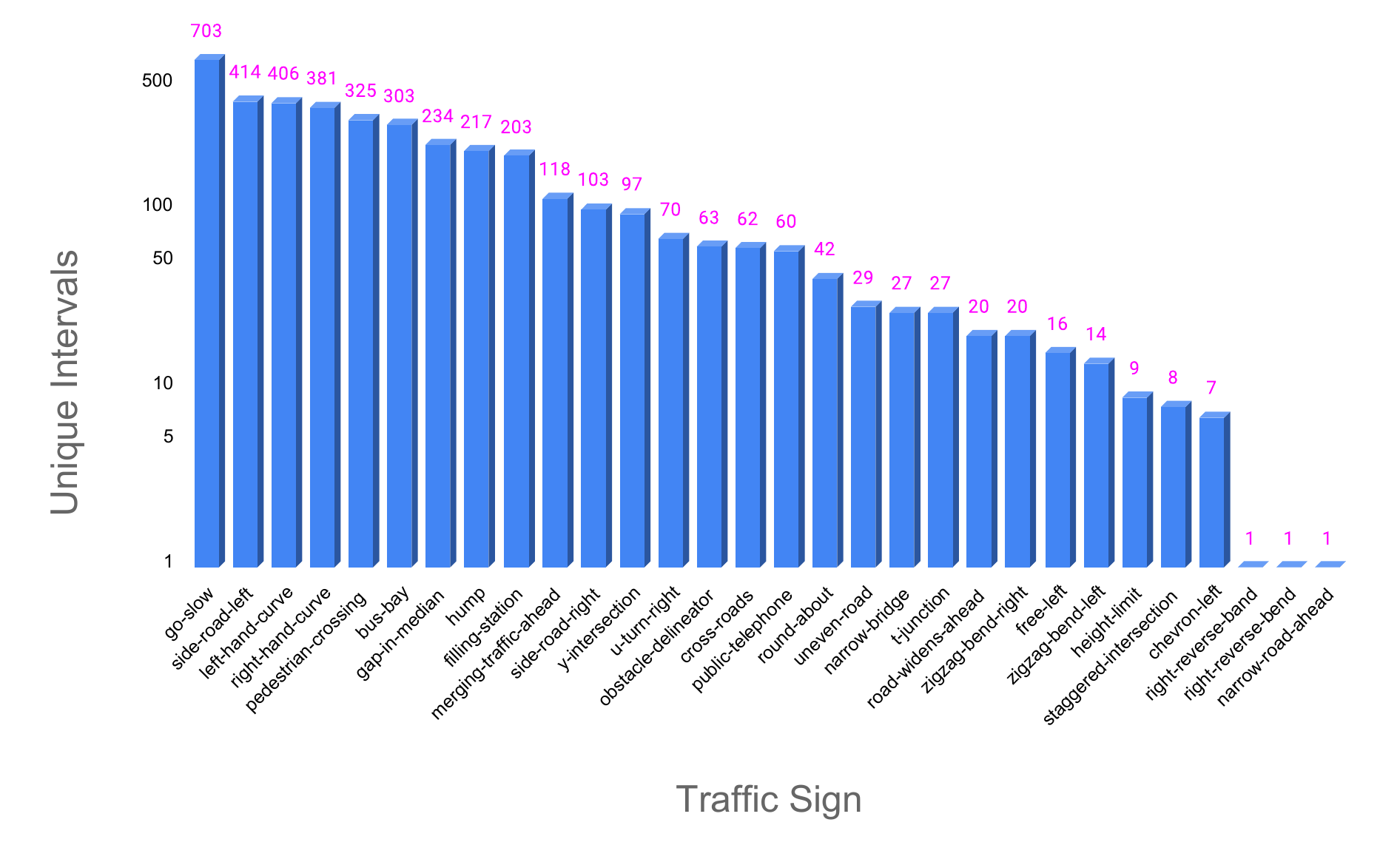}
   \caption{Distribution of missing traffic sign categories in MTSVD, annotated as unique frame intervals where the corresponding sign is contextually absent. Statistics of traffic sign annotations in the Missing Traffic Signs Video Dataset (MTSVD). These statistics highlight the imbalance in traffic sign presence and absence within the dataset, reflecting real-world scenarios where certain signs are more commonly encountered or missed. Annotated and Missing traffic signs in the MTSVD \cite{gupta2023cuecan}. }
   \label{fig:30_sub2}
\end{figure}
    

\section{Method}
To fine-tune the classification and detection model, we applied the following data augmentation techniques from \href{https://docs.roboflow.com/datasets/image-augmentation}{Roboflow}: random rotation, mosaic data augmentation, horizontal flip, vertical flip, and generated $8000$ extra samples from the task-1 dataset, resulting in a total of $10000$ instances. It is observed that adding Dropout also improved the results. The small footprint of the traffic sign in the images suggests that this is a fine-grained detection task. Given the relatively limited samples in the dataset, small-to-medium models are considered to avoid possible overfitting.

\paragraph{Task-1: Object Detection} The task-1 dataset is split into train and validation, with an image size of $720\times720$. Existing SoTA models: YOLOv8n, YOLOv8s, YOLOv8m, and YOLO8l are fine-tuned using pre-trained weights, with the YOLOv8s model being optimal. The models are trained on the NVIDIA A100 for $80$ epochs with the following hyperparameters: $1e-4$ learning rate, $2e-4$ weight decay, $0.982$ momentum, $0.25$ Dropout. All these hyperparameters were determined empirically through hyperparameter tuning on a validation set to achieve the best balance between training performance and generalization, effectively mitigating overfitting.

\paragraph{Task-2: Traffic Sign Scene Categorization}  Train a model wherein model training happens using road scene images with in-painted traffic signs spread over four traffic sign categories: \textit{left-hand-curve}, \textit{right-hand-curve}, \textit{gap-in-median}, and \textit{side-road-left} for scene categorization. The evaluation for task 2 is done by comparing the predicted labels submitted by participants to the test labels of the $200$ test images, with top-1 accuracy as the chosen metric to evaluate the models. The test set contains a proportionate combination of real and in-painted images to ensure that the model doesn't favor the in-painting artifacts and performs poorly on real scenes. \citep{gupta2023c4mts}

\section{Results}

The quantitative results for both tasks in the challenge are summarized in Table \ref{tab:results_c4mts}. In Task 1: Object Detection, the method developed by IAMGROOT (our) achieved the highest mean Average Precision (mAP) of $0.90$, outperforming the baseline ($ mAP = 0.88$) by a margin of $2.27\%$. In contrast, Sahajeevi’s method recorded an mAP of $0.84$, falling short of the baseline by $4.76\%$. This underperformance can be partly attributed to differences in training durations; as noted in Section 5 of Gupta et al. \cite{gupta2023c4mts}, the baseline model was trained for approximately 40 more epochs than Sahajeevi’s submission.

In Task 2: Missing Traffic Sign Scene Categorization, both participating teams demonstrated substantial improvements over the baseline. Table \ref{tab:results_c4mts}, while the baseline achieved a Top-1 Accuracy of only $0.290$, Sahajeevi’s method reached $0.514$, and IAMGROOT surpassed all with an accuracy of $0.605$, indicating a $108\%$ relative improvement over the baseline. These results highlight the effectiveness of tailored approaches for contextual scene understanding in the missing sign detection task, especially under weak supervision. IAMGROOT’s consistent top performance across both tasks underscores the robustness and generalization of their method in both detection and scene categorization settings.

\begin{table}[ht]
    \centering
    \caption{Task 1 and Task 2 results: The results demonstrate the effectiveness of our method (IAMGROOT) in both tasks of the missing traffic sign challenge. For Task 1 (Object Detection), our approach achieved the highest mean Average Precision (mAP) of $0.90$, marginally surpassing the baseline and significantly outperforming the other participants. For Task 2 (Scene Categorization), our method yielded a Top-1 Accuracy of $0.605$, more than doubling the performance of the baseline ($0.290$), indicating strong contextual understanding in the presence of weak supervision. These results underscore the robustness and generalizability of our proposed pipeline across both detection and scene-level reasoning tasks, setting a new benchmark for missing traffic sign detection using contextual cues.}
    \label{tab:results_c4mts}
    \begin{tabular}{lcc}
        \toprule
         \textbf{Method} & \textbf{mAP} (Task-1) & \textbf{Top-1 Accuracy} (Task-2)\\
         \midrule
            Baseline & $0.88$ & $0.290$\\
            Sahajeevi & $0.84$ & $0.514$ \\
            IAMGROOT (our) & $\textcolor{blue}{0.90}$ & $\textcolor{blue}{0.605}$\\
        \bottomrule
    \end{tabular}
\end{table}

\section{Discussion} 

In this chapter, we presented a comprehensive study on detecting and categorizing missing traffic signs using the Missing Traffic Signs Video Dataset (MTSVD). This is a critical yet underexplored problem with direct implications for autonomous driving systems and road safety. The chapter introduced a unique challenge that encapsulates two key tasks: (i) object detection using annotated traffic sign data, and (ii) scene categorization for identifying which traffic sign is contextually missing.

Our proposed method, IAMGROOT, outperformed the provided baselines and other participating teams across both tasks. For Task 1, IAMGROOT achieved a state-of-the-art mean Average Precision (mAP) of $0.90$, reflecting its superior ability to localize traffic signs under limited supervision and imbalanced conditions. For Task 2, which demanded contextual reasoning in the absence of explicit annotations, IAMGROOT again led with a Top-1 Accuracy of $0.605$, outperforming the baseline by over $108\%$, highlighting the effectiveness of our fine-tuned models and augmentation strategies.

The results suggest that (a) appropriate data augmentation and model regularization techniques (e.g., Dropout) are critical for small object detection, and (b) leveraging in-painted data for contextual learning can significantly enhance scene-level understanding. These findings underscore the importance of designing robust models that generalize well in weakly supervised and imbalanced data settings—realistic conditions for many real-world V systems.

This work validates the MTSVD dataset's value and sets a strong benchmark for future research in missing sign detection and contextual scene analysis. In the future, incorporating multi-modal inputs (e.g., LiDAR, temporal cues), improving generalization to unseen sign categories, and exploring self-supervised learning approaches offer promising directions for enhancing autonomous scene understanding.

 \part{Conclusion \& Summry}
 \chapter{Summary and Future Directions}\label{chap:summary}

\section{Summary}

This thesis proposal has significantly contributed to both the theoretical foundations of generative models and their practical applications in computer vision. Our work spans two primary directions: advancing the efficiency of normalizing flow models and applying generative modeling techniques to solve real-world problems. In the domain of flow-based models, we introduced CInC Flow and Inverse-Flow, novel architectures that significantly advance the state-of-the-art through several key innovations. These include the development of an efficient backpropagation algorithm for inverse convolution optimized for parallel operations, implementing a multi-scale architecture that substantially accelerates sampling in normalizing flow models, and creating a CUDA-based GPU implementation that enhances computational performance. Comprehensive experimental validation has demonstrated improved training capability and sampling speed compared to existing approaches.

Our applications of generative models to computer vision problems have yielded significant results across multiple domains. In geological mapping, we demonstrate the defectiveness of stacked autoencoders combined with k-means clustering, achieving superior feature extraction compared to traditional methods. Framework's integration with Sentinel-2 data yielded the highest spatial resolution and accuracy, particularly in the Mutawintji region of NSW, Australia.

For image super-resolution, we developed Affine-StableSR, which successfully addresses computational and model size challenges through the integration of affine-coupling layers in encoder-decoder architectures, efficient utilization of pre-trained weights from stable diffusion models, and introduction of a diverse four-in-one dataset for robust training.

We advanced the field in art restoration by adapting diffusion models for comprehensive image restoration. This involved developing systematic fine-tuning processes, creating a taxonomy of ten distinct distortion types, and conducting comparative assessments of existing methodologies across multiple restoration tasks. Our approach demonstrated superior performance in handling various degradation through unified fine-tuning, making it particularly valuable for cultural heritage preservation.

Our work, the Pvt\_IDD dataset, establishes a practical framework for privacy-preserving visual learning, enabling effective model training while protecting sensitive identity information.

\section{Future Work}
Looking ahead, several promising research directions emerge from this work. We plan to explore enhanced convolutional inversion models, particularly for blind deconvolution, and expand privacy-preserving capabilities for autonomous driving applications. Developing more robust face and license plate detection systems and extending our methods to broader stable diffusion applications remain the same. In the domain of flow-based models, we introduced CInC Flow and Inverse-Flow fusion model-based image restoration, focusing on specialized datasets for distinct degradation types.

This work establishes a strong foundation for future advancements in efficient generative modeling and its applications, contributing significantly to computer vision, geological mapping, image enhancement, and art restoration. Frameworks and methodologies developed here offer promising pathways for continued innovation in these domains, particularly in addressing real-world challenges where computational efficiency and accuracy are paramount.


 \addcontentsline{toc}{chapter}{Publications}
 \thispagestyle{plain}
\begin{center}
    \Large \textbf{\uppercase{Publications}}
\end{center}

\vspace{2\baselineskip}

\justifying

\section*{Publications Towards Thesis}
\subsection*{Journals:}

\begin{enumerate}
    \item \textbf{Nagar, S.}, Farahbakhsh, E., Awange, J., Chandra, R.: Remote sensing framework for geological mapping via stacked autoencoders and clustering. Journal of Advances in Space Research - Volume 74: pages 4502-4516, 2024. ISSN 0273-1177. doi: 10.1016/j.asr.2024.09.013. \textcolor{blue}{COSPAR Outstanding Paper Award for Young Scientists} \href{https://github.com/Naagar/stackedAE4Geo}{code}
\end{enumerate}

\subsection*{Conferences:}

\begin{enumerate}

    \item \textbf{Nagar, S.}, and Varma, G: "Parallel Backpropagation for Inverse of a Convolution with Application to Normalizing Flows." Proceedings of The 28th International Conference on Artificial Intelligence and Statistics (AISTATS), vol. 239, Proceedings of Machine Learning Research, PMLR, 2025, pp. 03-05 May. \href{https://naagar.github.io/InverseFlow/}{, code}

    \item \textbf{Nagar, S.}, Alam, M., Beiser, D. G., Hasegawa-Johnson, M., Ahuja, N.: R2I-rPPG: A Robust Region of Interest Selection for Remote Photoplethysmography to Extract Heart Rate. \textbf{Under review}, arxiv:2410.15851 \href{https://github.com/Naagar/R2I-rPPG}{, code}

    \item \textbf{Nagar, S.}, Dufraisse, M. and Varma, G.: CInc flow: Characterizable invertible 3 × 3 convolution. In The 4th Workshop on Tractable Probabilistic Modeling, Uncertainty in Artificial Intelligence (UAI 2021). \emph{Oral} \href{https://naagar.github.io/CInCFlow/}{, code}

    \item \textbf{ Nagar, S.}, Bala, A., Patnaik, S. A.: Adaptation of the super-resolution SOTA for art restoration in-camera captures images. In International Conference on Emerging Techniques in Computational Intelligence (ICETCI 2023) (pp. 158-163). IEEE. \emph{Oral} \href{https://github.com/Naagar/art_restoration_DM}{, code}

    \item  \textbf{Nagar, S.}, Pani, P., Nair, R., Varma, G.: Automated Seed Quality Testing System Using GAN and Active Learning. In International Conference on Pattern Recognition and Machine Intelligence, 2023 (pp. 509-519), Springer International Publishing. \emph{Oral} \href{https://naagar.github.io/cornseedsdataset/}{, code}

    \item Gupta, V., \textbf{Nagar, S.}, Choudhury, S.P., Singh, R., Jamwal, A., Gupta, V., Subramanian, A., Jawahar, C.V. and Saluja, R.: C4MTS: Challenge on Categorizing Missing Traffic Signs from Contextual Cues. In National Conference on Computer Vision, Pattern Recognition, Image Processing, and Graphics (pp. 141-154). Singapore: Springer Nature Singapore, 2023. \href{https://github.com/Naagar/Missing_Traffic_Sign}{, code}

    \item \textbf{Nagar, S.}, Puttagunta, S., Varma, G.: Privacy in Indian Driving Dataset using Inpainting. \textbf{In preparation} \href{https://github.com/Naagar}{, code}

    \item \textbf{Nagar, S.}, Dendi, S. V. R., Nair, P., Gadde, R. N.: Affine-StableSR: Coupling Layers based Lightweight Autoencoder and Stable Diffusion for Super-Resolution. \textbf{Preprint} \href{https://github.com/Naagar}{, code}
    
\end{enumerate}
\section*{Additional Publications}
\begin{enumerate}

    \item Ray, A., \textbf{Nagar, S.}, Varma, G., Paul, A.: Unlocking simple features to predict DFT spin state gaps of 3D metal complexes using Machine Learning. Accepted: Journal of Physical Chemistry Chemical Physics (PCCP) 2025. \href{https://github.com/Naagar}{, code.}
    
    \item Kallappa. A., \textbf{Nagar. S.}, and Varma. G.: Finc flow: Fast and invertible k × k convolutions for normalizing flows. Proceedings of the 18th International Joint Conference on Computer Vision, Imaging and Computer Graphics Theory and Applications (VISIGRAPP 2023) - Volume 5: VISAPP, pages 338–348, 2023. ISSN 2184-4321. doi: 10.5220/0011876600003417. \emph{Oral} \href{https://naagar.github.io/FInC-Flow/}{, code.}
    
    \item Shaik, F.A., \textbf{Nagar. S.}, Maturi, A., Sankhla, H.K., Ghosh, D., Majumdar, A., Vidapanakal, S., Chaudhary, K., Manchanda, S. and Varma, G., 2024, November. ICPR 2024 Competition on Safe Segmentation of Drive Scenes in Unstructured Traffic and Adverse Weather Conditions. In International Conference on Pattern Recognition (pp. 145-160). Cham: Springer Nature Switzerland \emph{, Oral}  \href{https://github.com/Furqan7007/IDDAW_kit}{, code.}
\end{enumerate}

    

 \addcontentsline{toc}{chapter}{Bibliography}
 \bibliographystyle{unsrt}
 \bibliography{references}
 
\end{document}